\documentclass{article}

\usepackage{microtype}
\usepackage{graphicx}
\usepackage{subcaption}
\usepackage{booktabs} 
\usepackage{hyperref}

\usepackage[accepted]{icml2026}

\usepackage{amsmath}
\usepackage{amssymb}
\usepackage{mathtools}
\usepackage{amsthm}

\usepackage[capitalize,noabbrev]{cleveref}

\usepackage[table]{xcolor}
\usepackage{url}
\usepackage{titlesec}
\usepackage{hyperref}
\hypersetup{
    colorlinks,
    citecolor=blue,
    filecolor=blue,
    linkcolor=blue,
    urlcolor=blue
}

\usepackage{multirow} 
 \usepackage{bbm}
\usepackage[utf8]{inputenc} 
\usepackage[T1]{fontenc}    
\usepackage[table]{xcolor}
\usepackage{amsfonts}         
\usepackage{mathtools} 
\usepackage{float}
\usepackage{caption}
\usepackage{float} 
\usepackage{amsfonts}       
\usepackage{nicefrac}       
\usepackage{xcolor}         
\usepackage{amsfonts}
\usepackage{amssymb}
\usepackage{multicol}
\usepackage{titletoc}
\usepackage{enumitem} 
\usepackage[font=small]{caption} 
\usepackage{wrapfig}
\usepackage{float}
\usepackage{array}
\usepackage{arydshln}

\definecolor{ultralight}{HTML}{FAFAFC}   
\definecolor{rowsoft}{HTML}{F2F3F7}      
\definecolor{headerdeep}{HTML}{283593}   
\definecolor{headertext}{HTML}{FFFFFF}   

\definecolor{rulegray}{HTML}{D9DAE2}

\rowcolors{3}{rowsoft}{white}

\AtBeginEnvironment{tabular}{
    \centering
    \color{black}
}

\definecolor{LightGray}{gray}{0.9}

\theoremstyle{plain}
\newtheorem{theorem}{Theorem}[section]
\newtheorem{proposition}[theorem]{Proposition}
\newtheorem{lemma}[theorem]{Lemma}
\newtheorem{corollary}[theorem]{Corollary}
\theoremstyle{definition}
\newtheorem{definition}[theorem]{Definition}
\newtheorem{assumption}[theorem]{Assumption}
\theoremstyle{remark}
\newtheorem{remark}[theorem]{Remark}

\usepackage[textsize=tiny]{todonotes}

\icmltitlerunning{Continual Segmentation under Joint Nonstationarity}

\begin{document}

\twocolumn[
  \icmltitle{Continual Segmentation under Joint Nonstationarity}



  \icmlsetsymbol{equal}{*}

  \begin{icmlauthorlist}
    \icmlauthor{Prashant Pandey}{elec}
    \icmlauthor{Himanshu Kumar}{compiitd}
    \icmlauthor{Devineni Sri Venkatraya Chowdary}{compism}
    \icmlauthor{Brejesh Lall}{elec}
  \end{icmlauthorlist}

  \icmlaffiliation{elec}{Bharti School of Telecommunications Technology \& Management, Indian Institute of Technology, Delhi, India}
\icmlaffiliation{compiitd}{Yardi School of Artificial Intelligence, Indian Institute of Technology, Delhi, India}
  \icmlaffiliation{compism}{Department of Computer Science and Engineering, Indian Institute of Technology (Indian School of Mines), Dhanbad, India}

  \icmlcorrespondingauthor{Prashant Pandey}{bsz178495@iitd.ac.in }

  \icmlkeywords{Machine Learning, ICML}

  \vskip 0.3in
]



\printAffiliationsAndNotice{}  

\begin{abstract}
Evolving data streams induce \emph{joint nonstationarity} in continual semantic segmentation, where semantic classes, input distributions, and supervision availability change simultaneously over time. This setting reflects practical structured prediction systems, yet remains largely unexplored in prior continual learning work, which typically studies these factors in isolation. We formalize continual segmentation under coupled class, domain, and label shifts and investigate learning in heterogeneous dense prediction environments with limited annotations and abundant unlabeled data. To address instability and overfitting arising from few-shot supervision under distribution drift, we introduce \emph{gradient-adaptive stabilization}, a parameter-wise regularization mechanism implemented via gradient-scaled stochastic perturbations that promotes a principled stability–plasticity tradeoff. We further leverage unlabeled data through semi-supervised learning and introduce \emph{prototype anchored supervision} that validates pseudo-labels via joint confidence and prototype consistency. Together, these mechanisms enable learning under joint nonstationarity in continual segmentation. Extensive empirical evaluation across class-incremental, domain-incremental, and few-shot regimes demonstrates consistent improvements over prior methods in heterogeneous structured prediction settings. Our results expose fundamental failure modes of existing continual segmentation approaches and provide insight into learning robust dense predictors in dynamically evolving environments. Our code is available at https://github.com/prinshul/JASCL.git.
\end{abstract}
\section{Introduction}


\begin{figure}
\centering
\includegraphics[width=0.4\textwidth]{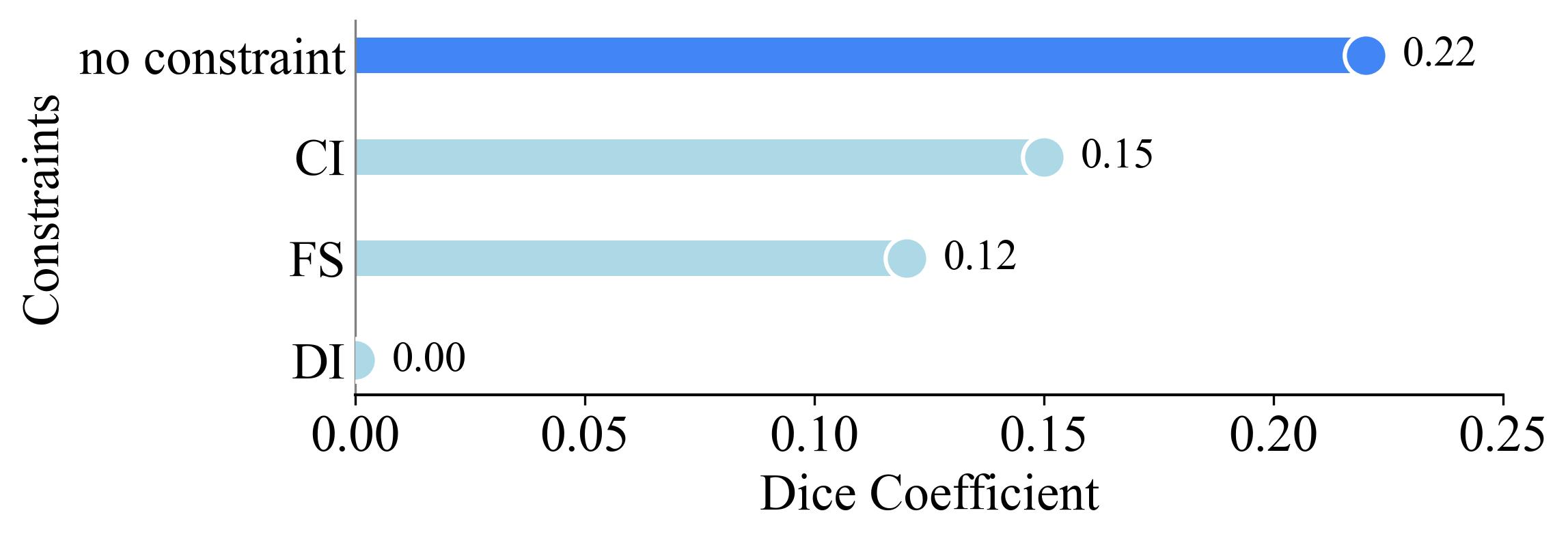}
\includegraphics[width=0.4\textwidth]{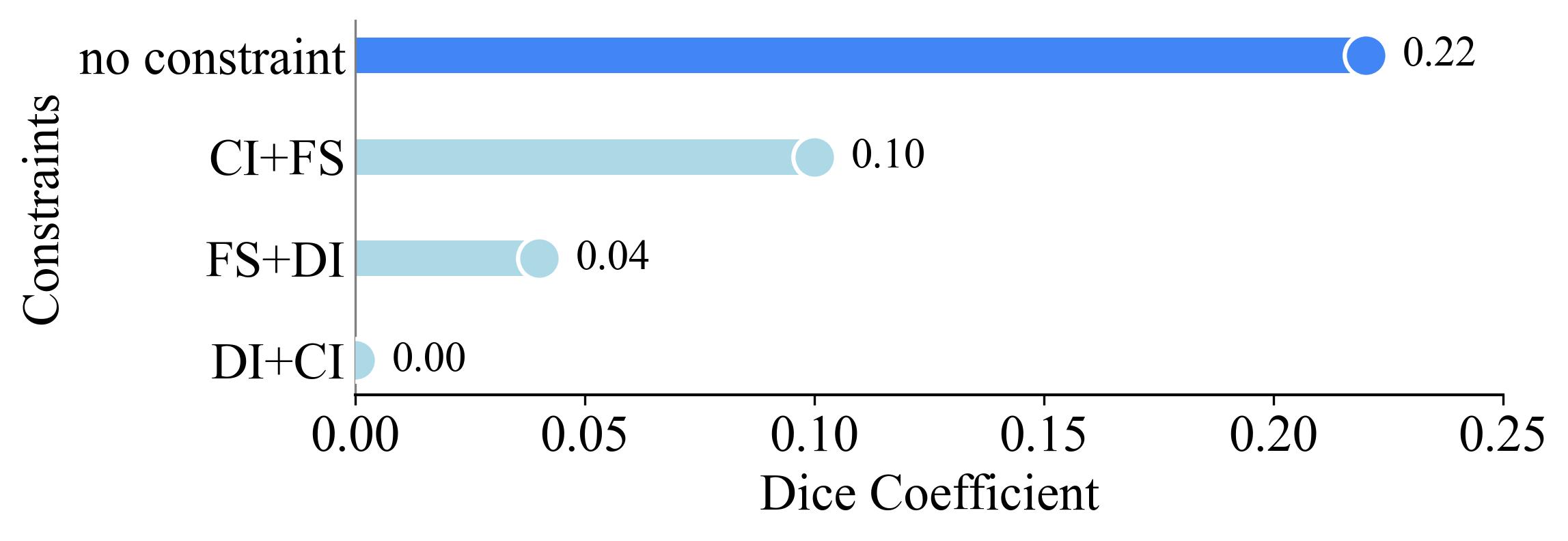}
\caption{Class-incremental (CI), few-shot (FS), and domain-incremental (DI) constraints all lead to significantly reduced Dice scores \textcolor{black}{compared to the unconstrained fine-tuning (“no constraint”) on the common base model.}}
\label{abl_beg}
\vspace{-8mm}
\end{figure}

The pursuit of adaptive intelligent systems requires learning from data streams whose underlying distributions evolve over time. While continual learning (CL) \cite{wang2024comprehensive, yuan2024survey} has made progress across a range of settings, a critical gap remains in addressing realistic dense prediction problems such as semantic segmentation, where multiple sources of nonstationarity arise simultaneously. In applications including autonomous driving and medical image analysis, segmentation models encounter sequential data characterized by both evolving semantic label spaces (class-incremental learning, CIL \cite{zhou2024class}) and shifting input distributions (domain-incremental learning, DIL \cite{mirza2022efficient}), often under severely limited supervision.

We consider a sequence of learning sessions $\{\mathcal{S}_t\}_{t=0}^{T}$, where each session is associated with a joint distribution $\mathcal{D}_t(x,y)$ over inputs $x$ and pixel-wise labels $y$, a class set $\mathcal{C}_t$, and a labeled sample budget ${N}_t\!\ll\!{N}_0$ for $t>0$. In realistic settings, these quantities evolve jointly, with $\mathcal{D}_t \neq \mathcal{D}_{t-1}$, $\mathcal{C}_t \neq \mathcal{C}_{t-1}$, and ${N}_t\!\ll\!{N}_0$, giving rise to \underline{\emph{joint nonstationarity}}. Existing CL formulations typically isolate these factors, studying class increments, domain shifts, or data scarcity independently, leaving their coupled interaction largely unexplored.

This discrepancy between idealized CL scenarios and practical structured prediction manifests as severe performance degradation. In the presence of joint shifts, models suffer from catastrophic forgetting, unstable adaptation, and rapid overfitting to few-shot supervision. Figure~\ref{abl_beg} shows that unconstrained fine-tuning achieves the strongest performance, whereas class-incremental (CI), few-shot (FS), and domain-incremental (DI) constraints each induce substantial accuracy drops. These constraints stress models in distinct ways: CI alters decision boundaries, FS produces noisy gradients and overfitting, and DI distorts feature representations, leading to large degradations relative to the unconstrained baseline. Figure~\ref{abl_beg_frozen} further shows that naive fine-tuning is unstable across architectures: freezing most layers inhibits learning new concepts, while fully unfreezing disrupts previously acquired representations. These observations highlight that effective continual segmentation requires preserving earlier structure while enabling controlled adaptation under joint nonstationarity.

Few-shot learning \cite{tao2020few, qiu2023gaps, TIAN2024307} and semi-supervised learning \cite{Kang_2023_ICCV, 10138923} offer partial remedies by exploiting limited labels and abundant unlabeled data. However, under joint nonstationarity, pseudo-labeling becomes unreliable: initial model biases propagate across sessions, progressively degrading learning. Recent studies and surveys \cite{10840313, 10643308, xu2024privacy, pmlr-v274-kwak25a, zhu2026class, 11106721} emphasize the necessity of CL formulations that jointly account for class evolution, domain drift, and label scarcity, yet principled solutions for dense prediction remain scarce.

Motivated by this gap, we study continual semantic segmentation under coupled class, domain, and supervision shifts. Each incremental session introduces novelty, new classes, new domains, or both, while previously observed class and domain pairs do not recur. Learning proceeds with few labeled samples per session and access to unlabeled data drawn from the current distribution, capturing realistic nonstationary structured prediction.

Limited supervision under distribution drift destabilizes optimization and accelerates overfitting. To mitigate this, we introduce \underline{\emph{gradient-adaptive stabilization} (GAS)}, a parameter-wise regularization mechanism implemented via gradient-scaled stochastic perturbations that explicitly balances stability and plasticity during adaptation. We further introduce \underline{\emph{prototype anchored supervision} (PAS)} in a semi-supervised setting, where class prototypes learned across sessions are used to validate pseudo-labels through joint confidence and feature-space consistency, suppressing error propagation under domain shift and improving knowledge retention over time. Extensive empirical evaluation across class-incremental, domain-incremental, few-shot, and semi-supervised regimes demonstrates consistent gains over prior approaches. 
\begin{figure}
\centering
\includegraphics[width=0.4\textwidth]{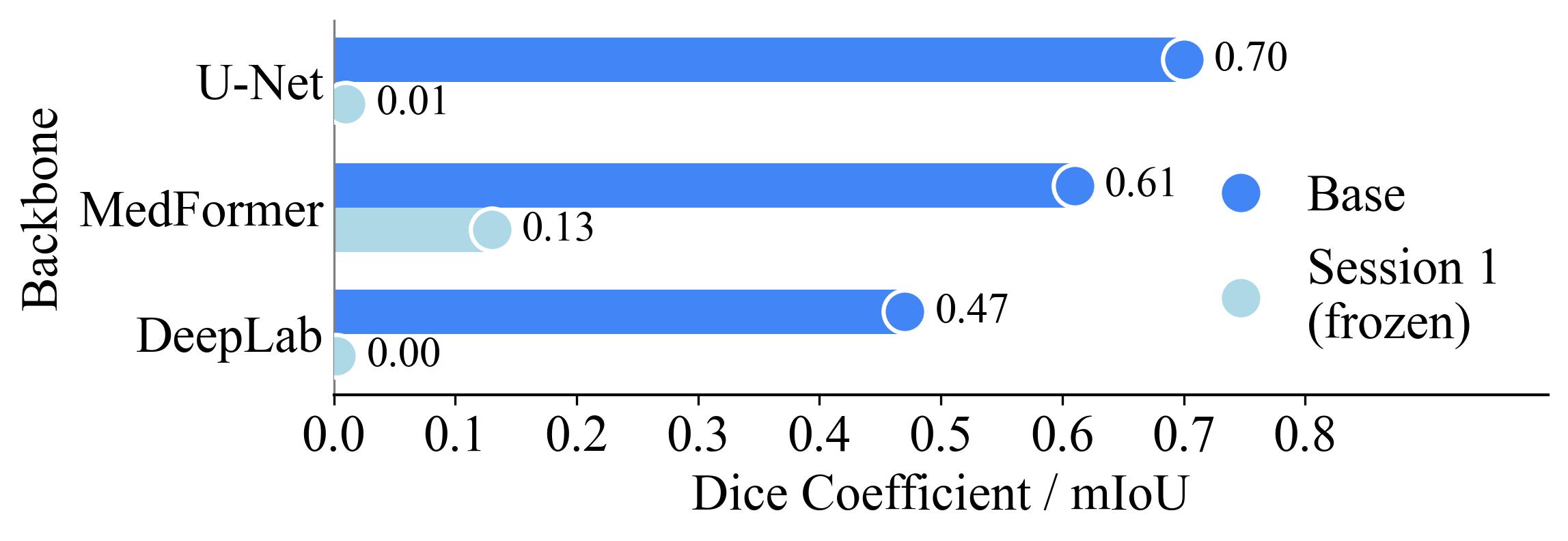}
\includegraphics[width=0.4\textwidth]{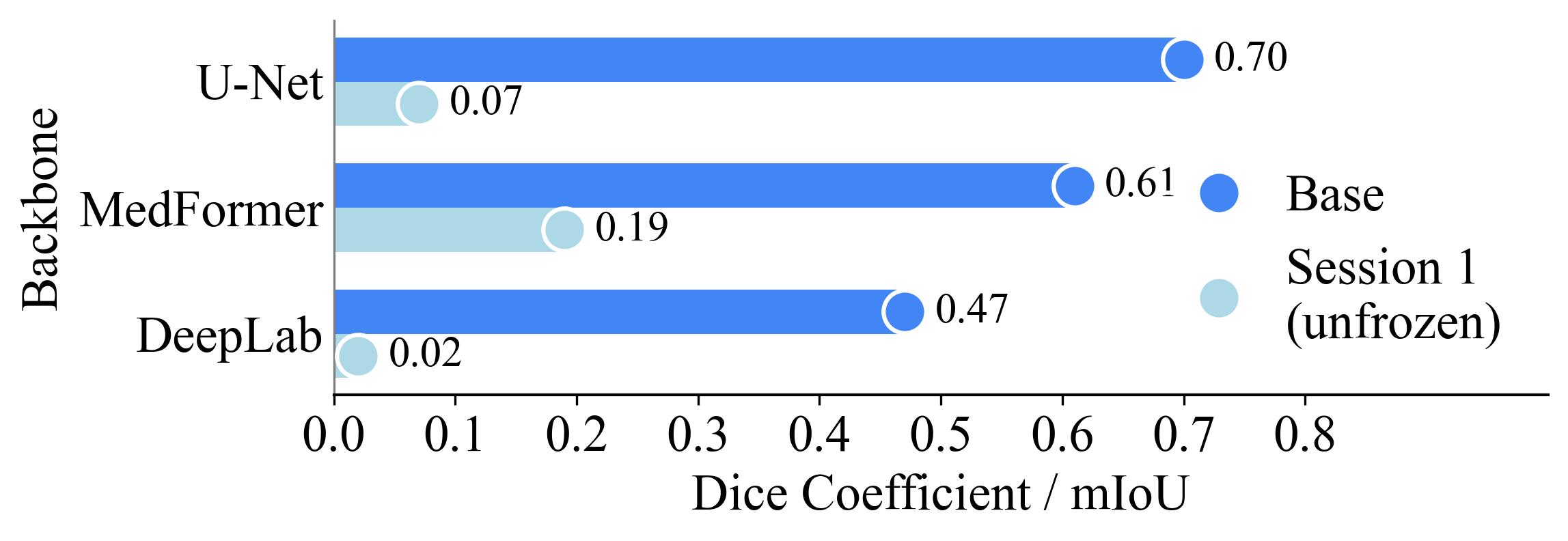}
\caption{\textcolor{black}{Fine-tuning the Session 0 base model in incremental Session 1 causes all backbones to drop in performance, whether their weights are partially frozen (top) or fully unfrozen (bottom).}}
\label{abl_beg_frozen}
\vspace{-8mm}
\end{figure}

\underline{Our key contributions} to continual semantic segmentation under joint nonstationarity are fourfold:
\textbf{(i)} We formalize a realistic continual segmentation setting with coupled class, domain, and supervision shifts, exposing fundamental failure modes of existing continual learning methods under joint nonstationarity.
\textbf{(ii)} We propose \emph{gradient-adaptive stabilization} (GAS), a parameter-wise regularization mechanism that balances stability and plasticity via gradient-scaled stochastic perturbations, enabling robust adaptation under distribution drift.
\textbf{(iii)} We introduce \emph{prototype anchored supervision} (PAS), which validates pseudo-labels through joint confidence and prototype consistency, mitigating error propagation and improving learning from unlabeled data.
\textbf{(iv)} Extensive empirical evaluation demonstrates consistent improvements across class-incremental, domain-incremental, and few-shot regimes, with strong generalization across heterogeneous architectures including 3D U-Net \cite{ronneberger2015u}, DeepLabv3+ \cite{10.1007/978-3-030-01234-2_49}, and Transformer/MedFormer \cite{gao2022datascalable}, surpassing large-scale pretrained models such as SAM \cite{kirillov2023segment, kerssies2024benchmark}.

%

\section{Related Work}

\textbf{Continual Semantic Segmentation under Distribution Shift.}
Early class-incremental segmentation methods addressed background shift via distillation and classifier initialization (MiB~\cite{cermelli2020modeling}), while subsequent approaches incorporated auxiliary supervision or external priors, e.g., CLIP-CT~\cite{10.1007/978-3-031-43895-0_4} with pseudo-labeling and CLIP-guided heads. Domain-incremental learning has been studied through architectural factorization, with MDIL~\cite{9706918} decomposing parameters into domain-invariant and domain-specific components. Regularization-based strategies include uncertainty-aware penalties (UCL~\cite{ahn2019uncertainty}), flat-minima optimization (C-Flat~\cite{bian2024make}), parameter-space perturbations (STAR~\cite{eskandar2025star}), and uncertainty- and class-balanced pseudo-label reweighting (UCB~\cite{11030217}).

\textbf{Few-shot Class-Incremental Learning.}
Few-shot continual learning has been explored via prototype-based and constraint-driven formulations. PIFS~\cite{cermelli2020prototype} and FeCAM~\cite{goswami2023fecam} leverage prototypes for exemplar-free adaptation, Subspace~\cite{akyurek2021subspace} constrains novel classifiers to base-class subspaces, and NC-FSCIL~\cite{yang2023neural} fixes classifiers as simplex equiangular tight frames. Other approaches allocate future embedding capacity (FACT~\cite{zhou2022forward}), synthesize prior data (Gen-Replay~\cite{10.1007/978-3-031-20053-3_9}), address partial annotations (GAPS~\cite{qiu2023gaps}), decompose networks via soft masks (SoftNet~\cite{kang2023on}), combine pseudo-labeling with distillation (FSCIL-SS~\cite{10235731}), or mine relevant base classes (BCM~\cite{sakai2024a}). C-FSCIL~\cite{9879763} further employs hyperdimensional representations with quasi-orthogonal prototypes.

\textbf{Semi-Supervised and Pseudo-Labeling.}
Semi-supervised continual learning methods exploit unlabeled data while mitigating error accumulation, including coreset selection (RETRIEVE~\cite{killamsetty2021retrieve}), soft nearest-neighbor stabilization (NNCSL~\cite{Kang_2023_ICCV}), uncertainty-aware distillation with class equilibrium (UaD-CE~\cite{10138923}), and confidence–distribution based pseudo-label selection (CSL~\cite{liu2025confidence}).

\textbf{Optimization and Representation Learning under Distribution Shift.}
Related advances include supervised and self-supervised contrastive learning (SupCL~\cite{khosla2020supervised}, SimCLR~\cite{chen2020simple}, UnSupCL-HNM~\cite{robinson2021contrastive}), unifying views of multi-task and meta-learning~\cite{wang2021bridging}, and multi-task representation analyses of few-shot learning~\cite{Bouniot2022improving}. CLIP-driven universal segmentation~\cite{liu2023clip} leverages text embeddings~\cite{radford2021learning} for partial labels, while HALO~\cite{franco2024hyperbolic} applies hyperbolic representations for pixel-level active learning under domain shift.

\textit{Prior work typically addresses class-incremental learning, domain shift, or few-shot supervision in isolation. We instead formalize their coupled interaction and develop learning mechanisms for continual segmentation under joint nonstationarity.}

\section{Jointly Anchored and Stabilized Continual Learning (JASCL)}

\paragraph{Problem Setup (Joint Nonstationarity).}
We formalize continual semantic segmentation under joint nonstationarity as a sequence of sessions $\{\mathcal S_t\}_{t=0}^T$, where each session is characterized by a class set $\mathcal C_t$ and domain distribution $\mathcal D_t$. At session $t$, the model observes labeled data $\mathbb D_t=\{(x_i,y_i)\}_{i=1}^{N_t}$ with $x_i\sim\mathcal D_t$ and pixel-wise labels $y_i\in\mathcal C_t$. The base session $\mathcal S_0$ provides abundant supervision ($N_0$), while subsequent sessions satisfy $N_t\ll N_0$ and are few-shot.

Across sessions, both semantic classes and domains may evolve, yielding three realistic cases: $(i)$ $\mathcal C_t=\mathcal C_{t'}$, $\mathcal D_t\neq\mathcal D_{t'}$ (domain shift); $(ii)$ $\mathcal C_t\neq\mathcal C_{t'}$, $\mathcal D_t=\mathcal D_{t'}$ (class evolution); and $(iii)$ $\mathcal C_t\neq\mathcal C_{t'}$, $\mathcal D_t\neq\mathcal D_{t'}$ (joint shift). Previously observed class–domain pairs are not revisited. Each incremental session uses $N_t=K|\mathcal C_t|$ labeled samples with $K\in\{5,10,20,30\}$ and may additionally access unlabeled data $\mathcal U_t=\{x_j^{(u)}\}_{j=1}^{M_t}$, where $M_t\gg N_t$ and $x_j^{(u)}\sim\mathcal D_t$. Our goal is to learn a segmentation model that adapts across sessions while retaining prior knowledge under this coupled class, domain, and supervision shift. To this end, JASCL combines gradient adaptive stabilization (GAS) for optimization stability with prototype anchored supervision (PAS) to anchor semi-supervised learning under nonstationarity.

\subsection{Gradient Adaptive Stabilization (GAS)}

To prevent catastrophic forgetting and overfitting on few-shot data while adapting to domain shifts, we introduce \textit{gradient-adaptive stabilization} (GAS) that regulates perturbation magnitude using parameter gradients as a proxy for curvature. Let $\theta \in \mathbb{R}^d$ denote the flattened vector of all model parameters, where $d$ is the total number of parameters. At session $t$, define the squared gradient $G_i = (\partial \mathcal{L}/\partial \theta_i)^2$ where $\mathcal{L}$ is the loss function and $\theta_i$ is the $i$-th parameter. Given classifier layer $F$ of segmentation model with weight matrix $\mathbb{W} \in \mathbb{R}^{m \times n}$, we maintain gradient buffer $G$ that accumulates squared gradients $\{G_{ij}\}$ and compute inverse gradients $G_{ij}^{-1} = 1/(G_{ij} + \epsilon)$ where $\epsilon>0$ ensures numerical stability. The normalized noise scale is:
\begin{equation}
\tilde{G}_{ij}^{-1} = \frac{1 + G_{ij}^{-1} - \min(G^{-1})}{1 + \max(G^{-1}) - \min(G^{-1})} \in (0, 1]
\end{equation}
where $\min(G^{-1})$ and $\max(G^{-1})$ are the minimum and maximum values over all elements in the inverse gradient matrix. The weights are perturbed as $\tilde{\mathbb{W}} = \mathbb{W} + \tilde{G}^{-1} \odot \mathcal{N}(0, I)$, where $\odot$ denotes element-wise multiplication and $\mathcal{N}(0, I)$ is standard Gaussian noise. This mechanism injects minimal noise into critical parameters with large gradients actively contributing to learning, while injecting higher noise into parameters with small gradients that may be overfitting.

\textbf{Theoretical analysis.} We establish optimality of GAS through Bayesian posterior approximation (a variational inference approach to approximate intractable posteriors). Under diagonal Gaussian approximation, the approximate posterior after session $t$ is $q^{(t)}(\theta) = \mathcal{N}(\mu^{(t)}, \mathrm{diag}(\sigma_1^{(t)2}, \ldots, \sigma_d^{(t)2}))$ where $\mu^{(t)} \in \mathbb{R}^d$ is the mean vector and $\sigma_i^{(t)2}$ is the variance of parameter $\theta_i$. The true posterior under Laplace approximation (second-order Taylor approximation of the posterior) is $\pi^{(t)}(\theta) = \mathcal{N}(\theta^{(t)}, \mathrm{diag}(1/F_{11}^{(t)}, \ldots, 1/F_{dd}^{(t)}))$ where $\theta^{(t)}$ is the optimal parameter vector at session $t$ and $F_{ii}^{(t)}$ is the $i$-th diagonal element of the Fisher Information Matrix (FIM) at session $t$, defined as $F_{ij} := \mathbb{E}_{(\mathbf{x},\mathbf{y}) \sim P}\left[\frac{\partial \log p(\mathbf{y}|\mathbf{x};\theta)}{\partial \theta_i} \frac{\partial \log p(\mathbf{y}|\mathbf{x};\theta)}{\partial \theta_j}\right]$ where $P$ is the data distribution and $p(\mathbf{y}|\mathbf{x};\theta)$ is the model's likelihood. The FIM measures parameter sensitivity to data. A large $F_{ii}$ indicates parameter $\theta_i$ significantly affects model predictions.

\begin{assumption}[Gradient-Fisher Correspondence]
\label{ass:grad_fisher}
For parameter $i$ at session $t$, the squared gradient $g_i^{(t)} = \partial \mathcal{L}/\partial \theta_i$ approximates the diagonal Fisher: $|g_i^{(t)}|^2 = F_{ii}^{(t)}(1 + \epsilon_i)$ where $\epsilon_i$ is an approximation error satisfying $|\epsilon_i| \leq \delta$ for some small $\delta \in (0, 1)$.
\end{assumption}

\begin{assumption}[Domain Shift Model]
\label{ass:domain_shift}
Under domain shift from session $t$ to $t+1$, the diagonal Fisher elements change as $F_{ii}^{(t+1)} = F_{ii}^{(t)} + \Delta F_{ii}$ where $\Delta F_{ii}$ is the change in Fisher information for parameter $i$, and $\frac{1}{d}\sum_{i=1}^d (\Delta F_{ii})^2 \geq \gamma_F^2 \Delta_t^2$ for constants $\gamma_F > 0$ (Fisher sensitivity) and $\Delta_t > 0$ (domain shift magnitude).
\end{assumption}

\begin{proposition}[GAS Achieves Lower KL Under Anisotropic Structure]
\label{prop:kl_comparison}
Let $\tilde{G}_i^2 = c/F_{ii}$ where $c > 0$ is a normalization constant. Let $q_{\mathrm{iso}}$ be an isotropic posterior (uniform variance across parameters) with $\sigma^2 = c/\bar{F}$ where $\bar{F} = \frac{1}{d}\sum_{i=1}^d F_{ii}$ is the average Fisher information. Consider the informed prior $p = \mathcal{N}(\theta^*, \mathrm{diag}(1/F_{11}, \ldots, 1/F_{dd}))$ where $\theta^*$ is the optimal parameter vector. If Fisher information is heterogeneous across parameters, i.e., $\mathrm{Var}(F_{ii}) > 0$, then $\mathrm{KL}(q_{\mathrm{GAS}} \| p) < \mathrm{KL}(q_{\mathrm{iso}} \| p)$ where $\mathrm{KL}(\cdot \| \cdot)$ is the Kullback-Leibler divergence (measures difference between probability distributions).
\end{proposition}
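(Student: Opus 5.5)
The plan is to compute both divergences in closed form and compare them directly. All three distributions involved are diagonal Gaussians. I will take $q_{\mathrm{GAS}}$ and $q_{\mathrm{iso}}$ to share a common mean $\mu$, which is natural since both perturb the same trained weights; the proposition implicitly requires this, and the cleanest case is $\mu=\theta^*$. I will take their variances to be $\sigma_i^2=\tilde G_i^2=c/F_{ii}$ and $\sigma^2=c/\bar F$ respectively. For two diagonal Gaussians, the standard formula gives
\begin{equation*}
\mathrm{KL}(q\|p)=\tfrac12\sum_{i=1}^d\Big[\sigma_i^2F_{ii}-1-\log(\sigma_i^2F_{ii})\Big]+\tfrac12(\mu-\theta^*)^\top\mathrm{diag}(F_{11},\ldots,F_{dd})(\mu-\theta^*).
\end{equation*}
The quadratic mean term is identical for both posteriors because they share $\mu$, so it cancels in the comparison. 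Only the trace and log-determinant parts matter.

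For GAS, substituting $\sigma_i^2F_{ii}=c$ makes every summand equal. This gives $\mathrm{KL}(q_{\mathrm{GAS}}\|p)=\tfrac d2\,(c-1-\log c)$ plus the mean term. For the isotropic posterior, I introduce the normalized ratios $r_i:=F_{ii}/\bar F$, which are positive and satisfy $\frac1d\sum_i r_i=1$. Then $\sigma^2F_{ii}=c\,r_i$, and the trace part becomes $\sum_i c\,r_i=cd$, the same linear contribution as in the GAS case. Subtracting the two expressions leaves
\begin{equation*}
\mathrm{KL}(q_{\mathrm{iso}}\|p)-\mathrm{KL}(q_{\mathrm{GAS}}\|p)=-\tfrac12\sum_{i=1}^d\log r_i .
\end{equation*}

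It remains to show this difference is strictly positive. By Jensen's inequality for the strictly concave function $\log$, $\frac1d\sum_i\log r_i\le\log\big(\frac1d\sum_i r_i\big)=\log 1=0$. Equality holds if and only if all $r_i$ are equal, that is, if and only if all $F_{ii}$ coincide. The hypothesis $\mathrm{Var}(F_{ii})>0$ excludes this case, so the inequality is strict and $\mathrm{KL}(q_{\mathrm{GAS}}\|p)<\mathrm{KL}(q_{\mathrm{iso}}\|p)$.

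The only real obstacle is making the implicit modelling choices explicit. These are: that the two posteriors share a mean; that $\tilde G_i^2$ is the variance of the GAS posterior, which is consistent with the perturbation $\tilde{\mathbb W}=\mathbb W+\tilde G^{-1}\odot\mathcal N(0,I)$ under Assumption~\ref{ass:grad_fisher} with $\delta\to0$; and that $F_{ii}>0$, so the logarithms are defined. The rest is the short AM–GM/Jensen computation above. The bound can also be quantified: a second-order expansion of $\log r_i$ around $1$ gives a gap of roughly $\tfrac{d}{4}\mathrm{Var}(F_{ii})/\bar F^2$, though this refinement is not needed for the strict inequality.
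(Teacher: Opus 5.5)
Your proof is correct and takes essentially the same route as the paper: both compute the KL divergence between diagonal Gaussians in closed form, drop the shared mean term, and conclude by Jensen's inequality, with equality exactly when all $F_{ii}$ coincide. The only difference is cosmetic. The paper applies Jensen to the convex $f(x)=x-\log x-1$, whereas you first cancel the linear part exactly and apply it to $\log$; this has the small bonus of giving the exact, $c$-independent gap $\tfrac{d}{2}\log\big(\bar F/(\prod_i F_{ii})^{1/d}\big)$.
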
 

\textit{When Fisher information varies across parameters, uniform 
noise cannot match the per-parameter variance of the Laplace 
approximation; GAS resolves this by scaling noise inversely to 
$F_{ii}$, achieving strictly lower KL divergence.}

\begin{corollary}[GAS Generalization Bound]
\label{cor:gas_generalization}
Under Proposition~\ref{prop:kl_comparison}, GAS achieves tighter PAC-Bayes bound~\cite{mcallester1999pac} (a framework for deriving generalization guarantees) than isotropic noise. With informed prior $p$, for expected risk $R(q) = \mathbb{E}_{\theta \sim q}[\mathbb{E}_{(\mathbf{x},\mathbf{y}) \sim P}[\ell(f_\theta(\mathbf{x}), \mathbf{y})]]$ (where $\ell$ is the loss function and $f_\theta$ is the model parameterized by $\theta$) and empirical risk $\hat{R}(q) = \mathbb{E}_{\theta \sim q}[\frac{1}{n}\sum_{i=1}^n \ell(f_\theta(\mathbf{x}_i), \mathbf{y}_i)]$ (computed over $n$ training samples), with probability at least $1-\delta$:
\begin{equation}
R(q_{\mathrm{GAS}}) \leq \hat{R}(q_{\mathrm{GAS}}) + \sqrt{\frac{\mathrm{KL}(q_{\mathrm{GAS}} \| p) + \log(2\sqrt{n}/\delta)}{2n}}
\end{equation}
where $\mathrm{KL}(q_{\mathrm{GAS}} \| p) < \mathrm{KL}(q_{\mathrm{iso}} \| p)$ when Fisher is heterogeneous. Assuming equal empirical risk, the generalization gap for GAS is smaller: $R(q_{\mathrm{GAS}}) - \hat{R}(q_{\mathrm{GAS}}) < R(q_{\mathrm{iso}}) - \hat{R}(q_{\mathrm{iso}})$.
\end{corollary}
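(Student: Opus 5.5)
The corollary follows by combining a standard PAC-Bayes bound with Proposition~\ref{prop:kl_comparison}; the proof needs no new estimate.

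\textbf{Step 1: the bound for any posterior.} We invoke the McAllester/Maurer form of the PAC-Bayes theorem~\cite{mcallester1999pac}. Assume a loss $\ell$ bounded in $[0,1]$ and a prior $p$ chosen independently of the $n$ training samples. Then with probability at least $1-\delta$, simultaneously for all posteriors $q$,
\[
\mathrm{kl}\bigl(\hat R(q)\,\|\,R(q)\bigr)\le \frac{\mathrm{KL}(q\|p)+\log(2\sqrt n/\delta)}{n}.
\]
Pinsker's inequality gives $(R-\hat R)^2\le \mathrm{kl}(\hat R\|R)/2$, which yields the square-root form stated in the corollary. Two features matter here. First, the bound holds uniformly over $q$. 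Second, $q_{\mathrm{GAS}}$ and $q_{\mathrm{iso}}$ are both Gaussians with finite KL to the Gaussian prior $p$. So the displayed inequality holds for $q_{\mathrm{GAS}}$ and for $q_{\mathrm{iso}}$ on the same event of probability $1-\delta$. Instantiating it at $q=q_{\mathrm{GAS}}$ gives the first claim.

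\textbf{Step 2: monotonicity.} Write $B(K):=\sqrt{(K+\log(2\sqrt n/\delta))/(2n)}$, which is strictly increasing in $K\ge 0$. Proposition~\ref{prop:kl_comparison} says that whenever $\mathrm{Var}(F_{ii})>0$, we have $\mathrm{KL}(q_{\mathrm{GAS}}\|p)<\mathrm{KL}(q_{\mathrm{iso}}\|p)$. Hence $B(\mathrm{KL}(q_{\mathrm{GAS}}\|p))<B(\mathrm{KL}(q_{\mathrm{iso}}\|p))$. Under the stated assumption $\hat R(q_{\mathrm{GAS}})=\hat R(q_{\mathrm{iso}})$, the upper bound on $R(q_{\mathrm{GAS}})$ is therefore strictly smaller than the upper bound on $R(q_{\mathrm{iso}})$. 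The same holds for the certified generalization gaps $R-\hat R\le B(\cdot)$.

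\textbf{Main obstacle: the final claim.} The corollary states the strict inequality $R(q_{\mathrm{GAS}})-\hat R(q_{\mathrm{GAS}})<R(q_{\mathrm{iso}})-\hat R(q_{\mathrm{iso}})$ between the \emph{actual} gaps. PAC-Bayes only controls upper bounds, so this inequality cannot follow from bound comparison alone. I would read the last sentence as a statement about the certified gaps, namely the right-hand sides $B(\cdot)$, and state explicitly that this is the interpretation.

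A second subtlety concerns the prior $p=\mathcal N(\theta^*,\mathrm{diag}(1/F_{ii}))$. It must be data-independent relative to the current $n$ samples for the bound to hold. In the continual setting, the natural justification is that $\theta^*$ and $F$ come from the previous session, as in Assumption~\ref{ass:domain_shift}, and are therefore independent of session-$t$ data. I would add this justification, together with the boundedness of $\ell$, as standing hypotheses in the proof.
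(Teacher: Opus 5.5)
Your proposal is correct and follows the same route as the paper. You instantiate McAllester's PAC-Bayes bound at $q_{\mathrm{GAS}}$ (the paper simply cites it as a standard result), then combine strict monotonicity of the bound in $\mathrm{KL}(q\|p)$ with Proposition~\ref{prop:kl_comparison}. The paper's proof likewise only shows that the certified bound is tighter, not the strict inequality between actual gaps, so your reading of the final claim as a statement about the bounds is what is actually proved. Your added hypotheses (bounded loss, data-independent prior) are assumptions the paper leaves implicit.
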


\textit{Since the generalization bound increases with 
$\mathrm{KL}(q\|p)$, the strictly lower KL achieved by GAS 
directly yields a tighter bound on the expected risk.}

\begin{assumption}[Heterogeneous Curvature]
\label{ass:heterogeneous}
The Hessian $H = \nabla^2 \mathcal{L}(\theta) \in \mathbb{R}^{d \times d}$ (second derivative matrix of the loss) has eigenvalues $\lambda_1 \geq \lambda_2 \geq \cdots \geq \lambda_d > 0$ with condition number $\kappa := \lambda_1/\lambda_d > 1$ (ratio of largest to smallest eigenvalue).
\end{assumption}

Static regularization methods (e.g. Dropout, weight decay~\cite{mirzadeh2020understanding}, noise injection~\cite{orvieto2023explicit}) set variance independent of current-task Fisher information $F_{ii}^{(t+1)}$.

\begin{theorem}[GAS vs. Static Regularization]
\label{thm:gas_static}
Let $q_{\mathrm{GAS}}$ be the GAS posterior with noise variance $\tilde{G}_i^2 \propto 1/g_i^2$ (inversely proportional to squared gradients) and $q_{\mathrm{static}}$ be a static regularization posterior with fixed variance $\sigma_{\mathrm{static}}^2$ determined from session $t-1$. Under Assumptions~\ref{ass:grad_fisher} and~\ref{ass:domain_shift}, when domain shift magnitude satisfies $\Delta_t > C\lambda\delta/\gamma_F$ for regularization strength $\lambda$ and constants $C, \delta, \gamma_F$, we have $\mathrm{KL}(q_{\mathrm{GAS}} \| \pi^{(t)}) < \mathrm{KL}(q_{\mathrm{static}} \| \pi^{(t)})$ where $\pi^{(t)}$ is the true posterior at session $t$.
\end{theorem}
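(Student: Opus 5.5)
We compare both posteriors against the diagonal Laplace posterior $\pi^{(t)}=\mathcal N(\theta^{(t)},\mathrm{diag}(1/F_{ii}^{(t)}))$. For diagonal Gaussians with the same mean, taking both $q$'s centred at $\theta^{(t)}$ so that only the variance terms differ, the KL divergence decomposes coordinate-wise:
\begin{equation*}
\mathrm{KL}(q\|\pi^{(t)})=\tfrac12\sum_{i=1}^d \phi\bigl(s_i F_{ii}^{(t)}\bigr),\qquad \phi(u)=u-1-\log u\ge 0,
\end{equation*}
where $s_i$ is the variance $q$ assigns to $\theta_i$. The function $\phi$ vanishes only at $u=1$ and satisfies $\phi(u)\approx\tfrac12(u-1)^2$ near $u=1$. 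The plan is therefore to show that GAS keeps every ratio $u_i=s_iF_{ii}^{(t)}$ within $O(\delta)$ of $1$, while the static posterior is forced to leave many ratios a distance of order $\Delta_t$ away from $1$. The threshold on $\Delta_t$ then makes the second effect dominate.

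\textbf{Upper bound for GAS.} Take $s_i^{\mathrm{GAS}}=c/g_i^{2}$ with $c=1$, the normalization consistent with Proposition~\ref{prop:kl_comparison}; the effect of a general $c$ is absorbed into the constant $C$. Assumption~\ref{ass:grad_fisher} gives $u_i=1/(1+\epsilon_i)$ with $|\epsilon_i|\le\delta<1$. Bounding $\phi$ on $[1/(1+\delta),1/(1-\delta)]$ then gives $\mathrm{KL}(q_{\mathrm{GAS}}\|\pi^{(t)})\le C_1 d\,\delta^2$ for an absolute constant $C_1$.

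\textbf{Lower bound for static.} The static variance is fixed from the previous session, with $\sigma^2_{\mathrm{static},i}$ at best matched to $1/F_{ii}^{(t-1)}$ and scaled by the regularization strength $\lambda$. Its ratios are therefore $u_i=\sigma^2_{\mathrm{static},i}F_{ii}^{(t)}$. By Assumption~\ref{ass:domain_shift} the Fisher changes as $F^{(t)}_{ii}=F^{(t-1)}_{ii}+\Delta F_{ii}$, with mean-square shift at least $\gamma_F^2\Delta_t^2$. Two steps follow.
\begin{itemize}
\item Since $\sigma^2_{\mathrm{static}}$ is independent of $\Delta F$, the deviations satisfy $u_i-1\propto\Delta F_{ii}/F^{(t-1)}_{ii}$, up to a $\lambda$-dependent offset.
\item Apply the lower bound $\phi(u)\ge c_0\min\{(u-1)^2,|u-1|\}$ and sum over $i$ to get $\mathrm{KL}(q_{\mathrm{static}}\|\pi^{(t)})\ge c_2 d\,\gamma_F^2\Delta_t^2/(\lambda^2 F_{\max}^2)$. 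The constants from normalizing by $F^{(t-1)}_{ii}$ and from $\lambda$ are absorbed into $C$.
\end{itemize}
Comparing the two bounds, $c_2\gamma_F^2\Delta_t^2/\lambda^2>C_1\delta^2$ holds precisely when $\Delta_t>C\lambda\delta/\gamma_F$ with $C=\sqrt{C_1/c_2}\,F_{\max}$. This yields the strict inequality.

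\textbf{Main obstacle.} The lower bound is the delicate step. Assumption~\ref{ass:domain_shift} controls $\Delta F_{ii}$ only in mean square, and it bounds absolute rather than relative changes. Two gaps must be closed:
\begin{itemize}
\item Converting absolute to relative changes requires some bound on $F^{(t-1)}_{ii}$, namely boundedness of the Fisher by $F_{\max}$. I will state this explicitly as a regularity condition and fold it into $C$.
\item The quadratic lower bound on $\phi$ only holds for bounded $|u-1|$. For large deviations I will use the linear branch, which only strengthens the inequality.
\end{itemize}
I would also spell out how $\lambda$ enters $\sigma^2_{\mathrm{static}}$, as $\sigma^2_{\mathrm{static},i}=1/(\lambda F^{(t-1)}_{ii})$ or as a uniform $\lambda$-scaled variance, since the stated threshold depends on this modelling choice.
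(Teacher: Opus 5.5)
\textbf{Gap: the static lower bound, at the offset.} Your skeleton matches the paper's. You use the variance-only KL $\tfrac12\sum_i\phi(u_i)$, show the GAS ratios are $1+O(\delta)$ (giving $O(d\delta^2)$), show the static ratios are displaced by the Fisher shift (giving $\Omega(d\gamma_F^2\Delta_t^2/\lambda^2)$), and compare. The static lower bound, however, has a genuine gap exactly at the ``$\lambda$-dependent offset'' you wave through. With $\sigma^2_{\mathrm{static},i}=1/(\lambda F^{(t-1)}_{ii})$ you get $u_i-1=\frac{1-\lambda}{\lambda}+b_i$, where $b_i=\frac{\Delta F_{ii}}{\lambda F^{(t-1)}_{ii}}$, and the offset can cancel the shift. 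Take $\Delta F_{ii}=\kappa F^{(t-1)}_{ii}$ for all $i$, with $\kappa>0$ and $\lambda=1+\kappa$. Then every $u_i=1$, so $\mathrm{KL}(q_{\mathrm{static}}\|\pi^{(t)})=0$. Yet Assumption~\ref{ass:domain_shift} holds with $\gamma_F\Delta_t=\kappa\min_i F^{(t-1)}_{ii}$, and $\Delta_t>C\lambda\delta/\gamma_F$ is satisfied for small $\delta$; the strict inequality of the theorem then cannot hold. With an offset present you can only bound the centred sum $\sum_i(b_i-\bar b)^2$, not $\sum_i b_i^2$.

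\textbf{What is missing.} You need an explicit calibration hypothesis that removes the offset: the static variance equals the pre-shift Laplace variance. The paper relies on this implicitly. Its static model is weight decay, $\sigma^2_{q,i}=1/\lambda$ (a single variance, as in the statement). It passes from $r_i-1=(F^{(t+1)}_{ii}-\lambda)/\lambda$ to $\Delta F_{ii}/\lambda$, which presumes $F^{(t)}_{ii}=\lambda$ for every $i$. That is how $\lambda$ enters the threshold, with no $F_{\max}$ needed. Your per-coordinate $1/(\lambda F^{(t-1)}_{ii})$ is what the paper classifies as memory-based (EWC). Calibrated ($\lambda=1$), it works, but then $F_{\max}$ rather than $\lambda$ appears in the threshold. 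Either way the hypothesis must be stated; without it the claim is false.

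\textbf{Two smaller repairs.} First, a general proportionality constant $c$ cannot be absorbed into $C$. With $s_i=c/g_i^2$ you have $u_i=c/(1+\epsilon_i)$, and for $c\neq1$ the GAS KL tends to $\tfrac d2\phi(c)>0$ as $\delta\to0$. You need $c=1+O(\delta)$, which the paper secures by minimizing the KL over the constant, $\beta=d/\sum_i(1+\eta_i)$. Also, your $C_1$ is absolute only for $\delta$ bounded away from $1$, since $\phi(1/(1-\delta))\to\infty$ as $\delta\to1$.

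Second, the linear branch of $\min\{(u-1)^2,|u-1|\}$ weakens the bound when $|u-1|>1$; it does not strengthen it. A mean-square lower bound on $\Delta F$ then does not control $\sum_i\min\{(u_i-1)^2,|u_i-1|\}$ at order $d\Delta_t^2$ if the shift concentrates on few coordinates. Use two-sided Fisher bounds instead ($F_{\min}$ as well as $F_{\max}$). In either calibrated model $u_i\le U:=F_{\max}/F_{\min}$. Since $\phi(u)/(u-1)^2=\int_0^1 t\,(1+t(u-1))^{-1}\,dt$ is decreasing in $u$, you get $\phi(u_i)\ge\frac{\phi(U)}{(U-1)^2}(u_i-1)^2$. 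This is the paper's ``$c_0$ depending on $F_{\max}/F_{\min}$''. Do not adopt its value $c_0=1/2$, which is false because $\phi(u)<\tfrac12(u-1)^2$ for every $u>1$.
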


Memory-based regularization methods (e.g. EWC~\cite{kirkpatrick2017overcoming}, UCL~\cite{ahn2019uncertainty}) set variance using Fisher information from sessions $0, \ldots, t$ to regularize session $t+1$.

\begin{theorem}[GAS vs. Memory-Based Regularization]
\label{thm:gas_history}
Let $q_{\mathrm{memory}}$ be a memory-based regularization posterior using Fisher information accumulated from previous sessions. Define the Fisher mismatch $\rho_i = (F_{ii}^{(t)} - \bar{F}_{ii}^{(\text{hist})})/F_{ii}^{(t)}$ where $\bar{F}_{ii}^{(\text{hist})}$ is the historical Fisher estimate (weighted average from past sessions). Under Assumptions~\ref{ass:grad_fisher} and~\ref{ass:domain_shift}, when $\frac{1}{d}\sum_i \rho_i^2 > O(\delta^2)$ (Fisher mismatch exceeds approximation error), we have $\mathrm{KL}(q_{\mathrm{GAS}} \| \pi^{(t)}) < \mathrm{KL}(q_{\mathrm{memory}} \| \pi^{(t)})$.
\end{theorem}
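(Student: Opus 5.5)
We compare two diagonal Gaussians against the Laplace posterior $\pi^{(t)}=\mathcal N(\theta^{(t)},\mathrm{diag}(1/F_{ii}^{(t)}))$. We assume both posteriors share the mean $\theta^{(t)}$, as in the proof of Proposition~\ref{prop:kl_comparison}, so only the variance terms enter. For a diagonal Gaussian $q=\mathcal N(\theta^{(t)},\mathrm{diag}(s_i^2))$ the KL is separable,
\begin{equation*}
\mathrm{KL}(q\|\pi^{(t)})=\tfrac12\sum_{i=1}^d \phi\!\left(s_i^2F_{ii}^{(t)}\right),\qquad \phi(r)=r-1-\log r .
\end{equation*}
Here $\phi\ge 0$, $\phi(1)=0$ and $\phi(r)=\tfrac12(r-1)^2+O(|r-1|^3)$. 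The whole comparison therefore reduces to how far each method's ratio $r_i=s_i^2F_{ii}^{(t)}$ lies from $1$.

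\textbf{GAS side.} GAS uses $s_i^2=c/g_i^2$, taking $c=1$ after absorbing the normalization as in Proposition~\ref{prop:kl_comparison}. Assumption~\ref{ass:grad_fisher} gives $r_i^{\mathrm{GAS}}=1/(1+\epsilon_i)$, hence $|r_i^{\mathrm{GAS}}-1|\le \delta/(1-\delta)$. Using the quadratic behaviour of $\phi$ near $1$,
\begin{equation*}
\mathrm{KL}(q_{\mathrm{GAS}}\|\pi^{(t)})\le \tfrac{d}{2}\,C_1\delta^2
\end{equation*}
for an absolute constant $C_1$ (valid for $\delta\le 1/2$, say).

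\textbf{Memory side.} The memory posterior uses $s_i^2=1/\bar F_{ii}^{(\mathrm{hist})}$, so $r_i^{\mathrm{mem}}=F_{ii}^{(t)}/\bar F^{(\mathrm{hist})}_{ii}=1/(1-\rho_i)$. For $|\rho_i|$ bounded away from $1$, a lower bound $\phi(1/(1-\rho))\ge C_2\rho^2$ follows from convexity of $\phi$ and its strictly positive second derivative on a compact range. If $\rho_i$ can be close to $1$, then $\phi$ blows up, which only helps the inequality. This gives
\begin{equation*}
\mathrm{KL}(q_{\mathrm{memory}}\|\pi^{(t)})\ge \tfrac{d}{2}C_2\cdot\tfrac1d\sum_i\rho_i^2 .
\end{equation*}

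\textbf{Conclusion and role of Assumption~\ref{ass:domain_shift}.} The hypothesis $\tfrac1d\sum_i\rho_i^2> (C_1/C_2)\,\delta^2$ is exactly what "$>O(\delta^2)$" denotes, and it makes the memory lower bound exceed the GAS upper bound, giving the strict inequality. Assumption~\ref{ass:domain_shift} explains why the hypothesis is natural. The historical estimate lags the current Fisher by the accumulated $\Delta F_{ii}$, so $\sum_i\rho_i^2$ is lower bounded in terms of $\gamma_F^2\Delta_t^2$ weighted by $1/(F_{ii}^{(t)})^2$. I would note this as a sufficient condition: a shift $\Delta_t \gtrsim \delta\max_i F_{ii}^{(t)}/\gamma_F$ implies the mismatch condition, mirroring Theorem~\ref{thm:gas_static}.

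\textbf{Main obstacle.} The hard step is making the two-sided control of $\phi$ uniform. The upper bound for GAS needs $\delta$ small. The lower bound for memory needs a constant $C_2$ that holds across the whole range of $\rho_i$. For $\rho_i\in(-\infty,1)$, $\phi(1/(1-\rho))=\rho/(1-\rho)+\log(1-\rho)$ grows only logarithmically as $\rho\to-\infty$, so a pure $\rho^2$ lower bound fails there. I would first try to handle this by restricting to $|\rho_i|\le 1/2$ or capping. Failing that, I would replace $\rho_i^2$ by $\min(\rho_i^2,1)$, state the condition in that form, and absorb the change into the $O(\delta^2)$ constant.
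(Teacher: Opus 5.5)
Your proposal is correct and follows essentially the paper's proof: the same variance-only KL written via $f(x)=x-\log x-1$, the $\tfrac{d}{2}C^2\delta^2$ upper bound for GAS from Assumption~\ref{ass:grad_fisher}, the memory variance ratio $1/(1-\rho_i)$ lower-bounded by a multiple of $\rho_i^2$, and the same final comparison. The obstacle you flag is resolved in the paper by Assumption~\ref{ass:bounded_fisher}, which confines $\rho_i$ to $[1-F_{\max}/F_{\min},\,1-F_{\min}/F_{\max}]$ and so gives a uniform constant (the paper's $M$). Your choice of memory variance $1/\bar F_{ii}^{(\mathrm{hist})}$ (i.e.\ $\lambda=1$) is also the right one: the paper's extra attempt to optimize $\lambda$ uses $\sum_i(\rho_i-\bar\rho)^2\le\sum_i\rho_i^2$, which points the wrong way for a lower bound, and with a tuned $\lambda$ and uniform $\rho_i$ the memory KL would in fact vanish.
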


\textit{Fisher estimates accumulated from past sessions become 
misaligned with the current session's Fisher information under 
domain shift; GAS recomputes noise scales from instantaneous 
gradients, maintaining alignment with $F_{ii}^{(t)}$.} 

Adversarial flatness methods (e.g. SAM~\cite{foret2021sharpnessaware}, C-Flat~\cite{bian2024make}, STAR~\cite{eskandar2025star}) seek flat minima via worst-case perturbations.

\begin{theorem}[GAS vs. Adversarial Flatness Methods]
\label{thm:gas_adversarial}
Let $\mathcal{L}_{\mathrm{adv}}(\theta) = \max_{\|\delta\|_2 \leq \rho} \mathcal{L}(\theta + \delta)$ be the adversarial loss maximized over an $\ell_2$-ball of radius $\rho$ (perturbation budget). The worst-case perturbation concentrates on high-curvature directions, yielding expected loss increase $\mathbb{E}[\mathcal{L}_{\mathrm{adv}}] - \mathcal{L}(\theta) = O(\rho^2 \lambda_{\max})$ where $\lambda_{\max}$ is the maximum Hessian eigenvalue. For GAS with inverse gradient scaling, the expected loss increase is $O(\rho^2 \bar{\lambda})$ where $\bar{\lambda} = \frac{1}{d}\sum_i \lambda_i$ is the average curvature. Under Assumption~\ref{ass:heterogeneous} with $\kappa > d$, GAS achieves lower expected loss increase: $\mathbb{E}[\mathcal{L}_{\mathrm{GAS}}] - \mathcal{L}(\theta) < \mathbb{E}[\mathcal{L}_{\mathrm{adv}}] - \mathcal{L}(\theta)$ by factor up to $\kappa/d$.
\end{theorem}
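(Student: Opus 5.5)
We work in the local quadratic model $\mathcal{L}(\theta+\delta)\approx\mathcal{L}(\theta)+\tfrac12\delta^\top H\delta$ at a (near-)stationary point. This is justified because the theorem compares second-order loss increases and Assumption~\ref{ass:heterogeneous} gives $H\succ0$. The argument has three parts: the adversarial side, the GAS side, and the comparison. To put both on equal footing, we match budgets so that each method uses total perturbation energy $\rho^2$: $\|\delta\|_2\le\rho$ for the adversary and $\mathbb{E}\|\delta_{\mathrm{GAS}}\|_2^2=\rho^2$ for GAS. We also work in the eigenbasis of $H$, where $H=\mathrm{diag}(\lambda_1,\dots,\lambda_d)$. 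For the diagonal Fisher/Hessian proxy this is consistent with Assumption~\ref{ass:grad_fisher}.

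\textbf{Step 1 (adversarial).} Maximizing $\tfrac12\delta^\top H\delta$ over the ball $\|\delta\|_2\le\rho$ is a Rayleigh-quotient problem. Its maximizer is $\delta^*=\rho v_1$, the top eigenvector, so $\mathcal{L}_{\mathrm{adv}}-\mathcal{L}=\tfrac12\rho^2\lambda_1=O(\rho^2\lambda_{\max})$. Because this is deterministic, taking expectations changes nothing. The same computation shows that the worst-case perturbation concentrates entirely on the highest-curvature direction.

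\textbf{Step 2 (GAS).} GAS draws $\delta_i=\tilde G_i\xi_i$ with $\xi\sim\mathcal{N}(0,I)$. This gives $\mathbb{E}[\tfrac12\delta^\top H\delta]=\tfrac12\sum_i\tilde G_i^2\lambda_i$. With inverse-gradient scaling, $\tilde G_i^2\propto 1/g_i^2\approx c/\lambda_i$ up to factors $(1\pm\delta)$ from Assumption~\ref{ass:grad_fisher}.

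The budget-normalized form is delicate. If we normalize exactly to $\tilde G_i^2=\rho^2\lambda_i^{-1}/\sum_j\lambda_j^{-1}$, the loss increase becomes $\tfrac12\rho^2 d/\sum_j\lambda_j^{-1}=\tfrac12\rho^2\lambda_H$, where $\lambda_H$ is the harmonic mean. The AM--HM inequality gives $\lambda_H\le\bar\lambda$, so the claimed $O(\rho^2\bar\lambda)$ holds as an upper bound. A cruder route also works: bound each $\tilde G_i^2\le\rho^2/d$ (equal share of the budget per coordinate, using $\tilde G\in(0,1]$ rescaled) to get $\le\tfrac12\rho^2\bar\lambda$ directly. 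I would present the cruder bound, since it matches the statement, and remark that the harmonic-mean bound is sharper.

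\textbf{Step 3 (comparison).} The ratio of the two increases is
\[\frac{\mathbb{E}[\mathcal{L}_{\mathrm{adv}}]-\mathcal{L}}{\mathbb{E}[\mathcal{L}_{\mathrm{GAS}}]-\mathcal{L}}\ \ge\ \frac{\lambda_1}{\bar\lambda}.\]
We then lower-bound $\lambda_1/\bar\lambda$ using $\kappa$. Since $\bar\lambda\le\lambda_1$ always, strictness needs some $\lambda_i<\lambda_1$, which $\kappa>1$ guarantees; hence $\lambda_1>\bar\lambda$ and GAS is strictly better. For the factor, $d\bar\lambda=\sum_i\lambda_i$. In the spiked regime where the bulk sits near $\lambda_d$, this gives $\lambda_1/\bar\lambda\approx\lambda_1 d/(\lambda_1+(d-1)\lambda_d)$. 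The harmonic-mean version yields $\lambda_1/\lambda_H\ge\lambda_1/(d\lambda_d)=\kappa/d$ whenever $\lambda_H\le d\lambda_d$, which is always true since $\sum_j\lambda_j^{-1}\ge\lambda_d^{-1}$. So with the sharper Step 2 bound, the improvement factor is at least $\kappa/d$, and this exceeds $1$ exactly when $\kappa>d$. That explains the hypothesis and the phrase ``by factor up to $\kappa/d$''.

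\textbf{Main obstacle.} The hard step is the budget normalization. GAS's actual noise is min--max normalized into $(0,1]$ rather than set to a fixed $\ell_2$ energy, and the $(1\pm\delta)$ gradient--Fisher error must be carried through. I would first define the comparison at matched expected energy $\rho^2$. Then I would show that the normalization in the GAS definition is a monotone affine map of $G^{-1}$, so it preserves the inverse-curvature ordering up to an additive floor. Finally I would absorb that floor and the $\delta$ error into the constant as a factor $(1+\delta)/(1-\delta)$, which leaves the strict inequality intact once $\kappa/d$ exceeds this factor.
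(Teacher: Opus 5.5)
Your core argument is the paper's proof. It uses the quadratic model at a near-stationary point and the Rayleigh-quotient maximizer $\rho v_1$, giving $\tfrac12\rho^2\lambda_{\max}$. It then takes the budget-matched allocation $\tilde G_i^2=\rho^2\lambda_i^{-1}/\sum_j\lambda_j^{-1}$, giving $\tfrac12\rho^2 d/\sum_j\lambda_j^{-1}$. AM--HM yields the $O(\rho^2\bar\lambda)$ bound, and $\sum_j\lambda_j^{-1}\ge\lambda_d^{-1}$ yields the factor $\kappa/d$. Your strictness argument, $\lambda_H\le\bar\lambda<\lambda_1$ once $\kappa>1$, is cleaner than the paper's, which only records the ratio as $\ge 1$.

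The step that fails is the ``cruder'' bound you say you would present. The claim $\tilde G_i^2\le\rho^2/d$ for every $i$ is false for inverse-curvature noise with a fixed budget. Since $\lambda_d^{-1}$ is the largest of the $\lambda_j^{-1}$, it strictly exceeds their average when $\kappa>1$. Hence $\tilde G_d^2=\rho^2\lambda_d^{-1}/\sum_j\lambda_j^{-1}>\rho^2/d$ precisely under the theorem's hypothesis. The low-curvature coordinates must receive more than an equal share, and $\tilde G_i\in(0,1]$ does not change that.

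Present the harmonic-mean computation instead; your Step 3 needs it anyway. The ratio $\lambda_1/\bar\lambda$ alone cannot deliver $\kappa/d$: for $d=2$, $\lambda=(100,1)$ it is about $1.98$, while $\kappa/d=50$ and $\lambda_1/\lambda_H\approx 50.5$. If you want an $O(\rho^2\bar\lambda)$ bound that uses only the ordering, use Chebyshev's sum inequality. It gives $\sum_i\tilde G_i^2\lambda_i\le\rho^2\bar\lambda$ for any budget-matched allocation with $\tilde G_i^2$ ordered oppositely to $\lambda_i$.

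Your plan for the ``main obstacle'' would not go through as described, though the paper's proof of this theorem does not attempt it either. It simply takes $\tilde G_i^2\propto 1/\lambda_i$, aligned with the Hessian eigendirections, as the meaning of ``GAS with inverse gradient scaling.'' You can match it by stating that idealization explicitly.

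The actual normalization is affine in $G_{\mathrm{inv},i}$, with additive offset $(1-\min_j G_{\mathrm{inv},j})/(1+\max_j G_{\mathrm{inv},j}-\min_j G_{\mathrm{inv},j})$. Also, $\tilde G_i$ is a standard deviation, so the variance is the square of an affine function of $1/\lambda_i$. Neither the offset nor the squaring is a relative $(1\pm\delta)$ perturbation of $1/\lambda_i$, so neither can be absorbed into a factor $(1+\delta)/(1-\delta)$.

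Concretely, when $\max_j G_{\mathrm{inv},j}-\min_j G_{\mathrm{inv},j}\ll 1$, the ``$1+$'' floor makes every $\tilde G_i\approx 1$ and the noise is essentially isotropic. The improvement factor then falls to about $\lambda_1/\bar\lambda$. In spiked spectra this is far below $\kappa/d$: for $d=10$, $\lambda=(10^6,100,\dots,100)$ it is about $10$ versus $\kappa/d=10^3$. For the literal normalized noise you keep strict improvement and, via the ordering, the $O(\rho^2\bar\lambda)$ bound, but not the $\kappa/d$ factor.
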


\textit{Adversarial perturbations concentrate on the maximum curvature direction, incurring loss increase $O(\rho^2\lambda_{\max})$; inverse gradient scaling distributes perturbations toward low-curvature directions, reducing this to $O(\rho^2\bar{\lambda})$.} 

The proofs for Proposition~\ref{prop:kl_comparison},Corollary~\ref{cor:gas_generalization}, and Theorems~\ref{thm:gas_static}--\ref{thm:gas_adversarial} are provided in Appendix~\ref{sec:GAS_theory}. Please refer to Table~\ref{sen_rob} and Appendix~\ref{app_rob} for \textit{robustness and sensitivity analysis} of GAS.

\subsection{Prototype Anchored Supervision (PAS)}

To leverage abundant unlabeled data $\mathcal{U}_t$ while mitigating error propagation, we introduce prototype anchored supervision (PAS) within a mean-teacher framework~\cite{NIPS2017_68053af2} (a semi-supervised learning approach where a teacher network generates pseudo-labels for unlabeled data to train a student network). For each class $c \in \mathcal{C}_t$, we extract a compact prototype from the feature space. Given a trained model with intermediate feature extractor $\phi$ and classifier $F$, for each labeled sample $(x_i, y_i)$ with embedding $E_i = \phi(x_i) \in \mathbb{R}^{D \times H \times W}$ (where $D$ is the feature dimension, $H, W$ are spatial dimensions), we extract class-specific features $\mathcal{F}_c^{(i)} = \{E_i : y_i = c\}$ (all features where ground-truth label is class $c$) and compute the per-sample prototype as $p_c^{(i)} = \frac{1}{|\mathcal{F}_c^{(i)}|} \sum_{\mathbf{f} \in \mathcal{F}_c^{(i)}} \mathbf{f}$. Across all samples in session $t$ containing class $c$ (denoted $NS_c \subseteq N_t$), the class prototype is:
\begin{equation}
P_c = \frac{1}{NS_c} \sum_{j \in NS_c} \frac{p_c^{(j)}}{||p_c^{(j)}||_2}
\end{equation}
where $||\cdot||_2$ is the $\ell_2$-norm (Euclidean norm).

\textit{These prototypes serve as class anchors in feature space, enabling pseudo-label validation via feature-space consistency rather than confidence alone.} 

For unlabeled input $x_j^{(u)}$, student network $M_s$ and teacher network $M_t$ generate predictions $\hat{y}_s, \mathcal{F}_s = M_s(x_j^{(u)})$ and $\hat{y}_t, \mathcal{F}_t = M_t(x_j^{(u)})$, where $\hat{y}_s, \hat{y}_t \in \mathbb{R}^{C \times H \times W}$ are predicted class probabilities and $\mathcal{F}_s, \mathcal{F}_t \in \mathbb{R}^{D \times H \times W}$ are feature representations. For each pixel $(p,q)$, we compute predicted class $c'{(p,q)} = \arg\max_c(\text{softmax}(\hat{y}{(p,q)}))$, confidence $\text{conf}{(p,q)} = \max(\text{softmax}(\hat{y}{(p,q)}))$ (maximum probability over classes), and cosine similarity to prototype:
\begin{equation}
\text{sim}{(p,q)} = \frac{\mathcal{F}{(p,q)} \cdot P_{c'{(p,q)}}}{||\mathcal{F}{(p,q)}||_2 ||P_{c'{(p,q)}}||_2}
\end{equation}
where $\cdot$ denotes dot product. A pseudo-label at $(p,q)$ is retained only if $\text{valid}{(p,q)} = (\text{conf}{(p,q)} > \tau_{\text{conf}}) \land (\text{sim}{(p,q)} > \tau_{\text{sim}})$ where $\tau_{\text{conf}}, \tau_{\text{sim}} \in [0,1]$ are thresholds. The consistency loss operates on validated pseudo-labels:
\begin{equation}
\mathcal{L}_{\text{consistency}} = \frac{1}{|\mathcal{V}|} \sum_{(p,q) \in \mathcal{V}} ||\hat{y}_s(p,q) - \hat{y}_t(p,q)||_2^2
\end{equation}
where $\mathcal{V} = \{(p,q) : \text{valid}_s(p,q) \land \text{valid}_t(p,q)\}$ represents pixels validated by both models.

\textbf{Theoretical analysis.} We model pseudo-label dynamics through error rates. Let $\epsilon_s(t)$ and $\epsilon_t(t)$ denote expected fractions of incorrect pseudo-labels produced by student and teacher at iteration $t$. With unlabeled data $\mathcal{D}_u$ of size $n_u$ and pseudo-label weight $\gamma \in (0,1)$ (controls the influence of pseudo-labels on learning), define coverage $f \in [0,1]$ as the fraction of pseudo-labels retained, and precision $\rho \in [0,1]$ as the probability a retained pseudo-label is correct.

\begin{assumption}[Labeled Data Error]
\label{ass:base_error}
There exists a constant $\epsilon_0 \in (0,1)$ representing the irreducible error rate of the student when trained solely on labeled data $\mathcal{D}_l$ (the best achievable error without unlabeled data).
\end{assumption}

\begin{assumption}[Linear Error Mixing]
\label{ass:linear_mixing}
When trained on mixture of ground-truth labels (weight $1-\gamma$) and pseudo-labels (weight $\gamma$), the student's expected error is: $\epsilon_s(t+1) = (1-\gamma)\epsilon_0 + \gamma\epsilon_t(t)$.
\end{assumption}

\begin{assumption}[EMA Error Inheritance]
\label{ass:ema_error}
The teacher's pseudo-label error inherits from student via exponential moving average (EMA) with decay $\alpha \in [0,1)$ (a momentum-based update where teacher parameters slowly track student parameters): $\epsilon_t(t+1) = \alpha \epsilon_t(t) + (1-\alpha) \epsilon_s(t+1)$.
\end{assumption}

\begin{proposition}[Error Dynamics Recurrence]
\label{prop:recurrence}
Under Assumptions~\ref{ass:base_error}--\ref{ass:ema_error}, the teacher error satisfies first-order linear recurrence: $\epsilon_t(t+1) = \lambda \epsilon_t(t) + (1-\alpha)(1-\gamma)\epsilon_0$ where $\lambda := \alpha + (1-\alpha)\gamma < 1$ is the contraction factor.
\end{proposition}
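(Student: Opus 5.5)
The plan is direct substitution: plug Assumption~\ref{ass:linear_mixing} into Assumption~\ref{ass:ema_error}, so the student error disappears and only teacher quantities remain. Then collect terms and check the contraction claim on $\lambda$.

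\textbf{Step 1 (substitution).} Assumption~\ref{ass:ema_error} gives
\begin{equation*}
\epsilon_t(t+1)=\alpha\,\epsilon_t(t)+(1-\alpha)\,\epsilon_s(t+1).
\end{equation*}
Substituting $\epsilon_s(t+1)=(1-\gamma)\epsilon_0+\gamma\,\epsilon_t(t)$ from Assumption~\ref{ass:linear_mixing} yields
\begin{equation*}
\epsilon_t(t+1)=\bigl(\alpha+(1-\alpha)\gamma\bigr)\epsilon_t(t)+(1-\alpha)(1-\gamma)\epsilon_0 .
\end{equation*}
This is exactly the claimed first-order linear recurrence with $\lambda=\alpha+(1-\alpha)\gamma$. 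The inhomogeneous term is a constant by Assumption~\ref{ass:base_error}.

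\textbf{Step 2 (contraction).} Rewrite the factor as
\begin{equation*}
\lambda = 1-(1-\alpha)(1-\gamma).
\end{equation*}
Since $\alpha\in[0,1)$ and $\gamma\in(0,1)$, both $1-\alpha$ and $1-\gamma$ are strictly positive, so their product is strictly positive and $\lambda<1$. Also $\lambda\ge\alpha\ge 0$, so $\lambda\in[0,1)$, which is what justifies calling it a contraction factor.

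\textbf{Where care is needed.} The only subtle point is the index convention. The student error that enters the EMA step at iteration $t+1$ must be the one produced from the teacher error at iteration $t$. This matches the assumptions as stated, so no extra argument is needed. It is also the reason the strict inequality requires both $\alpha<1$ and $\gamma<1$: if $\alpha=1$ or $\gamma=1$, then $\lambda=1$ and the map is no longer a contraction.

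As an optional remark, one can solve the recurrence. Its fixed point is $\epsilon_t^\ast=\epsilon_0$, and
\begin{equation*}
\epsilon_t(t)-\epsilon_0=\lambda^{t}\bigl(\epsilon_t(0)-\epsilon_0\bigr),
\end{equation*}
so the teacher error converges geometrically to $\epsilon_0$.
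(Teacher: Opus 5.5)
Your proposal is correct and matches the paper's proof: the paper substitutes the linear-mixing student update into the EMA teacher update and collects terms, exactly as you do. The only difference is placement. The paper checks $\lambda<1$ (via $\lambda=\alpha+\gamma-\alpha\gamma<\alpha+(1-\alpha)=1$) in the proof of the subsequent asymptotic-error theorem, while your equivalent identity $\lambda=1-(1-\alpha)(1-\gamma)$ places that check inside this proposition.
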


\begin{theorem}[Asymptotic Error Without Prototype Anchored Supervision (PAS)]
\label{thm:no_plr}
Let $\gamma \in (0,1)$ and $\alpha \in [0,1)$. Then $\lambda = \alpha + (1-\alpha)\gamma < 1$, and teacher error converges: $\lim_{t \to \infty} \epsilon_t(t) = \epsilon_0$. Without PAS, semi-supervised learning provides no asymptotic improvement over purely supervised learning.
\end{theorem}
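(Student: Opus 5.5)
We prove the result by invoking Proposition~\ref{prop:recurrence}, which reduces the teacher error to the scalar affine recurrence $\epsilon_t(t+1) = \lambda \epsilon_t(t) + (1-\alpha)(1-\gamma)\epsilon_0$. We then analyze this recurrence directly.

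First we check that $\lambda<1$. Writing $\lambda = \alpha + (1-\alpha)\gamma$, we have $1-\lambda = (1-\alpha)(1-\gamma)$. Since $\alpha\in[0,1)$ and $\gamma\in(0,1)$, both factors are strictly positive, so $1-\lambda>0$. Moreover $\lambda\ge \alpha\ge 0$, hence $\lambda\in[0,1)$.

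Next we locate the fixed point. Any fixed point $\epsilon^*$ satisfies $(1-\lambda)\epsilon^* = (1-\alpha)(1-\gamma)\epsilon_0$. The identity $1-\lambda=(1-\alpha)(1-\gamma)$ makes the two coefficients equal, so $\epsilon^*=\epsilon_0$ exactly. This cancellation is the substantive content of the theorem: the EMA and mixing weights exactly compensate, so pseudo-labels leave the equilibrium unchanged.

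We then show convergence to that fixed point. Set $e(t)=\epsilon_t(t)-\epsilon_0$ and subtract the fixed-point equation from the recurrence; this gives $e(t+1)=\lambda e(t)$. Unrolling yields $e(t)=\lambda^{t} e(0)$, so $|e(t)|\le \lambda^t|e(0)|\to 0$ because $0\le\lambda<1$. Therefore $\lim_{t\to\infty}\epsilon_t(t)=\epsilon_0$ for any initial teacher error $\epsilon_t(0)$. The case $\lambda=0$, i.e.\ $\alpha=0$ and $\gamma=0$, is excluded, but it would also be trivially fine.

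For the interpretive claim, Assumption~\ref{ass:linear_mixing} gives $\epsilon_s(t+1)=(1-\gamma)\epsilon_0+\gamma\epsilon_t(t)$. Taking limits, the student error also tends to $\epsilon_0$. This equals the labeled-only baseline of Assumption~\ref{ass:base_error}, so semi-supervised learning without PAS gives no asymptotic improvement.

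We expect no real obstacle. The only step requiring care is the algebraic identity $1-\lambda=(1-\alpha)(1-\gamma)$, which forces the fixed point to be exactly $\epsilon_0$.
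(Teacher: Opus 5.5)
Your proof is correct and follows essentially the same route as the paper: verify $1-\lambda=(1-\alpha)(1-\gamma)>0$, unroll the affine recurrence from Proposition~\ref{prop:recurrence}, and use $\lambda^t\to 0$. The only cosmetic difference is that you shift to the fixed point, writing $e(t)=\epsilon_t(t)-\epsilon_0$ so that $e(t)=\lambda^t e(0)$, whereas the paper writes the explicit solution and sums the geometric series $\sum_k\lambda^k=1/(1-\lambda)$; both rely on the same cancellation $(1-\alpha)(1-\gamma)/(1-\lambda)=1$.
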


\begin{theorem}[Asymptotic Error With PAS]
\label{thm:filtered_main}
Under Assumptions~\ref{ass:base_error}--\ref{ass:ema_error}, with PAS that achieves coverage $f$ and precision $\rho$, the teacher error converges to:
\begin{equation}
\epsilon_\infty(f, \rho) = \frac{(1-f\gamma)\epsilon_0}{1 - f\gamma(1-\rho)}.
\end{equation}
Furthermore: (i) $\epsilon_\infty < \epsilon_0$ if and only if $\rho > 1 - \frac{1-f\gamma}{f\gamma}$, (ii) $\epsilon_\infty$ is strictly decreasing in $\rho$ when $f > 0$, (iii) for any $\rho > \pi$ (base rate), PAS strictly improves over the no PAS baseline, and (iv) higher precision always reduces asymptotic error.
\end{theorem}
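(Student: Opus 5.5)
We would prove this the same way as Proposition~\ref{prop:recurrence} and Theorem~\ref{thm:no_plr}: first write down how PAS changes the error-mixing rule, then reduce everything to a scalar affine contraction.

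\textbf{Modelling PAS.} Under PAS only a fraction $f$ of pseudo-labels is retained, and a retained pseudo-label is wrong with probability $1-\rho$. Following Assumption~\ref{ass:linear_mixing}, we take the effective pseudo-label weight to be $f\gamma$. We then read the $\epsilon_t(t)$ term of that assumption as contributing $(1-\rho)$ times the teacher error on the retained pixels. This gives
\begin{equation*}
\epsilon_s(t+1) = (1-f\gamma)\epsilon_0 + f\gamma(1-\rho)\,\epsilon_t(t).
\end{equation*}
We would state this explicitly as the PAS instance of Assumption~\ref{ass:linear_mixing}, since the whole result rests on it. Substituting into Assumption~\ref{ass:ema_error}, exactly as in Proposition~\ref{prop:recurrence}, yields
\begin{equation*}
\epsilon_t(t+1) = \lambda'\,\epsilon_t(t) + (1-\alpha)(1-f\gamma)\epsilon_0, \qquad \lambda' := \alpha + (1-\alpha)f\gamma(1-\rho).
\end{equation*}
Since $f\gamma(1-\rho) \le \gamma < 1$ and $\alpha < 1$, we have $\lambda' < 1$. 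The Banach contraction argument of Theorem~\ref{thm:no_plr} therefore gives convergence to the unique fixed point. Solving $\epsilon = \lambda'\epsilon + (1-\alpha)(1-f\gamma)\epsilon_0$ and cancelling the common factor $(1-\alpha)$ gives the stated $\epsilon_\infty(f,\rho)$.

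\textbf{Items (ii) and (iv).} These follow from one derivative:
\begin{equation*}
\partial_\rho\,\epsilon_\infty = -\frac{f\gamma(1-f\gamma)\epsilon_0}{\bigl(1-f\gamma(1-\rho)\bigr)^2}.
\end{equation*}
The denominator is positive because $f\gamma(1-\rho)<1$. Hence the derivative is strictly negative whenever $f>0$ (using $f\gamma<1$ and $\epsilon_0>0$), and (iv) is a restatement of (ii).

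\textbf{Item (i).} Since the denominator is positive, $\epsilon_\infty<\epsilon_0$ is equivalent to $1-f\gamma < 1-f\gamma(1-\rho)$, i.e. to $f\gamma\rho>0$. The step we expect to be the main obstacle is reconciling this with the threshold written in the statement. Rearranged, the stated condition $\rho > 1-\frac{1-f\gamma}{f\gamma}$ is $f\gamma(1-\rho) < 1-f\gamma$. Under our mixing rule this is \emph{not} equivalent to $\epsilon_\infty<\epsilon_0$. It is instead the condition that the filtered pseudo-label error weight $f\gamma(1-\rho)$ stays below the ground-truth weight $1-f\gamma$. We would first try to identify which comparison the authors intend; failing that, we would prove (i) in the form the algebra supports and note the discrepancy.

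\textbf{Item (iii).} We would define the base rate $\pi$ as the precision at which filtered training matches the no-PAS limit, i.e. $\epsilon_\infty(f,\pi)=\epsilon_0$, which is the limit from Theorem~\ref{thm:no_plr}. By the strict monotonicity from (ii), every $\rho>\pi$ then gives $\epsilon_\infty(f,\rho)<\epsilon_\infty(f,\pi)=\epsilon_0$, so PAS strictly improves over the baseline.
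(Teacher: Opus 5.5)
Your proposal follows the paper's proof essentially line for line: your PAS mixing rule is exactly the extra Filtered Error Dynamics assumption the paper adds in the appendix, your recurrence, fixed point and the derivative used for (ii) and (iv) coincide with the paper's, and for (iii) your defining equation $\epsilon_\infty(f,\pi)=\epsilon_0$ forces $\pi=0$, so it reduces to the paper's argument that $\rho>\pi\ge 0$ gives $f\gamma\rho>0$. Your suspicion about (i) is correct and is not a matter of interpretation: the paper's proof also reduces $\epsilon_\infty<\epsilon_0$ to $f\gamma\rho>0$ and then simply asserts this is ``equivalent'' to $\rho>1-\frac{1-f\gamma}{f\gamma}$, which is false (for $f\gamma=0.8$, $\rho=0.1$ one gets $\epsilon_\infty=\frac{0.2}{0.28}\epsilon_0<\epsilon_0$ although $\rho<0.75$, while for $f\gamma<\tfrac12$ and $\rho=0$ the threshold holds yet $\epsilon_\infty=\epsilon_0$), so (i) should be stated and proved as $\epsilon_\infty<\epsilon_0 \iff f\gamma\rho>0$.
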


\textit{When $\rho=0$ (all accepted pseudo-labels incorrect), 
$\epsilon_\infty=\epsilon_0$, recovering the no PAS baseline; 
when $\rho=1$ (all accepted pseudo-labels correct), 
$\epsilon_\infty=(1-f\gamma)\epsilon_0 < \epsilon_0$, 
achieving the maximum possible reduction.}

We also prove that PAS performs better compared with confidence-only, consistency-based (CSL~\cite{liu2025confidence}), and memory bank based (NNCSL~\cite{Kang_2023_ICCV}) methods. All proofs are provided in Appendix~\ref{app_prl_theory}.

\begin{figure*}[t]
    \centering
    \includegraphics[width=0.75\linewidth]{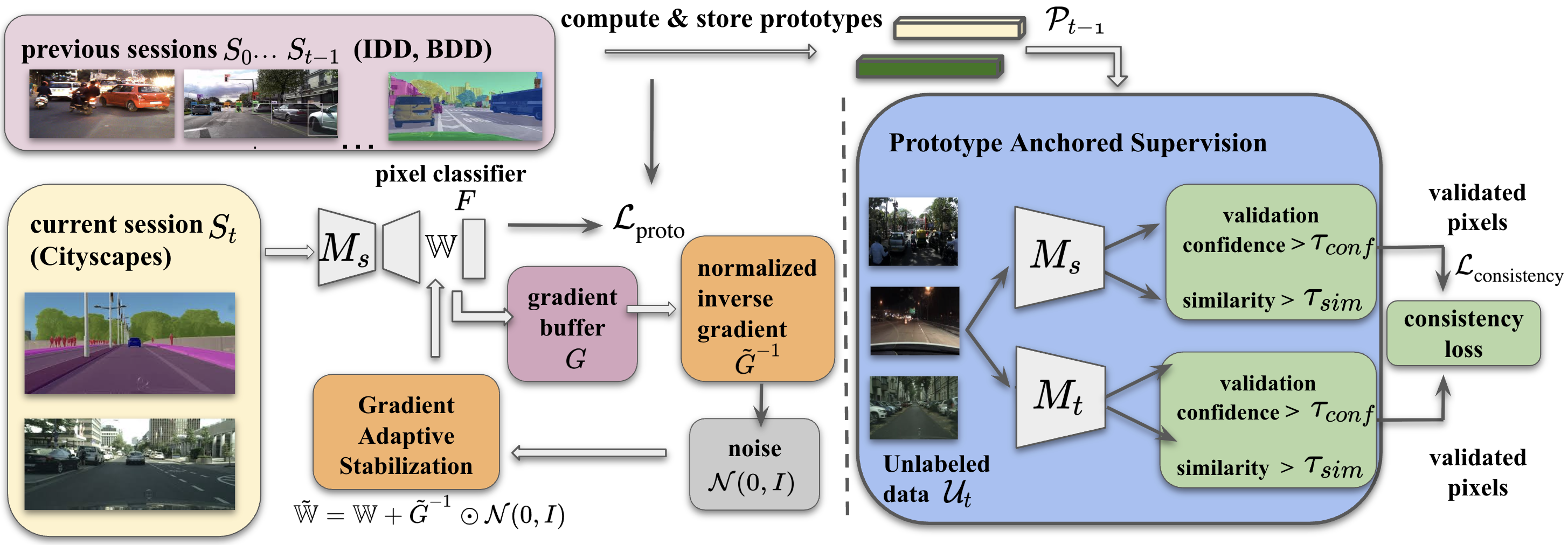}
    \caption{At session~$t$, gradient adaptive stabilization (GAS) is applied to the decoder’s pixel classifier~$F$ of~$M_s$. With unlabeled data, a mean-teacher model~$M_t$ generates pseudo-labels via similarity matching ($\tau_{\text{sim}}$), and only high-confidence predictions ($\tau_{\text{conf}}$) contribute to a consistency loss. Prototype anchored supervision (PAS) uses labeled prototypes~$P_c$ to validate pseudo-labels in both prediction and feature spaces, preventing error amplification in the student–teacher loop.}
    \label{fig:JASCL}
\end{figure*}

JASCL integrates Gradient Adaptive Stabilization (GAS) with Prototype Anchored Supervision (PAS) to address continual segmentation under joint nonstationarity (Figure~\ref{fig:JASCL}). At session $t$, training minimizes
$\mathcal L=\mathcal L_{\text{CE}}(\mathbb D_t)+\lambda_{\text{cons}}\mathcal L_{\text{consistency}}(\mathcal U_t)$ (if unlabeled data is available),
where $\mathcal L_{\text{CE}}$ denotes supervised cross-entropy and $\mathcal L_{\text{consistency}}$ is computed only on PAS-validated pseudo-labels. GAS applies gradient-adaptive perturbations to classifier parameters, providing stability under curvature heterogeneity (Proposition~\ref{prop:kl_comparison}, Corollary~\ref{cor:gas_generalization}, Theorems~\ref{thm:gas_static}--\ref{thm:gas_adversarial}), while PAS anchors semi-supervised learning to class prototypes, improving pseudo-label reliability (Theorem~\ref{thm:no_plr}--\ref{thm:filtered_main}). Prototypes are replayed at the classifier~$F$ ($\mathcal{L}_{\text{proto}}$).
Together, GAS stabilizes optimization and PAS suppresses error propagation, enabling continual learning across evolving classes, domains, and supervision regimes.

\begin{table*}[!htbp]
\centering
\caption{Summary of JASCL Benchmarks. $|\mathcal{C}_t|$ denotes the number of classes in session $i$. `SS' denotes Semi-Supervised.}
 \scalebox{0.78}{
\label{tab:benchmark_summary}
\resizebox{\textwidth}{!}{%
\begin{tabular}{@{}l|l|lllll@{}}
\toprule
\textbf{Benchmark} & \textbf{Session 0 (Base)} & \textbf{Session 1} & \textbf{Session 2} & \textbf{Session 3} & \textbf{Session 4} & \textbf{Session 5} \\ 
\midrule
\textbf{Med JASCL-Disjoint} & $|\mathcal{C}_0|=15$ (TS) & $|\mathcal{C}_1|=5$ (AMOS) & $|\mathcal{C}_2|=6$ (BCV) & $|\mathcal{C}_3|=4$ (MOTS) & $|\mathcal{C}_4|=3$ (BraTS) & $|\mathcal{C}_5|=4$ (VerSe) \\
\textbf{Med JASCL-Mixed} & $|\mathcal{C}_0|=10$ (AMOS) & $|\mathcal{C}_1|=8$ (BCV, MOTS) & $|\mathcal{C}_2|=6$ (TS, AMOS) & $|\mathcal{C}_3|=4$ (MOTS, TS) & $|\mathcal{C}_4|=7$ (BraTS, VerSe) & -- \\
\textbf{Med SS-JASCL} & $|\mathcal{C}_0|=15$ (TS) & $|\mathcal{C}_1|=5$ (AMOS) & $|\mathcal{C}_2|=6$ (BCV) & $|\mathcal{C}_3|=4$ (MOTS) & $|\mathcal{C}_4|=3$ (BraTS) & $|\mathcal{C}_5|=4$ (VerSe) \\
\midrule
\textbf{Natural-JASCL} & $|\mathcal{C}_0|=10$ (BDD) & $|\mathcal{C}_1|=5$ (IDD) & $|\mathcal{C}_2|=5$ (BDD, IDD) & -- & -- & -- \\
\textbf{SS-Natural-JASCL} & $|\mathcal{C}_0|=10$ (BDD) & $|\mathcal{C}_1|=2$ (Cityscapes) & $|\mathcal{C}_2|=2$ (IDD) & $|\mathcal{C}_3|=3$ (IDD) & -- & -- \\
\bottomrule
\end{tabular}%
}
}
\label{summary_data}
\end{table*}

\section{JASCL Benchmarks}
We construct five challenging benchmarks for \textbf{3D medical} and \textbf{2D natural scene segmentation}, designed to simulate realistic clinical and autonomous driving scenarios with \textit{multiple sessions}, \textit{diverse domains}, and a \textit{large number of novel classes}. Each benchmark features a base learning session on a large dataset followed by incremental sessions with limited labeled data (few-shot) and with significant domain shifts.

\textbf{3D Medical JASCL Benchmarks}:
We develop three \textbf{3D medical} benchmarks using data from \textbf{TotalSegmentator} (TS - \textcolor{black}{CT}) (\cite{wasserthal2023totalsegmentator}), \textbf{AMOS}  \textcolor{black}{(mostly CT with few MRI samples)}  (\cite{ji2022amos}), \textbf{BCV} \textcolor{black}{(CT)}(\cite{landman2015miccai}, \textbf{MOTS} \textcolor{black}{(CT)}  (\cite{zhang2021dodnet}), \textbf{BraTS} \textcolor{black}{(MRI)}   (\cite{menze2014multimodal}), and \textbf{VerSe} \textcolor{black}{(CT)} (\cite{sekuboyina2021verse}). All three benchmarks adopt a few-shot learning setup, using 5 training samples per class for incremental sessions, progressing from normal to tumor segmentation.

The three medical benchmarks: (i) \textbf{Med JASCL-Disjoint}, a 6-session, 37-class protocol with disjoint classes and domains; (ii) \textbf{Med JASCL-Mixed}, a 5-session, 35-class setup allowing recurrence of either classes or domains (but not both) and mixing datasets per session; and (iii) \textbf{Med Semi-Supervised-JASCL}, a semi-supervised variant of Med JASCL-Disjoint augmented with 8–30 unlabeled samples per session. Please refer to Table \ref{summary_data} for various classes and domains.

\textbf{2D Natural Scene JASCL Benchmarks}:
We introduce two benchmarks for \textbf{autonomous driving} scenarios using data from \textbf{BDD100K} (\cite{yu2020bdd100k}), \textbf{Cityscapes} (\cite{cordts2016cityscapes}), and \textbf{IDD} (\cite{varma2019idd}). These benchmarks feature a few-shot learning with 10 training samples per class.
The two natural scene benchmarks for autonomous driving: (i) \textbf{Natural-JASCL}, a 3-session setup using BDD100K, Cityscapes, and IDD, designed to test representation adaptation under domain shifts and class recurrence; and (ii) \textbf{Semi-Supervised Natural-JASCL}, a 4-session variant that augments new classes with 400 unlabeled images per class to reflect realistic scenarios with limited annotations but abundant raw data. \textcolor{black}{Details on all datasets, class and domain specifications, and unlabeled data are provided in Appendix~\ref{App_data}.}




\begin{table}[!t]
    \centering
    \caption{Performance of baselines on the 3-session Med JASCL-Disjoint benchmark, reported as Dice score.}
    \scalebox{0.75}{
    \begin{small}
    \begin{tabular}{@{}lrrr@{}}\toprule
    \textbf{Method} & \textbf{Session 0} & \textbf{Session 1} & \textbf{Session 2} \\ \midrule
     \textbf{\textcolor{black}{UCL}} \cite{ahn2019uncertainty} & 0.700 & 0.430 & 0.325 \\
    \textbf{PIFS} \cite{cermelli2020prototype} & 0.700 & 0.129 & 0.078 \\
    \textbf{NC-FSCIL} \cite{yang2023neural} & 0.394 & 0.077 & 0.081 \\
    \textbf{CLIP-CT} \cite{10.1007/978-3-031-43895-0_4} & 0.475 & 0.186 & 0.141 \\
    \textbf{MiB} \cite{cermelli2020modeling} & 0.700 & 0.271 & 0.096 \\
    \textbf{MDIL} \cite{9706918} & 0.779 & 0.115 & 0.097 \\ 
    \textbf{C-FSCIL} \cite{9879763} & 0.787 & 0.334 & 0.297 \\
    \textbf{SoftNet} \cite{kang2023on} & 0.820 & 0.305 & 0.146 \\
    \textbf{GAPS} \cite{qiu2023gaps} & 0.700 & 0.334 & 0.253 \\
    \textbf{FSCIL - SS} \cite{10235731} & 0.700 & 0.115 & 0.089 \\
    \textbf{Subspace} \cite{akyurek2021subspace} & 0.257 & 0.054 & 0.040 \\ 
    \textbf{Gen-Replay} \cite{10.1007/978-3-031-20053-3_9} & 0.700 & 0.076 & 0.102 \\
    \textbf{FeCAM} \cite{goswami2023fecam} & 0.700 & 0.048 & 0.042 \\
    \textbf{FACT} \cite{zhou2022forward} & 0.357 & 0.071 & 0.028 \\
    \textbf{MAML} \cite{Bouniot2022improving} & 0.700 & 0.001 & 0.059 \\
    \textbf{MAML + regularizer} \cite{Bouniot2022improving} & 0.700 & 0.001 & 0.062 \\ 
    \textbf{MTL} \cite{wang2021bridging} & 0.700 & 0.079 & 0.088 \\
    \textbf{UnSupCL} \cite{chen2020simple} & 0.700 & 0.039 & 0.088 \\
    \textbf{SupCL} \cite{khosla2020supervised} & 0.700 & 0.058 & 0.042 \\
    \textbf{UnSupCL-HNM} \cite{robinson2021contrastive} & 0.700 & 0.035 & 0.068 \\
     \textbf{\textcolor{black}{C-Flat}} \cite{bian2024make} & 0.700 & 0.174 & 0.030 \\
      \textbf{\textcolor{black}{STAR}} \cite{eskandar2025star} & 0.700 & 0.050 & 0.020 \\
       \textbf{\textcolor{black}{Saving100x}} \cite{chen2023saving} & 0.700 & 0.072 & 0.053 \\
        \textbf{\textcolor{black}{YoooP}} \cite{kong2025yooop} & 0.700 & 0.176 & 0.028 \\
         \textbf{\textcolor{black}{UCB}} \cite{11030217} & 0.700 & 0.267 & 0.127 \\
          \textbf{\textcolor{black}{BCM}} \cite{sakai2024a} & 0.700 & 0.014 & 0.000 \\
           \textbf{\textcolor{black}{Adapt\_replay}} \cite{zhu2025adaptive} & 0.700 & 0.044 & 0.027 \\
           
    \midrule
    \cellcolor{gray!25}\textbf{JASCL (U-Net)} & \cellcolor{gray!25}0.736 & \cellcolor{gray!25}\textbf{0.460} & \cellcolor{gray!25}\textbf{0.398} \\
    
    \bottomrule
    \end{tabular}
    \end{small}
}
    \label{tab_set1_part}

\end{table}


\begin{table}
\centering
\caption{Performance on the 6-session Med JASCL-Disjoint benchmark, reported as Dice score.
}
\scalebox{0.65}{
\begin{small}
\begin{tabular}{@{}lrrrrrr@{}}\toprule
\textbf{Method} & \textbf{Session 0} & \textbf{ Session 1} & \textbf{ Session 2} & \textbf{Session 3} & \textbf{ Session 4} & \textbf{ Session 5} \\ \midrule
\textbf{U-Net Vanilla} & 0.700 & 0.076 & 0.057 & 0.047 & \textbf{0.030} & 0.042  \\ 
\cellcolor{gray!25}\textbf{JASCL (U-Net)} & \cellcolor{gray!25}0.736 & \cellcolor{gray!25}\textbf{0.460} & \cellcolor{gray!25}\textbf{0.398} & \cellcolor{gray!25}\textbf{0.329} & \cellcolor{gray!25}0.025 & \cellcolor{gray!25}\textbf{0.324} \\ 
\bottomrule
\end{tabular}
\end{small}
}
\label{tab_set1_full}
\end{table}
\begin{table}
    \centering
    \caption{Performance on the Natural-JASCL benchmark, reported as mIoU.}
    \scalebox{0.75}{
    \begin{small}
    \begin{tabular}{@{}lrrr@{}}\toprule
    \textbf{Method} & \textbf{Session 0} & \textbf{Session 1} & \textbf{Session 2} \\ \midrule
    \textbf{DeepLab Vanilla} & 47.76 & 2.18 & 3.86 \\
    \textbf{GAPS} \cite{qiu2023gaps} & 47.76 & 23.42 & 16.68\\
    \textbf{MiB} \cite{cermelli2020modeling} & 47.76 & 2.50 & 2.37 \\
    \textbf{MDIL} \cite{9706918} & 48.54 & 1.59 & 3.02\\ \midrule
    \textbf{SAM Vanilla} \cite{kerssies2024benchmark} & 66.0 & 32.6 & 30.81 \\
    \cellcolor{gray!25}\textbf{JASCL (SAM) } & \cellcolor{gray!25} 66.0 & \cellcolor{gray!25}\textbf{33.2} & \cellcolor{gray!25}\textbf{31.22} \\
    
    \bottomrule
    \end{tabular}
    \end{small}
    }
    \label{SAM_natural}
\end{table}

\begin{table*}
\centering
\caption{\textcolor{black}{Computational analysis for Session~1.}}
\scalebox{0.7}{
\begin{tabular}{lcccccc}
\toprule
\multirow{2}{*}{\textbf{Setting}} &
\multicolumn{2}{c}{\textbf{Parameters (M)}} &
\multicolumn{2}{c}{\textbf{FLOPs (T)}} &
\multicolumn{2}{c}{\textbf{Training Time}} \\
\cmidrule(lr){2-3} \cmidrule(lr){4-5} \cmidrule(lr){6-7}
 & JASCL & w/o JASCL & JASCL & w/o JASCL & JASCL & w/o JASCL \\
\midrule
\textbf{Med JASCL-Disjoint} &
16.27 & 16.27 &
0.52 & 0.52 &
4h 06m & 4h 05m \\

\textbf{Med Semi-Supervised-JASCL} &
39.59 & 39.59 &
1.10 & 1.10 &
5h 18m & 5h 08m \\

\textbf{Natural-JASCL (SAM ViT-B)} &
88.90 & 88.90 &
0.37 & 0.37 &
1h 43m & 1h 35m \\
\bottomrule
\end{tabular}
}
\label{compute}
\end{table*}

\begin{table}[!t]
\centering
\caption{Performance on Med JASCL-Mixed benchmark.  All values are reported as Dice score.}
\scalebox{0.63}{
\begin{small}
\begin{tabular}{@{}lrrrrr@{}}\toprule
\textbf{Method} & \textbf{Session 0} & \textbf{ Session 1} & \textbf{ Session 2} & \textbf{ Session 3} & \textbf{ Session 4} \\ \midrule
\textbf{U-Net Vanilla} & 0.571 & 0.216 & 0.133 & 0.074 & 0.045 \\ 
\textbf{CLIP-driven} \cite{liu2023clip} & 0.717 & \textbf{0.417} & 0.227 & 0.196 & 0.089 \\ 
\textbf{MedFormer Vanilla} & 0.613 & 0.198 & 0.134 & 0.052 & 0.067 \\ 
\textbf{SwinUNetr Vanilla} & 0.605 & 0.197 & 0.133 & 0.082 & 0.082 \\ 
\midrule

\cellcolor{gray!25}\textbf{JASCL (SwinUNetr)} & \cellcolor{gray!25}0.605 & \cellcolor{gray!25}0.318 & \cellcolor{gray!25}0.275 & \cellcolor{gray!25}0.254 & \cellcolor{gray!25}0.210 \\ 
\cellcolor{gray!25}\textbf{JASCL (MedFormer)} & \cellcolor{gray!25}0.622 & \cellcolor{gray!25}0.367 & \cellcolor{gray!25}\textbf{0.287} & \cellcolor{gray!25}\textbf{0.288} & \cellcolor{gray!25}\textbf{0.228} \\ 
\bottomrule
\end{tabular}
\end{small}
}
\label{backbone_set2}
\end{table}

\begin{table}
\centering
\caption{Performance on Semi-Supervised Natural-JASCL benchmark. All values are reported as mIoU.}
\scalebox{0.68}{
\begin{small}
\begin{tabular}{@{}lrrrr@{}}\toprule
\textbf{Method} & \textbf{Session 0} & \textbf{ Session 1} & \textbf{ Session 2} & \textbf{ Session 3} \\ \midrule
\textbf{DeepLab Vanilla} & 47.76 & 1.04 & 1.51 & 0.43 \\ 
\textbf{MDIL} \cite{9706918} & 47.76 & 1.87 & 1.43 & 0.39 \\ 
\textbf{MiB} \cite{cermelli2020modeling} & 47.76 & 5.97 & 1.59 & 0.42 \\ 

\textbf{UaD-CE} \cite{10138923} & 47.76 & 1.88 & 1.74 & 0.69 \\ 
\textbf{NNCSL} \cite{Kang_2023_ICCV} & 47.76 & 0.79 & 1.27 & 0.46 \\ 
\textbf{HALO} \cite{franco2024hyperbolic} & 47.76 & 1.78 & 2.02 & 1.27 \\ 
\textbf{RETRIEVE} \cite{killamsetty2021retrieve} & 47.76 & 1.57 & 1.89 & 0.39 \\ 
\textbf{GAPS} \cite{qiu2023gaps} & 47.76 & 19.73 & 18.76 & 14.45 \\ 
\midrule
\cellcolor{gray!25}\textbf{JASCL + GAPS} & \cellcolor{gray!25}47.76 & \cellcolor{gray!25}\textbf{27.84} & \cellcolor{gray!25}\textbf{27.69} & \cellcolor{gray!25}\textbf{25.47} \\ 
\bottomrule
\end{tabular}
\end{small}
}
\label{Semi_natural_gaps}
\end{table}

\begin{table*}
\centering
\caption{Performance on Med Semi-Supervised-JASCL benchmark. Results reported as Dice score.}
\scalebox{0.71}{
\begin{small}
\begin{tabular}
{@{}lrrrrrr@{}}\toprule
\textbf{Method} & \textbf{Session 0} & \textbf{ Session 1} & \textbf{Session 2} & \textbf{Session 3} & \textbf{Session 4} & \textbf{ Session 5} \\ \midrule
\textbf{U-Net Vanilla} & 0.700 & 0.076 & 0.058 & 0.047 & 0.030 & 0.043 \\ 
\textbf{NNCSL (U-Net)} \cite{Kang_2023_ICCV} & 0.700 & 0.048 & 0.048 & 0.030 & 0.011 & 0.040 \\ 
\textbf{UaD-CE (U-Net)} \cite{10138923} & 0.700 & 0.082 & 0.075 & 0.067 & 0.031 & 0.049 \\ \midrule
\cellcolor{gray!25}\textbf{JASCL (U-Net)} & \cellcolor{gray!25}0.736 & \cellcolor{gray!25}\textbf{0.554} & \cellcolor{gray!25}\textbf{0.445} & \cellcolor{gray!25}\textbf{0.414} & \cellcolor{gray!25}\textbf{0.058} & \cellcolor{gray!25}\textbf{0.368} \\ \midrule
\textbf{MedFormer Vanilla} & 0.659 & 0.065 & 0.062 & 0.059 & 0.051 & 0.040 \\ 
\textbf{UaD-CE (MedFormer)} \cite{10138923} & 0.659 & 0.052 & 0.048 & 0.065 & 0.037 & 0.032 \\ 
\textbf{NNCSL (MedFormer)} \cite{Kang_2023_ICCV} & 0.659 & 0.142 & 0.095 & 0.144 & 0.010 & 0.048 \\ 
\textbf{\textcolor{black}{CSL (MedFormer)}} \cite{liu2025confidence} & 0.659 & 0.040 & 0.020 & 0.000 & 0.000 & 0.000 \\ 
\midrule
\cellcolor{gray!25}\textbf{JASCL (MedFormer)} & \cellcolor{gray!25}0.640 & \cellcolor{gray!25}\textbf{0.431} & \cellcolor{gray!25}\textbf{0.368} & \cellcolor{gray!25}\textbf{0.335} & \cellcolor{gray!25}\textbf{0.323} & \cellcolor{gray!25}\textbf{0.293} \\ 
\bottomrule
\end{tabular}
\end{small}
\label{semi_med}
}
\end{table*}

\begin{table*}
\centering
\caption{\textcolor{black}{JASCL sensitivity/robustness analysis across $\varepsilon$, noise variance, and number of shots $K$.}}
\scalebox{0.7}{
\begin{tabular}{c|cccc|ccc|ccc}
\toprule
\multirow{2}{*}{\textbf{Setting}} 
& \multicolumn{4}{c|}{\textbf{Varying $\varepsilon$}} 
& \multicolumn{3}{c|}{\textbf{Variance of noise added to $\mathbb{W}$}} 
& \multicolumn{3}{c}{\textbf{Shots $(K)$}} \\
\cmidrule(lr){2-5} \cmidrule(lr){6-8} \cmidrule(lr){9-11}
& $10^{-6}$ & $10^{-7}$ & $10^{-8}$ & $10^{-9}$ 
& 0.1 & 1 & 10 
& 3 & 4 & 5 \\
\midrule
\rowcolor{gray!25}
\textbf{JASCL} 
& 0.443 & 0.437 & 0.460 & 0.482 
& 0.460 & 0.460 & 0.408
& 0.414 & 0.434 & 0.460 \\
\midrule
\textbf{Vanilla}
& \multicolumn{10}{c}{0.076} \\
\bottomrule
\end{tabular}
}
\label{sen_rob}
\end{table*}

\section{Results and Analysis}

We use mean Intersection over Union (mIoU), ranging from 0 to 100 (higher is better), for 2D natural scene benchmarks, and the Dice coefficient (Dice score), ranging from 0 to 1 (higher is better), for 3D medical benchmarks, as evaluation metrics. In each \textit{incremental session}, we evaluate the classes introduced in the current session along with all classes encountered in previous sessions. The goal is to retain previously learned knowledge while effectively acquiring new information, handling data scarcity, and adapting to domain shifts.
A \textit{Vanilla} baseline consists of a plain backbone with no mechanisms to handle any constraints. Gen-Replay \cite{10.1007/978-3-031-20053-3_9} is implemented with a diffusion model adapted from \cite{10493074}. \textcolor{black}{Some baselines act only in incremental sessions and rely on a shared pre-trained base model, whereas others modify training in Session 0, thereby altering base session performance. \textcolor{black}{See Appendix~\ref{reg_gni} for comparison with regularization-based methods.}}

\textcolor{black}{We observe substantial performance drops across all Vanilla models and baselines. Notably, transformer-based models perform comparably to, or sometimes worse than, simpler encoder–decoder architectures like U-Net, and even heavily pre-trained models such as SAM show a clear decline.}
Across the medical benchmarks, baselines collapse after just two sessions in \textit{Med JASCL-Disjoint} (Table~\ref{tab_set1_part}), while JASCL with a U-Net backbone sustains strong performance across all five, demonstrating robustness to multiple constraints as shown in Table \ref{tab_set1_part} and Table \ref{tab_set1_full}. In \textit{Med JASCL-Mixed}, transformer backbones such as MedFormer, SwinUNetr, and a CLIP-driven U-Net all degrade over sessions, with the latter dropping despite pre-training on 21 of 35 classes (Table~\ref{backbone_set2}), highlighting the benchmark’s difficulty. In \textit{Med Semi-Supervised-JASCL}, adding unlabeled data significantly boosts JASCL (Table~\ref{semi_med}, Figure~\ref{Swin_semi_fig}b), unlike existing semi-supervised methods that fail to exploit it. \textbf{This demonstrates that leveraging readily available unlabeled data can substantially improve multi-constraint continual learning for semantic segmentation.} 
\textcolor{black}{Figure~\ref{Swin_semi_fig}b compares JASCL with (SS JASCL-U-Net) and without (JASCL-U-Net) access to unlabeled data. With only a few labeled samples, the added unlabeled examples broaden the coverage of the new session and lead to a clear performance gain. This improvement shows that selective pseudo-labeling helps stabilize adaptation. In this setup, MedFormer effectively adapts to the substantial domain shift introduced by BraTS (MRI) domain in Session 4 (Table~\ref{summary_data}) and still maintains a strong score of 0.323 with JASCL (Table~\ref{semi_med}), outperforming U-Net.}
In natural scene benchmark (\textit{Natural-JASCL}), even large-scale models like SAM \cite{kirillov2023segment}, pre-trained on a billion masks, exhibit forgetting (Table~\ref{SAM_natural}), yet JASCL consistently improves SAM, as well as U-Net and transformer backbones, showing broad applicability. Finally, in \textit{Semi-Supervised Natural-JASCL}, JASCL serves as a plug-and-play module that leverages unlabeled data to enhance GAPS results (Table~\ref{Semi_natural_gaps}), further underscoring its effectiveness across diverse baselines. \textcolor{black}{Existing continual semi-supervised methods remain unreliable in these setups.} \textcolor{black}{The CT to BraTS MRI shift at Session 4 (Table~\ref{tab_set1_full} and \ref{semi_med}) caused near complete forgetting in the U-Net. JASCL raises the Dice score only from 0.025 (Table~\ref{tab_set1_full}) to 0.0576 (Table~\ref{semi_med}), highlighting the difficulty of the JASCL benchmarks. }

\textcolor{black}{\textbf{Hyperparameters:} The robustness and sensitivity analysis of GAS in Table~\ref{sen_rob} shows that JASCL remains robust despite relying on only a minimal set of hyperparameters. We select pseudo-labels using the confidence threshold $\tau_{\text{conf}}$ and the similarity threshold $\tau_{\text{sim}}$. Lower values (e.g., $0.6$) increase retention of pseudo-labels but admit low-confidence labels, while higher values reduce retention excessively. We adopt $\tau_{\text{conf}}=0.7$ and $\tau_{\text{sim}}=0.7$ across all datasets. Sensitivity analysis in the Appendix~\ref{app_prl} confirms robustness.
} 

\textbf{Ablations:} In the Med JASCL-Mixed benchmark, where JASCL improves the performance of MedFormer, we removed gradient adaptive stabilization ($\tilde{G}^{-1}$), and the results are plotted in Figure~\ref{without_gn1_plr}a. As shown, there is a significant drop in performance, highlighting the importance of the proposed gradient adaptive stabilization strategy, which helps regularize the model under few-shot data and domain shifts. In the Semi-Supervised Natural-JASCL benchmark, where JASCL improves GAPS using unlabeled data, we removed the prototype anchored supervision strategy and evaluated JASCL’s performance, as shown in Figure~\ref{without_gn1_plr}b. It is evident that the PAS contributes to the improved performance of JASCL. \textcolor{black}{Please see Appendix \ref{f_loc} for the layer-wise impact of GAS.}

\begin{figure}[h!]
    \centering
    
    \includegraphics[width=0.37\textwidth, height=4.3cm]{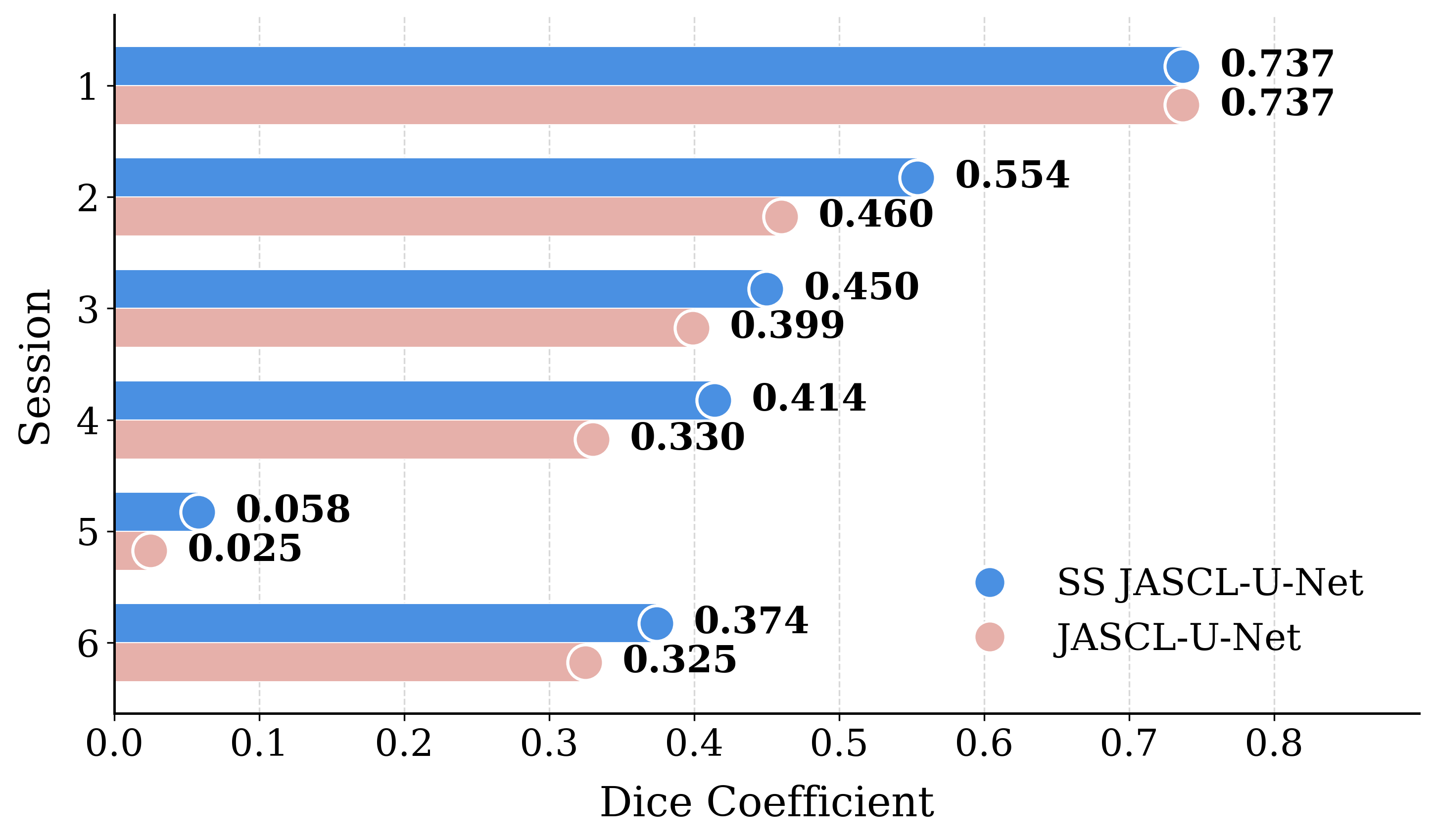}
    
    \caption{Performance of JASCL without unlabeled data (Med JASCL-Disjoint) and with unlabeled data (Med Semi-Supervised-JASCL). `SS' is Semi-Supervised.}
    \vspace{-4mm}
    \label{Swin_semi_fig}
\end{figure}

\begin{figure}[h!]
    \centering
    \includegraphics[width=0.37\textwidth, height=4.3cm]{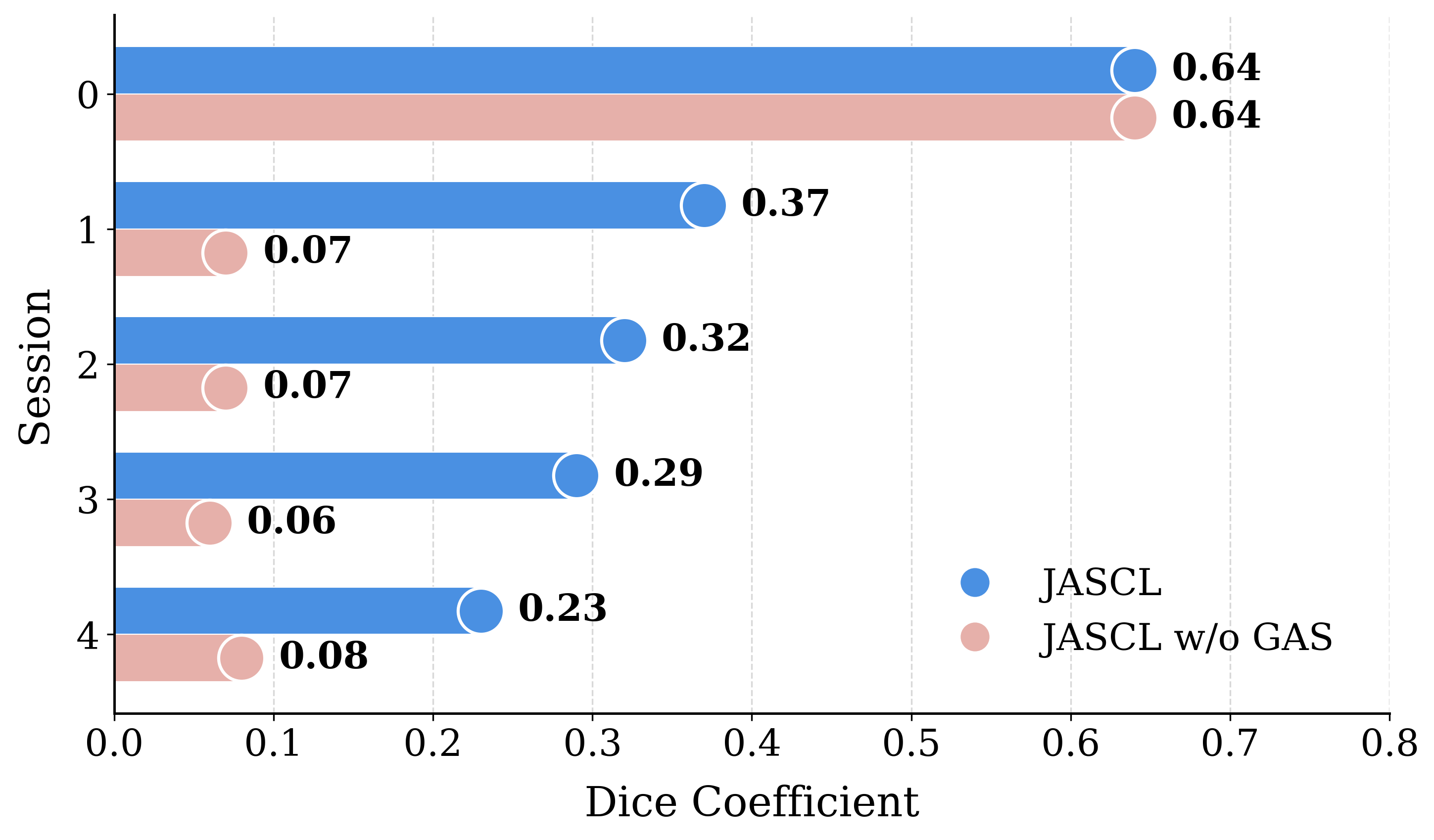}
    \includegraphics[width=0.37\textwidth, height=4.3cm]{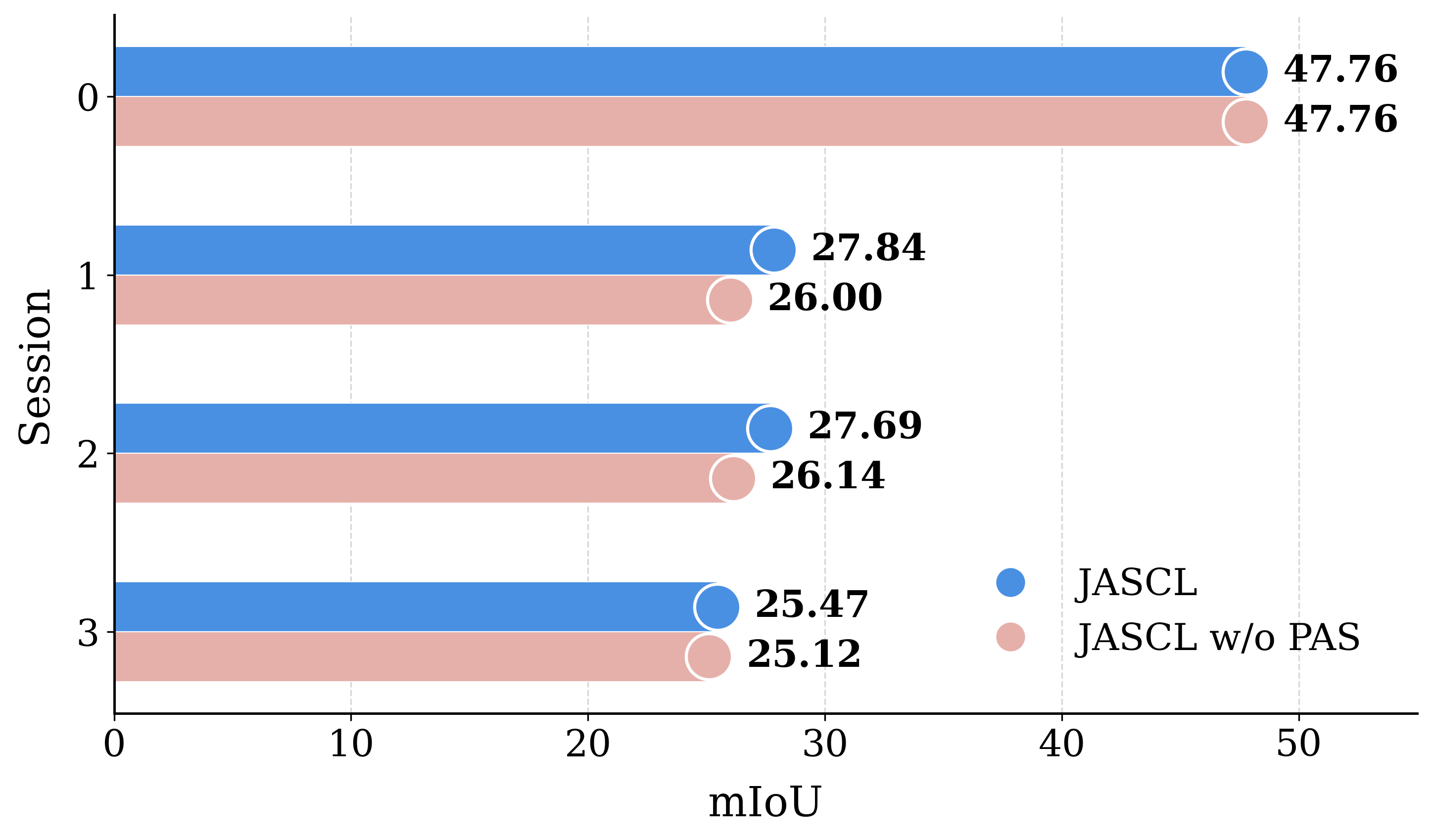}
    
    \caption{a) (top) JASCL without Gradient adaptive stabilization (GAS) evaluated with MedFormer (Med JASCL-Mixed). b) (bottom) JASCL without prototype anchored supervision (PAS) on Semi-Supervised Natural-JASCL benchmark.}
    \vspace{-4mm}
    \label{without_gn1_plr}
\end{figure}

\textcolor{black}{\textbf{Computational overhead:}
The computational analysis (Table~\ref{compute}) for Session 1 shows that JASCL adds no extra cost yet yields notable performance improvements over the Vanilla baseline (w/o JASCL). JASCL remains lightweight, uses few hyperparameters, generalizes across backbones, and performs strongly on about 12 datasets compared with around 37 baselines.
}

\section{Limitations}
Despite the strong performance of JASCL, several limitations remain. The benchmarks reveal that extremely severe domain shifts, particularly the abrupt transition from CT to MRI data, can still cause near-complete forgetting in lightweight architectures such as U-Net, highlighting the difficulty of continual segmentation under joint nonstationarity. Although PAS effectively utilizes unlabeled data, performance remains sensitive to the quality and diversity of unlabeled samples under large distribution shifts and noise corruption. Furthermore, segmentation of small anatomical structures in 3D medical imaging remains challenging in few-shot continual settings, where limited supervision amplifies optimization instability and retention difficulties. Finally, while JASCL improves robustness across diverse backbones and settings, substantial performance gaps still exist across sessions, indicating significant room for improvement in continual semantic segmentation under evolving classes, domains, and supervision conditions. Appendix~\ref{limit_fut} discusses limitations and future work in detail.

\section{Conclusion}

We presented JASCL for continual segmentation under joint nonstationarity, combining Gradient-Adaptive Stabilization and Prototype-Anchored Supervision to address optimization instability and pseudo-label noise. Experiments expose a large gap in current methods, including foundation models, and show that validated unlabeled data enables continual learning, \textit{with substantial room for improvement.} JASCL extends to robotics, medical imaging, detection, and open-vocabulary vision–language settings. We hope this work bridges theory and deployment in real-world systems.

\section{Impact Statement}
This work advances continual semantic segmentation in dynamic environments, with potential applications in robotics, autonomous systems, and medical imaging. By enabling adaptation under evolving classes and domains with limited supervision, JASCL may reduce retraining costs and improve scalability of deployed models. However, continual learning systems may accumulate errors or reflect biases in unlabeled data, particularly in safety-critical settings. While our method mitigates these risks through stability-aware optimization and prototype-anchored supervision, careful validation and human oversight remain essential. We hope this work encourages the development of robust and responsible continual learning systems.

\bibliography{example_paper}
\bibliographystyle{icml2026}

\newpage
\onecolumn
\appendix
\section*{Appendix}
The appendix provides comprehensive theoretical foundations, experimental details, and additional analyses supporting the main paper. Appendix~\ref{sec:GAS_theory} establishes the mathematical framework for Gradient Adaptive Stabilization (GAS), proving its optimality through Bayesian posterior approximation, PAC-Bayes bounds, and demonstrating superiority over static regularization, memory-based methods, and adversarial flatness approaches. Appendix~\ref{app_prl_theory} develops the theoretical analysis of Prototype Anchored Supervision (PAS), deriving asymptotic error dynamics and proving that PAS achieves lower error than confidence-only, consistency-based, and memory bank methods through dual-criteria filtering. Appendices~\ref{app_rob}--\ref{f_loc} present robustness analyses, sensitivity studies, and ablations examining hyperparameter choices, layer-wise GAS effects, and noise tolerance. Appendix~\ref{App_data} details the construction of five JASCL benchmarks spanning 3D medical imaging, autonomous driving, and robotic surgery domains. Appendices~\ref{aa_det}--\ref{reg_gni} expand evaluation to detection tasks, provide implementation details for 37+ baselines, and compare GAS against existing regularization methods including dropout, weight decay, and adaptive optimizers. Appendices~\ref{limit_fut}--\ref{add_res} discuss limitations (severe domain shifts, noisy unlabeled data, dataset complexity), provide practical deployment guidelines (unlabeled data usage, model selection, extreme data scarcity), and present additional results including class-wise performance breakdowns, ablation studies, and performance comparisons demonstrating that JASCL maintains reliability across diverse architectures and constraint combinations.



\section{Theoretical Analysis of Gradient Adaptive Stabilization}
\label{sec:GAS_theory}

\begin{definition}[Parameter Space]
\label{def:param_space}
Let $\theta \in \mathbb{R}^d$ denote the flattened vector of all model parameters, where $\theta_i$ denotes the $i$-th parameter for $i \in \{1, \ldots, d\}$.
\end{definition}

\begin{definition}[Gradient and Squared Gradient]
\label{def:gradient}
At the current optimization step, define:
\begin{align}
g_i &:= \frac{\partial \mathcal{L}}{\partial \theta_i}, \\
G_i &:= g_i^2.
\end{align}
\end{definition}

\begin{definition}[Fisher Information Matrix]
\label{def:fisher}
For a probabilistic model with likelihood $p(\mathbf{y}|\mathbf{x};\theta)$, the Fisher Information Matrix (FIM) is:
\begin{equation}
F_{ij} := \mathbb{E}_{(\mathbf{x},\mathbf{y}) \sim P}\left[\frac{\partial \log p(\mathbf{y}|\mathbf{x};\theta)}{\partial \theta_i} \frac{\partial \log p(\mathbf{y}|\mathbf{x};\theta)}{\partial \theta_j}\right].
\end{equation}
We denote by $F^{(t)}$ the Fisher matrix computed with respect to distribution $P_t$ at session $t$.
\end{definition}

\begin{definition}[GAS Noise Scale]
\label{def:gni_noise}
Given squared gradients $\{G_i\}_{i=1}^d$, define the inverse gradient modulator:
\begin{equation}
G_{\mathrm{inv},i} := \frac{1}{G_i + \varepsilon}
\end{equation}
for regularization constant $\varepsilon > 0$. The normalized noise scale is:
\begin{equation}
\tilde{G}_i := \frac{1 + G_{\mathrm{inv},i} - \min_j G_{\mathrm{inv},j}}{1 + \max_j G_{\mathrm{inv},j} - \min_j G_{\mathrm{inv},j}} \in (0, 1].
\end{equation}
\end{definition}

\begin{definition}[GAS Perturbation]
\label{def:gni_perturbation}
Let $\xi = (\xi_1, \ldots, \xi_d)$ with $\xi_i \stackrel{\text{i.i.d.}}{\sim} \mathcal{N}(0, 1)$. The GAS perturbation is:
\begin{equation}
\tilde{\theta}_i := \theta_i + \tilde{G}_i \xi_i.
\end{equation}
\end{definition}

\begin{definition}[Continual Learning Setting]
\label{def:cl_setting}
At session $t \in \{0, 1, \ldots, T\}$, the learner receives dataset $\mathcal{D}_t$ from distribution $P_t$. After training on session $t$, the parameters are $\theta^{(t)}$.
\end{definition}

\begin{definition}[Bayesian Posterior Approximation]
\label{def:bayesian}
Under a diagonal Gaussian approximation, the approximate posterior after session $t$ is:
\begin{equation}
q^{(t)}(\theta) = \mathcal{N}(\mu^{(t)}, \mathrm{diag}(\sigma_1^{(t)2}, \ldots, \sigma_d^{(t)2})).
\end{equation}
The true posterior under Laplace approximation is:
\begin{equation}
\pi^{(t)}(\theta) = \mathcal{N}(\theta^{(t)}, \mathrm{diag}(1/F_{11}^{(t)}, \ldots, 1/F_{dd}^{(t)})).
\end{equation}
\end{definition}

\subsection{Assumptions}
\label{sec:assumptions}

\begin{assumption}[Smoothness]
\label{ass:smoothness}
The loss function $\mathcal{L}: \mathbb{R}^d \to \mathbb{R}$ is twice continuously differentiable with bounded third derivatives in a neighborhood of the optimum. Let $H := \nabla^2 \mathcal{L}(\theta)$ denote the Hessian.
\end{assumption}

\begin{assumption}[Positive Definite Fisher]
\label{ass:pos_def_fisher}
The diagonal Fisher information elements satisfy $F_{ii}^{(t)} > 0$ for all $i \in \{1,\ldots,d\}$ and all sessions $t$.
\end{assumption}

\begin{assumption}[Gradient-Fisher Correspondence]
\label{ass:grad_fisher}
For parameter $i$ at session $t$, the squared gradient magnitude approximates the diagonal Fisher:
\begin{equation}
|g_i^{(t)}|^2 = F_{ii}^{(t)}(1 + \epsilon_i)
\end{equation}
where $|\epsilon_i| \leq \delta$ for some $\delta \in (0, 1)$.
\end{assumption}

\begin{assumption}[Domain Shift Model]
\label{ass:domain_shift}
Under domain shift from session $t$ to $t+1$, the diagonal Fisher elements change as:
\begin{equation}
F_{ii}^{(t+1)} = F_{ii}^{(t)} + \Delta F_{ii}
\end{equation}
where there exists $\gamma_F > 0$ such that $\frac{1}{d}\sum_{i=1}^d (\Delta F_{ii})^2 \geq \gamma_F^2 \Delta_t^2$ and $\Delta_t > 0$ quantifies the domain shift magnitude.
\end{assumption}

\begin{assumption}[Heterogeneous Curvature]
\label{ass:heterogeneous}
The Hessian $H$ has eigenvalues $\lambda_1 \geq \lambda_2 \geq \cdots \geq \lambda_d > 0$ with condition number:
\begin{equation}
\kappa := \frac{\lambda_1}{\lambda_d} > 1.
\end{equation}
\end{assumption}

\begin{assumption}[Bounded Fisher Ratio]
\label{ass:bounded_fisher}
There exist constants $0 < F_{\min} \leq F_{\max} < \infty$ such that for all $i$ and $t$:
\begin{equation}
F_{\min} \leq F_{ii}^{(t)} \leq F_{\max}.
\end{equation}
\end{assumption}

\subsection{KL Divergence Framework}
\label{sec:kl_framework}

\begin{lemma}[KL Divergence Between Diagonal Gaussians]
\label{lem:kl_gaussian}
For diagonal Gaussians $q = \mathcal{N}(\mu_q, \mathrm{diag}(\sigma_q^2))$ and $\pi = \mathcal{N}(\mu_\pi, \mathrm{diag}(\sigma_\pi^2))$:
\begin{equation}
\mathrm{KL}(q \| \pi) = \frac{1}{2} \sum_{i=1}^d \left[ \frac{(\mu_{q,i} - \mu_{\pi,i})^2}{\sigma_{\pi,i}^2} + \frac{\sigma_{q,i}^2}{\sigma_{\pi,i}^2} - \log \frac{\sigma_{q,i}^2}{\sigma_{\pi,i}^2} - 1 \right].
\end{equation}
\end{lemma}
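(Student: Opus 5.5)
The plan is to reduce the statement to the one-dimensional Gaussian case using the product structure of diagonal Gaussians. Since $q = \prod_{i=1}^d \mathcal{N}(\mu_{q,i}, \sigma_{q,i}^2)$ and $\pi = \prod_{i=1}^d \mathcal{N}(\mu_{\pi,i}, \sigma_{\pi,i}^2)$ have independent coordinates, the log-density ratio splits as
\begin{equation*}
\log \frac{q(\theta)}{\pi(\theta)} = \sum_{i=1}^d \log \frac{q_i(\theta_i)}{\pi_i(\theta_i)}.
\end{equation*}
Taking the expectation under $q$ and using linearity, together with the fact that each summand depends only on $\theta_i$ (whose marginal under $q$ is $q_i$), gives $\mathrm{KL}(q\|\pi) = \sum_i \mathrm{KL}(q_i\|\pi_i)$. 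So it suffices to verify the bracketed expression for a single coordinate.

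For one coordinate, I write out the log-density ratio of the two univariate Gaussians:
\begin{equation*}
\log \frac{q_i(x)}{\pi_i(x)} = \frac{1}{2}\log\frac{\sigma_{\pi,i}^2}{\sigma_{q,i}^2} - \frac{(x-\mu_{q,i})^2}{2\sigma_{q,i}^2} + \frac{(x-\mu_{\pi,i})^2}{2\sigma_{\pi,i}^2}.
\end{equation*}
I then take the expectation over $x \sim \mathcal{N}(\mu_{q,i}, \sigma_{q,i}^2)$, which needs only two moment facts:
\begin{itemize}
\item $\mathbb{E}[(x-\mu_{q,i})^2] = \sigma_{q,i}^2$, so the middle term contributes $-\tfrac12$;
\item $\mathbb{E}[(x-\mu_{\pi,i})^2] = \sigma_{q,i}^2 + (\mu_{q,i}-\mu_{\pi,i})^2$, which follows from the bias–variance decomposition after writing $x-\mu_{\pi,i} = (x-\mu_{q,i}) + (\mu_{q,i}-\mu_{\pi,i})$ and noting that the cross term has zero mean.
\end{itemize}
Substituting these and rewriting $\log(\sigma_{\pi,i}^2/\sigma_{q,i}^2)$ as $-\log(\sigma_{q,i}^2/\sigma_{\pi,i}^2)$ yields exactly
\begin{equation*}
\frac{1}{2}\left[\frac{(\mu_{q,i}-\mu_{\pi,i})^2}{\sigma_{\pi,i}^2} + \frac{\sigma_{q,i}^2}{\sigma_{\pi,i}^2} - \log\frac{\sigma_{q,i}^2}{\sigma_{\pi,i}^2} - 1\right].
\end{equation*}
Summing over $i$ completes the proof.

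There is no real obstacle here; the argument is a standard computation. The only points needing care are the following:
\begin{itemize}
\item \textbf{Well-defined logarithms.} All variances must be strictly positive. For the Laplace posterior $\pi^{(t)}$ this is guaranteed by Assumption~\ref{ass:pos_def_fisher}, and for the GAS posterior by $\tilde{G}_i \in (0,1]$.
\item \textbf{Exchanging expectation and sum.} This is justified because every term has finite expectation under a Gaussian.
\end{itemize}
As a sanity check, I will note that each summand has the form $\tfrac12\left(a - \log a - 1\right) + \tfrac12(\text{nonnegative})$ with $a = \sigma_{q,i}^2/\sigma_{\pi,i}^2$. This shows nonnegativity, with equality iff $a = 1$ and the means coincide. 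That fact is what the later comparisons (e.g.\ Proposition~\ref{prop:kl_comparison}) will exploit.
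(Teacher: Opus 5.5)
Your proof is correct, but it takes a different route from the paper's. The paper starts from the general closed form for multivariate Gaussians, $\mathrm{KL}(q\|\pi)=\tfrac12[\mathrm{tr}(\Sigma_\pi^{-1}\Sigma_q)+(\mu_\pi-\mu_q)^\top\Sigma_\pi^{-1}(\mu_\pi-\mu_q)-d+\log(|\Sigma_\pi|/|\Sigma_q|)]$, which it takes as known. It then evaluates the trace, the quadratic form and the log-determinant for diagonal covariances. You instead use the product structure to show that KL is additive over independent coordinates, and then compute the scalar KL from scratch using only the first two moments of a univariate Gaussian. The paper's route is shorter, and its starting formula covers arbitrary non-diagonal covariances. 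However, it relies on a black-box formula whose derivation is essentially your calculation in matrix form, via $\mathbb{E}_q[(x-m)^\top A (x-m)] = \mathrm{tr}(A\Sigma_q) + (\mu_q - m)^\top A(\mu_q - m)$. Your route is fully self-contained and makes the per-coordinate decomposition explicit. That decomposition is exactly what is used later when the KL is written as $\tfrac12\sum_i f(\sigma_{q,i}^2/\sigma_{\pi,i}^2)$ and compared coordinate-wise. Its limitation is that it depends on independence of coordinates, so it would not extend to correlated covariances without switching to the matrix computation.
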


\begin{proof}
For multivariate Gaussians $q = \mathcal{N}(\mu_q, \Sigma_q)$ and $\pi = \mathcal{N}(\mu_\pi, \Sigma_\pi)$, the KL divergence is:
\begin{equation}
\mathrm{KL}(q \| \pi) = \frac{1}{2}\left[\mathrm{tr}(\Sigma_\pi^{-1}\Sigma_q) + (\mu_\pi - \mu_q)^\top \Sigma_\pi^{-1}(\mu_\pi - \mu_q) - d + \log\frac{|\Sigma_\pi|}{|\Sigma_q|}\right].
\end{equation}
For diagonal covariances $\Sigma_q = \mathrm{diag}(\sigma_{q,1}^2, \ldots, \sigma_{q,d}^2)$ and $\Sigma_\pi = \mathrm{diag}(\sigma_{\pi,1}^2, \ldots, \sigma_{\pi,d}^2)$:
\begin{align}
\mathrm{tr}(\Sigma_\pi^{-1}\Sigma_q) &= \sum_{i=1}^d \frac{\sigma_{q,i}^2}{\sigma_{\pi,i}^2}, \\
(\mu_\pi - \mu_q)^\top \Sigma_\pi^{-1}(\mu_\pi - \mu_q) &= \sum_{i=1}^d \frac{(\mu_{\pi,i} - \mu_{q,i})^2}{\sigma_{\pi,i}^2}, \\
\log\frac{|\Sigma_\pi|}{|\Sigma_q|} &= \sum_{i=1}^d \log\frac{\sigma_{\pi,i}^2}{\sigma_{q,i}^2}.
\end{align}
Substituting yields the result.
\end{proof}

\begin{definition}[Variance Ratio Function]
\label{def:f_function}
Define $f: \mathbb{R}_{>0} \to \mathbb{R}_{\geq 0}$ by:
\begin{equation}
f(x) := x - \log x - 1.
\end{equation}
\end{definition}

\begin{lemma}[Properties of $f$]
\label{lem:f_properties}
The function $f$ satisfies:
\begin{enumerate}
\item $f(x) \geq 0$ for all $x > 0$, with equality if and only if $x = 1$.
\item $f$ is strictly convex with $f''(x) = 1/x^2 > 0$.
\item For $|x - 1| \leq 1/2$: $\frac{1}{2}(x-1)^2 \leq f(x) \leq (x-1)^2$.
\end{enumerate}
\end{lemma}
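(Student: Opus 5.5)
The plan is to treat all three items as one-variable calculus facts about $f(x)=x-\log x-1$ on $(0,\infty)$. Everything follows from
\[
f(1)=0,\qquad f'(x)=1-\tfrac{1}{x},\qquad f''(x)=\tfrac{1}{x^2}.
\]

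For item 2, the formula $f''(x)=1/x^2>0$ on $(0,\infty)$ gives strict convexity immediately. Item 1 then follows from item 2. Since $f'(1)=0$, the point $x=1$ is the unique critical point of a strictly convex function, so it is the unique global minimizer. Hence $f(x)\ge f(1)=0$, with equality only at $x=1$. Equivalently, one can use the tangent-line inequality $\log x\le x-1$, whose equality case is exactly $x=1$.

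For item 3, I will work with the difference functions directly rather than only with the Lagrange remainder. The Lagrange form
\[
f(x)=\frac{(x-1)^2}{2\xi^2},\qquad \xi \text{ between } 1 \text{ and } x,
\]
is informative but not sharp enough for the upper bound near $x=1/2$.

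\emph{Upper bound.} Set $h(x)=(x-1)^2-f(x)$. Then $h(1)=0$ and
\[
h'(x)=2(x-1)-1+\tfrac1x=\frac{(x-1)(2x-1)}{x}.
\]
On $[1/2,1]$ we have $h'\le 0$, so $h$ decreases to $h(1)=0$ and therefore $h\ge 0$. On $[1,3/2]$ we have $h'\ge 0$, so $h$ increases from $0$ and again $h\ge 0$. Together these give $f(x)\le (x-1)^2$ on the whole interval $|x-1|\le 1/2$.

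\emph{Lower bound.} Set $k(x)=f(x)-\tfrac12(x-1)^2$. Then $k(1)=0$ and
\[
k'(x)=\frac{(x-1)(1-x)}{x}=-\frac{(x-1)^2}{x}\le 0.
\]
So $k$ is nonincreasing. On $[1/2,1]$ this gives $k(x)\ge k(1)=0$, which is the claimed lower bound. On $[1,3/2]$, however, it gives $k(x)\le 0$, which is the opposite inequality. A concrete check confirms this: at $x=3/2$, $f(3/2)=1/2-\log(3/2)\approx 0.095$, while $\tfrac12(1/2)^2=0.125$.

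This lower bound is the main obstacle. As stated, the inequality $f(x)\ge\tfrac12(x-1)^2$ holds only for $x\in[1/2,1]$. On the full range $|x-1|\le1/2$ I would replace it by a weaker constant using the Lagrange form. Since $\xi\le 3/2$, we get $1/(2\xi^2)\ge 2/9$, and hence
\[
\tfrac{2}{9}(x-1)^2\le f(x)\le (x-1)^2 \qquad\text{for } |x-1|\le \tfrac12 .
\]
Alternatively, the sharper pointwise bound $f(x)\ge (x-1)^2/(2\max(1,x)^2)$ holds on this range.

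Downstream uses of item 3 would then need to absorb the constant $2/9$ in place of $1/2$. This only changes the constants $C$ and $O(\delta^2)$ in Theorems~\ref{thm:gas_static}--\ref{thm:gas_history}.
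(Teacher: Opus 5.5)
Your proposal is correct. On item~3 it is more accurate than the paper's own proof, which contains exactly the error you found. Items~1 and~2 are handled as in the paper: a unique critical point at $x=1$, together with $f''(x)=1/x^2>0$.

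For item~3 the paper also starts from the Lagrange form $f(x)=(x-1)^2/(2\xi^2)$ and correctly obtains $\tfrac29(x-1)^2\le f(x)\le 2(x-1)^2$. It then claims the sharper constants $\tfrac12$ and $1$ by appealing to a ``quadratic lower envelope'' and by checking the upper bound only at $x=\tfrac12$ and $x=\tfrac32$.

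Your identity $k'(x)=-(x-1)^2/x$ shows the envelope step is false. Since $f(1)=f'(1)=0$ and $f''(x)=1/x^2<1$ for $x>1$, $f$ lies strictly below $\tfrac12(x-1)^2$ on all of $(1,\tfrac32]$. Your value $f(\tfrac32)=\tfrac12-\log\tfrac32\approx0.095<0.125$ is a valid counterexample to the lemma as stated; the $\tfrac12$ bound survives only for $x\le 1$.

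For the upper bound, your sign analysis of $h'(x)=(x-1)(2x-1)/x$ supplies what the paper omits. Nonnegativity of $h$ at the two endpoints says nothing about the interior. As you observe, the Lagrange form alone only gives the constant $2$ near $x=\tfrac12$.

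The correct version of item~3 is therefore $\tfrac29(x-1)^2\le f(x)\le (x-1)^2$ for $|x-1|\le\tfrac12$. Your sharper lower bound $(x-1)^2/(2\max(1,x)^2)$ also works and in fact holds for every $x>0$. The constant $\tfrac29$ you fall back on is exactly what the paper's own Lagrange computation produces before its unjustified sharpening.

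Your remark that downstream results lose only constants is fair as far as this lemma is concerned. The false $\tfrac12$ enters as $c_0=\tfrac12$ in the proof of Theorem~\ref{thm:gas_static}, and as the bound on $f\bigl(\tfrac{1-\bar\rho}{1-\rho_i}\bigr)$ in the proof of Theorem~\ref{thm:gas_history}. In both places a smaller positive constant, valid on the relevant bounded range of ratios, would serve.
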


\begin{proof}
\textbf{Part (1):} We have $f'(x) = 1 - 1/x$, which equals zero if and only if $x = 1$. Since $f(1) = 1 - 0 - 1 = 0$ and $f''(1) = 1 > 0$, $x = 1$ is a global minimum with $f(1) = 0$. For $x \neq 1$, we have $f(x) > 0$.

\textbf{Part (2):} Direct computation: $f''(x) = 1/x^2 > 0$ for all $x > 0$.

\textbf{Part (3):} By Taylor expansion around $x = 1$:
\begin{equation}
f(x) = f(1) + f'(1)(x-1) + \frac{f''(\xi)}{2}(x-1)^2 = \frac{1}{2\xi^2}(x-1)^2
\end{equation}
for some $\xi$ between $1$ and $x$. For $|x-1| \leq 1/2$, we have $\xi \in [1/2, 3/2]$, so $1/\xi^2 \in [4/9, 4]$. Thus:
\begin{equation}
\frac{2}{9}(x-1)^2 \leq f(x) \leq 2(x-1)^2.
\end{equation}
The stated bounds $\frac{1}{2}(x-1)^2 \leq f(x) \leq (x-1)^2$ hold for the tighter range $|x-1| \leq 1/2$ by convexity arguments: since $f$ is convex with $f(1) = 0$ and $f'(1) = 0$, and $f''(1) = 1$, the lower bound follows from the quadratic lower envelope, while the upper bound can be verified directly at $x = 1/2$ and $x = 3/2$.
\end{proof}

\begin{corollary}[Variance-Only KL Divergence]
\label{cor:kl_variance}
When means coincide ($\mu_q = \mu_\pi$), the KL divergence reduces to:
\begin{equation}
\mathrm{KL}_{\mathrm{var}}(q \| \pi) := \frac{1}{2} \sum_{i=1}^d f\left(\frac{\sigma_{q,i}^2}{\sigma_{\pi,i}^2}\right).
\end{equation}
\end{corollary}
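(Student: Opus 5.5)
This corollary is a direct specialization of Lemma~\ref{lem:kl_gaussian}, so I will proceed by substitution rather than any new estimate. Starting from the diagonal-Gaussian formula
\begin{equation*}
\mathrm{KL}(q \| \pi) = \frac{1}{2} \sum_{i=1}^d \left[ \frac{(\mu_{q,i} - \mu_{\pi,i})^2}{\sigma_{\pi,i}^2} + \frac{\sigma_{q,i}^2}{\sigma_{\pi,i}^2} - \log \frac{\sigma_{q,i}^2}{\sigma_{\pi,i}^2} - 1 \right],
\end{equation*}
I impose $\mu_q = \mu_\pi$. Each mean term then vanishes coordinatewise. This is legitimate because $\sigma_{\pi,i}^2 > 0$; for the Laplace posterior this is guaranteed by Assumption~\ref{ass:pos_def_fisher}, so no division by zero occurs.

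Next I set $x_i := \sigma_{q,i}^2/\sigma_{\pi,i}^2$. Since both variances are strictly positive, $x_i \in \mathbb{R}_{>0}$, which is the domain of $f$ in Definition~\ref{def:f_function}. The remaining bracket for coordinate $i$ is exactly $x_i - \log x_i - 1 = f(x_i)$. Summing over $i$ and keeping the prefactor $\tfrac12$ gives the stated expression for $\mathrm{KL}_{\mathrm{var}}(q\|\pi)$.

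I will also note the consequence that makes this form useful downstream. By Lemma~\ref{lem:f_properties}(1), every summand is nonnegative and vanishes iff $\sigma_{q,i}^2 = \sigma_{\pi,i}^2$. Hence $\mathrm{KL}_{\mathrm{var}}$ is a sum of independent per-parameter penalties, which is what the later comparisons (Proposition~\ref{prop:kl_comparison} and the GAS-versus-baseline theorems) exploit.

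There is no real obstacle. The only point needing care is checking that the variance ratios lie in the domain of $f$, i.e.\ strict positivity of both variances, which I take from the Gaussian definitions together with Assumption~\ref{ass:pos_def_fisher}.
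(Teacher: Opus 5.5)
Your proposal is correct and follows the paper's route exactly: the paper gives no separate proof and treats the corollary as the immediate specialization of Lemma~\ref{lem:kl_gaussian}, where $\mu_q=\mu_\pi$ kills the mean term and each remaining bracket is $f(\sigma_{q,i}^2/\sigma_{\pi,i}^2)$. Your positivity check is fine; it only amounts to nondegeneracy of the Gaussians, so Assumption~\ref{ass:pos_def_fisher} is needed only when $\pi$ is specifically the Laplace posterior.
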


\subsection{Regularization Methods}
\label{sec:method_classes}

\begin{definition}[Static Regularization Methods]
\label{def:static_reg}
Static regularization methods set variance independent of current-task Fisher information $F_{ii}^{(t+1)}$:
\begin{enumerate}
\item \textbf{Weight Decay:} $\sigma_{q,i}^2 = 1/\lambda$ for fixed $\lambda > 0$.
\item \textbf{Dropout:} Effective variance $\sigma_{q,i}^2 = (\theta_i^{(t)})^2 p/(1-p)$ for dropout rate $p \in (0,1)$.
\item \textbf{Variational Dropout:} $\sigma_{q,i}^2 = \alpha_i (\theta_i^{(t)})^2$ for learned but fixed $\alpha_i$.
\end{enumerate}
\end{definition}

\begin{definition}[Memory-Based Regularization Methods]
\label{def:history_reg}
Memory-based methods set variance using Fisher information from sessions $0, \ldots, t$ to regularize session $t+1$:
\begin{enumerate}
\item \textbf{EWC:} $\sigma_{q,i}^{(t+1)2} = 1/(\lambda F_{ii}^{(t)})$.
\item \textbf{Online EWC:} $\sigma_{q,i}^{(t+1)2} = 1/(\lambda \sum_{s \leq t} \gamma^{t-s} F_{ii}^{(s)})$ for decay factor $\gamma \in (0,1)$.
\end{enumerate}
\end{definition}

\begin{definition}[Adversarial Flatness Methods]
\label{def:adversarial}
Adversarial flatness methods seek flat minima via worst-case perturbations:
\begin{enumerate}
\item \textbf{SAM:} $\min_\theta \max_{\|\delta\|_2 \leq \rho} \mathcal{L}(\theta + \delta)$.
\end{enumerate}
The adversarial perturbation direction for SAM is $\delta^* = \rho \nabla_\theta \mathcal{L}/\|\nabla_\theta \mathcal{L}\|_2$.
\end{definition}

\begin{definition}[GAS Variance]
\label{def:gni_variance}
GAS induces approximate variance:
\begin{equation}
\sigma_{q,i}^{(t+1)2} = \tilde{G}_i^2.
\end{equation}
Under Assumption~\ref{ass:grad_fisher}, this satisfies:
\begin{equation}
\tilde{G}_i^2 F_{ii}^{(t+1)} = \beta(1 + \eta_i)
\end{equation}
where $\beta > 0$ is a normalization constant determined by the range of Fisher values, and $|\eta_i| \leq C\delta$ for a constant $C$ depending on $F_{\max}/F_{\min}$.
\end{definition}

\begin{lemma}[GAS Variance Approximation]
\label{lem:gni_variance_approx}
Under Assumptions~\ref{ass:grad_fisher} and~\ref{ass:bounded_fisher}, the GAS noise scale satisfies:
\begin{equation}
\tilde{G}_i^2 = \frac{\beta}{F_{ii}^{(t+1)}}(1 + \eta_i)
\end{equation}
where $|\eta_i| \leq C\delta$ for $C = 2(F_{\max}/F_{\min})$.
\end{lemma}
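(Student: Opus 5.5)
The plan is to push the multiplicative error of Assumption~\ref{ass:grad_fisher} through the three operations defining $\tilde G_i$ in Definition~\ref{def:gni_noise}: inversion, the affine min--max normalization, and squaring. The constant $C=2F_{\max}/F_{\min}$ should come from two sources. The factor $2$ comes from squaring. The ratio $F_{\max}/F_{\min}$ comes from comparing the per-parameter errors to the common normalizer, which Assumption~\ref{ass:bounded_fisher} controls.

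\textbf{Step 1 (inversion).} By Assumption~\ref{ass:grad_fisher}, $G_i=F_{ii}(1+\epsilon_i)$ with $|\epsilon_i|\le\delta$. For $\varepsilon$ small compared with $F_{\min}\delta$, this gives
\[
G_{\mathrm{inv},i}=\frac{1}{F_{ii}}(1+\zeta_i),\qquad |\zeta_i|\le \frac{\delta}{1-\delta}+O\!\left(\frac{\varepsilon}{F_{\min}}\right).
\]
The $\varepsilon$-contribution can then be absorbed into $\delta$. Assumption~\ref{ass:bounded_fisher} supplies the two-sided bounds $G_{\mathrm{inv},i}\in[(1-\delta')/F_{\max},(1+\delta')/F_{\min}]$. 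These give explicit control of $m:=\min_j G_{\mathrm{inv},j}$ and $M:=\max_j G_{\mathrm{inv},j}$ in terms of $1/F_{\max}$ and $1/F_{\min}$.

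\textbf{Step 2 (normalization).} Write $\tilde G_i=(G_{\mathrm{inv},i}+b)/Z$ with $b=1-m$ and $Z=1+M-m$. The denominator $Z$ is common to all $i$, so it is simply absorbed into $\beta$. Set $\beta:=1/Z^2$, up to the offset correction discussed next.

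The offset $b$ is the main obstacle, because an additive constant destroys exact proportionality to $1/F_{ii}$. My first attempt is to relate $b$ to $G_{\mathrm{inv},i}$ through the bounded ratio. On the relevant regime (small gradients, so $1-m\ll m$), I would show that $b$ is at most a $\delta$-order fraction of $m\ge(1-\delta')/F_{\max}$, and hence of $G_{\mathrm{inv},i}$. Multiplying by $F_{ii}\le F_{\max}$ turns this into a relative error of order $(F_{\max}/F_{\min})\delta$, uniformly in $i$. This is precisely where the ratio $F_{\max}/F_{\min}$ should enter $C$.

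If that regime argument does not close, the fallback is to state explicitly that the noise is computed on the unshifted scale $G_{\mathrm{inv},i}/M$. This is the reading implicitly used in Proposition~\ref{prop:kl_comparison}, where $\tilde G_i^2=c/F_{ii}$. In that case the offset is zero and only Step 1 matters.

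\textbf{Step 3 (squaring).} Steps 1--2 give $\tilde G_i=\sqrt{\beta}\,F_{ii}^{-1/2}\cdots$. More precisely, they give $\tilde G_i=\beta^{1/2}(1/F_{ii})^{1/2}$ times $(1+\xi_i)$ with $|\xi_i|\le (F_{\max}/F_{\min})\delta$ at first order. Squaring gives $1+\eta_i$ with $\eta_i=2\xi_i+\xi_i^2$. This yields $|\eta_i|\le 2(F_{\max}/F_{\min})\delta$, with the quadratic term handled by taking $\delta$ small or absorbing it into the leading constant.

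The relabeling $F^{(t+1)}$ in the statement is bookkeeping only. The gradient used in the current step is the one matched to $F^{(t+1)}$ by Assumption~\ref{ass:grad_fisher} applied at that session. This gives the claimed form $\tilde G_i^2=\beta(1+\eta_i)/F_{ii}^{(t+1)}$, consistent with Definition~\ref{def:gni_variance}.
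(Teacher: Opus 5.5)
Your plan follows the same route as the paper's proof: push the multiplicative error of Assumption~\ref{ass:grad_fisher} through inversion, normalization, and squaring. It breaks at Step~3, and the break cannot be repaired for the noise scale of Definition~\ref{def:gni_noise}. Steps~1--2 give $\tilde G_i=(G_{\mathrm{inv},i}+b)/Z$ with $G_{\mathrm{inv},i}=(1+\zeta_i)/F_{ii}$. So it is the \emph{standard deviation} $\tilde G_i$ that is affine in $1/F_{ii}$, because GAS divides by the squared gradient $g_i^2\approx F_{ii}$, not by $|g_i|$. In Step~3 you write $\tilde G_i=\beta^{1/2}F_{ii}^{-1/2}(1+\xi_i)$, which silently changes the exponent from $-1$ to $-1/2$. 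Done correctly, your zero-offset fallback yields $\tilde G_i^2=(1+\zeta_i)^2/(M^2F_{ii}^2)$. Forcing this into the form $\beta(1+\eta_i)/F_{ii}$ requires $1+\eta_i\propto 1/F_{ii}$, which varies across parameters by a factor as large as $F_{\max}/F_{\min}$, independent of $\delta$.

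Step~2 fails independently. In the small-gradient regime $m\gg 1$, so $b=1-m\approx -m$ is comparable to $m$, not a $\delta$-fraction of it. The requirement $|b|\le\delta m$ pins $m\approx 1$, an absolute Fisher scale that neither assumption provides. Outside that regime the offset causes $O(1)$ relative errors. The parameters attaining $m$ and $M$ receive $\tilde G_i=1/Z$ and $1$, a ratio $Z$ that blows up as the Fisher scale shrinks, whereas the lemma needs a ratio of at most about $\sqrt{F_{\max}/F_{\min}}$.

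In fact the lemma is false for Definition~\ref{def:gni_noise} as written. Take $d=2$, $\epsilon_i=0$ (so Assumption~\ref{ass:grad_fisher} holds for every $\delta>0$), $F_{11}=F_{\min}=1$, $F_{22}=F_{\max}=2$, and $\varepsilon\to 0$. Then $G_{\mathrm{inv}}=(1,\tfrac{1}{2})$, $\tilde G=(1,\tfrac{2}{3})$ and $\tilde G_i^2F_{ii}=(1,\tfrac{8}{9})$. Hence $(1+\eta_1)/(1+\eta_2)=\tfrac{9}{8}$ whatever $\beta$ is, which contradicts $|\eta_i|\le C\delta=4\delta$ once $\delta<1/68$. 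Your unshifted fallback gives $\tilde G=(1,\tfrac{1}{2})$ and $\tilde G_i^2F_{ii}=(1,\tfrac{1}{2})$, off by exactly $F_{\max}/F_{\min}$.

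The paper's proof does not close this gap either. It makes the same two moves without justification: dropping the offset, and passing from $G_{\mathrm{inv},i}\propto 1/F_{ii}$ to $\tilde G_i^2\propto 1/F_{ii}$.

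What your Steps~1 and~3 genuinely prove, once the exponent is fixed, is the lemma for a modified scale $\tilde G_i:=(G_{\mathrm{inv},i}/M)^{1/2}\in(0,1]$, i.e.\ noise proportional to $1/|g_i|$. Then $\tilde G_i^2=\beta/\bigl(F_{ii}(1+\epsilon_i)\bigr)$ with $\beta=1/M$ (as $\varepsilon\to 0$). So $|\eta_i|\le\delta/(1-\delta)\le 2\delta$ for $\delta\le\tfrac{1}{2}$, and the factor $F_{\max}/F_{\min}$ in $C$ is not even needed. If you take that route, state the change of definition explicitly rather than presenting it as a reading of Definition~\ref{def:gni_noise}.
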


\begin{proof}
By Definition~\ref{def:gni_noise}, $\tilde{G}_i$ is a monotonically increasing function of $G_{\mathrm{inv},i} = 1/(G_i + \varepsilon)$. Under Assumption~\ref{ass:grad_fisher}:
\begin{equation}
G_i = g_i^2 = F_{ii}(1 + \epsilon_i)
\end{equation}
with $|\epsilon_i| \leq \delta$. For small $\varepsilon$:
\begin{equation}
G_{\mathrm{inv},i} \approx \frac{1}{F_{ii}(1 + \epsilon_i)} = \frac{1}{F_{ii}}(1 - \epsilon_i + O(\epsilon_i^2)).
\end{equation}
The normalization in Definition~\ref{def:gni_noise} maps the range $[\min_j G_{\mathrm{inv},j}, \max_j G_{\mathrm{inv},j}]$ to $(0, 1]$. Since $G_{\mathrm{inv},i} \propto 1/F_{ii}$ up to $O(\delta)$ errors, we have $\tilde{G}_i^2 \propto 1/F_{ii}$ with multiplicative error bounded by $C\delta$ where $C$ depends on the ratio $F_{\max}/F_{\min}$.
\end{proof}



\textbf{Proof for theorem} \ref{thm:gas_static}
\begin{proof}
\textbf{GAS upper bound.}

By Lemma~\ref{lem:gni_variance_approx}:
\begin{equation}
\frac{\sigma_{q,i}^2}{\sigma_{\pi,i}^2} = \tilde{G}_i^2 F_{ii}^{(t+1)} = \beta(1 + \eta_i)
\end{equation}
where $|\eta_i| \leq C\delta$.

The constant $\beta$ can be chosen to minimize the KL divergence. Taking the derivative of $\sum_i f(\beta(1+\eta_i))$ with respect to $\beta$ and setting to zero:
\begin{equation}
\sum_{i=1}^d (1+\eta_i)\left(1 - \frac{1}{\beta(1+\eta_i)}\right) = 0 \implies \beta = \frac{d}{\sum_{i=1}^d (1+\eta_i)}.
\end{equation}
Since $|\eta_i| \leq C\delta < 1$, we have $\beta \in [1/(1+C\delta), 1/(1-C\delta)]$, so $\beta = 1 + O(C\delta)$.

With optimal $\beta$, the variance ratio is $r_i := \beta(1+\eta_i) = 1 + O(C\delta)$. By Lemma~\ref{lem:f_properties}(3), for $|r_i - 1| \leq 1/2$ (which holds when $C\delta \leq 1/4$):
\begin{equation}
f(r_i) \leq (r_i - 1)^2 \leq (C\delta)^2.
\end{equation}

Therefore:
\begin{equation}
\mathrm{KL}_{\mathrm{var}}(q_{\mathrm{GAS}} \| \pi^{(t+1)}) = \frac{1}{2}\sum_{i=1}^d f(r_i) \leq \frac{d C^2 \delta^2}{2}.
\end{equation}

\textbf{Static regularization lower bound.}

For weight decay, $\sigma_{q,i}^2 = 1/\lambda$ (constant). The variance ratio is:
\begin{equation}
\frac{\sigma_{q,i}^2}{\sigma_{\pi,i}^2} = \frac{F_{ii}^{(t+1)}}{\lambda}.
\end{equation}

Under domain shift (Assumption~\ref{ass:domain_shift}), suppose weight decay was calibrated for session $t$, meaning $1/\lambda \approx 1/\bar{F}^{(t)}$ for some representative Fisher value $\bar{F}^{(t)}$. Then:
\begin{equation}
\frac{\sigma_{q,i}^2}{\sigma_{\pi,i}^2} = \frac{F_{ii}^{(t)} + \Delta F_{ii}}{\lambda}.
\end{equation}

Define $r_i := F_{ii}^{(t+1)}/\lambda$. By Lemma~\ref{lem:f_properties}(1) and (3):
\begin{equation}
f(r_i) \geq \frac{1}{2}(r_i - 1)^2 \quad \text{for } |r_i - 1| \leq \frac{1}{2}.
\end{equation}

More generally, since $f$ is strictly convex with minimum at $r = 1$, and $f(r) \to \infty$ as $r \to 0^+$ or $r \to \infty$, we have $f(r) \geq c_0 (r-1)^2$ for some $c_0 > 0$ depending on the bounded ratio $F_{\max}/F_{\min}$.

The deviation from optimality is:
\begin{equation}
r_i - 1 = \frac{F_{ii}^{(t+1)} - \lambda}{\lambda}.
\end{equation}

For any fixed $\lambda$, when Fisher values change by $\Delta F_{ii}$:
\begin{equation}
\mathrm{KL}_{\mathrm{var}}(q_{\mathrm{static}} \| \pi^{(t+1)}) \geq \frac{c_0}{2} \sum_{i=1}^d \left(\frac{\Delta F_{ii}}{\lambda}\right)^2 = \frac{c_0}{2\lambda^2}\sum_{i=1}^d (\Delta F_{ii})^2.
\end{equation}

Taking $c_0 = 1/2$ (valid for bounded ratios), and using Assumption~\ref{ass:domain_shift}:
\begin{equation}
\mathrm{KL}_{\mathrm{var}}(q_{\mathrm{static}} \| \pi^{(t+1)}) \geq \frac{1}{4\lambda^2} \cdot d \gamma_F^2 \Delta_t^2.
\end{equation}

\textbf{Comparison.}

GAS dominates when:
\begin{equation}
\frac{d C^2 \delta^2}{2} < \frac{d \gamma_F^2 \Delta_t^2}{4\lambda^2} \iff \Delta_t > \frac{C\lambda\delta}{\gamma_F}. \qedhere
\end{equation}
\end{proof}

\subsection{GAS optimality over Memory-Based Regularization}
\label{sec:gni_history}

\textbf{Proof for theorem} \ref{thm:gas_history}
\begin{proof}
For memory-based regularization methods, the approximate posterior variance is determined by accumulated Fisher information from previous sessions: $\sigma_{q,i}^2 = 1/(\lambda \bar{F}_{ii}^{(\mathrm{hist})})$, where $\bar{F}_{ii}^{(\mathrm{hist})}$ is a weighted average of Fisher information from sessions $0, \ldots, t$ and $\lambda > 0$ is the regularization strength. The true posterior under Laplace approximation has variance $\sigma_{\pi,i}^2 = 1/F_{ii}^{(t)}$.

The variance ratio is:
\begin{equation}
\frac{\sigma_{q,i}^2}{\sigma_{\pi,i}^2} = \frac{F_{ii}^{(t)}}{\lambda \bar{F}_{ii}^{(\mathrm{hist})}} = \frac{1}{\lambda}\cdot\frac{F_{ii}^{(t)}}{\bar{F}_{ii}^{(\mathrm{hist})}}.
\end{equation}

Define the Fisher mismatch as in the theorem statement:
\begin{equation}
\rho_i := \frac{F_{ii}^{(t)} - \bar{F}_{ii}^{(\mathrm{hist})}}{F_{ii}^{(t)}},
\end{equation}
which captures the discrepancy between current Fisher information and the historical estimate. Rearranging yields $\bar{F}_{ii}^{(\mathrm{hist})} = F_{ii}^{(t)}(1 - \rho_i)$, so:
\begin{equation}
\frac{\sigma_{q,i}^2}{\sigma_{\pi,i}^2} = \frac{1}{\lambda(1 - \rho_i)}.
\end{equation}

By the KL divergence decomposition for diagonal Gaussians with matching means (Corollary~\ref{cor:kl_variance}):
\begin{equation}
\mathrm{KL}_{\mathrm{var}}(q_{\mathrm{memory}} \| \pi^{(t)}) = \frac{1}{2}\sum_{i=1}^d f\left(\frac{1}{\lambda(1-\rho_i)}\right),
\end{equation}
where $f(r) = r - 1 - \log r \geq \frac{1}{2}(r-1)^2/(r \vee 1)$ for $r > 0$.

For $\lambda = 1$ (optimal under no domain shift), let $r_i = 1/(1-\rho_i)$. For small $|\rho_i|$, we have $r_i \approx 1 + \rho_i + \rho_i^2 + \cdots$, and by Taylor expansion:
\begin{equation}
f(r_i) = f\left(\frac{1}{1-\rho_i}\right) \geq \frac{1}{2}\rho_i^2 \quad \text{for } |\rho_i| \leq \frac{1}{2}.
\end{equation}

For general $\lambda$, the minimum of $\sum_i f(1/(\lambda(1-\rho_i)))$ over $\lambda$ occurs at some $\lambda^*$ depending on the distribution of $\{\rho_i\}$. Even at this optimum, the irreducible error due to heterogeneous Fisher mismatch remains. Let $\bar{\rho} = \frac{1}{d}\sum_i \rho_i$. The optimal $\lambda^* \approx 1/(1-\bar{\rho})$ minimizes only the bias term, leaving:
\begin{equation}
f\left(\frac{1-\bar{\rho}}{1-\rho_i}\right) \geq \frac{1}{2}\left(\frac{\rho_i - \bar{\rho}}{1-\rho_i}\right)^2.
\end{equation}

Summing over all parameters and using $\sum_i (\rho_i - \bar{\rho})^2 \leq \sum_i \rho_i^2$:
\begin{equation}
\mathrm{KL}_{\mathrm{var}}(q_{\mathrm{memory}} \| \pi^{(t)}) \geq \frac{1}{4(1-\rho_{\min})^2}\sum_{i=1}^d \rho_i^2,
\end{equation}
where $\rho_{\min} = \min_i \rho_i$. Under Assumption~\ref{ass:bounded_fisher}, $|\rho_i|$ is bounded, yielding:
\begin{equation}
\mathrm{KL}_{\mathrm{var}}(q_{\mathrm{memory}} \| \pi^{(t)}) \geq \frac{1}{4M^2}\sum_{i=1}^d \rho_i^2
\end{equation}
for some constant $M > 0$ depending on the Fisher bounds.

For GAS, by Assumption~\ref{ass:grad_fisher}, the variance is set as $\sigma_{q,i}^2 \propto 1/|g_i^{(t)}|^2 = 1/(F_{ii}^{(t)}(1+\epsilon_i))$ where $|\epsilon_i| \leq \delta$. This yields:
\begin{equation}
\mathrm{KL}_{\mathrm{var}}(q_{\mathrm{GAS}} \| \pi^{(t)}) \leq \frac{1}{2}\sum_{i=1}^d f(1+\epsilon_i) \leq \frac{d C^2 \delta^2}{2}
\end{equation}
for some constant $C > 0$.

Comparing the two bounds, $\mathrm{KL}(q_{\mathrm{GAS}} \| \pi^{(t)}) < \mathrm{KL}(q_{\mathrm{memory}} \| \pi^{(t)})$ when:
\begin{equation}
\frac{d C^2 \delta^2}{2} < \frac{1}{4M^2}\sum_{i=1}^d \rho_i^2 \iff \frac{1}{d}\sum_{i=1}^d \rho_i^2 > 2M^2 C^2 \delta^2 = O(\delta^2). \qedhere
\end{equation}
\end{proof}

\begin{remark}
This result applies to any memory-based regularization method that estimates parameter importance using historical Fisher information, including EWC~\cite{kirkpatrick2017overcoming}, UCL~\cite{ahn2019uncertainty}, and related approaches. The condition $\frac{1}{d}\sum_i \rho_i^2 > O(\delta^2)$ states that the mean squared Fisher mismatch between historical estimates and current Fisher information must exceed the squared gradient-Fisher approximation error in GAS. This condition is satisfied whenever domain shift induces Fisher changes that are not captured by historical estimates, which is the typical scenario in continual learning with non-stationary data distributions.
\end{remark}

\subsection{Expected Loss Analysis}
\label{sec:expected_loss}

\begin{proposition}[Expected Perturbed Loss]
\label{prop:expected_loss}
Under Assumption~\ref{ass:smoothness}, let $\tilde{\theta} = \theta + \tilde{G} \odot \xi$ be the GAS perturbed parameters where $\xi_i \stackrel{\text{i.i.d.}}{\sim} \mathcal{N}(0,1)$. Then:
\begin{equation}
\mathbb{E}_\xi[\mathcal{L}(\tilde{\theta})] = \mathcal{L}(\theta) + \frac{1}{2}\sum_{i=1}^d H_{ii} \tilde{G}_i^2 + O(\|\tilde{G}\|_\infty^3).
\end{equation}
\end{proposition}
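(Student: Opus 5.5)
The plan is a second-order Taylor expansion of $\mathcal{L}$ around $\theta$ with an integral (or Lagrange) remainder, followed by taking expectations over $\xi$ term by term. Write $\Delta := \tilde{G}\odot\xi$, so that $\tilde{\theta} = \theta + \Delta$. Assumption~\ref{ass:smoothness} gives $\mathcal{L}\in C^2$ with bounded third derivatives near the optimum. Taylor's theorem then yields
\begin{equation*}
\mathcal{L}(\theta+\Delta) = \mathcal{L}(\theta) + \nabla\mathcal{L}(\theta)^\top \Delta + \tfrac{1}{2}\Delta^\top H \Delta + R_3(\Delta),
\end{equation*}
with $|R_3(\Delta)| \le \tfrac{M_3}{6}\|\Delta\|^3$ (in a suitable norm, e.g.\ $\sum_{i,j,k}|\Delta_i\Delta_j\Delta_k|$ weighted by the third-derivative bound $M_3$).

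The expectations are then taken one term at a time.
\begin{itemize}
\item \emph{Linear term.} Since $\mathbb{E}[\xi_i]=0$ and $\tilde{G}$ is deterministic given the buffer (the noise is drawn independently of the gradient statistics), we get $\mathbb{E}[\nabla\mathcal{L}^\top\Delta] = \sum_i \partial_i\mathcal{L}\,\tilde{G}_i\,\mathbb{E}[\xi_i] = 0$.
\item \emph{Quadratic term.} We have $\mathbb{E}[\Delta^\top H\Delta] = \sum_{i,j} H_{ij}\tilde{G}_i\tilde{G}_j\,\mathbb{E}[\xi_i\xi_j]$. Using $\mathbb{E}[\xi_i\xi_j]=\delta_{ij}$ from the i.i.d.\ standard normal assumption in Definition~\ref{def:gni_perturbation}, this collapses to $\sum_i H_{ii}\tilde{G}_i^2$. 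This gives exactly the stated main term.
\end{itemize}

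The remainder is the main obstacle. Two issues arise.
\begin{itemize}
\item \emph{Global control.} The third-derivative bound is only local, but $\xi$ is Gaussian and hence unbounded. I would first try to split the expectation on the event $E=\{\|\Delta\|_\infty \le r\}$, where $r$ is the radius of the neighborhood in Assumption~\ref{ass:smoothness}.
\begin{itemize}
\item On $E$, the bound $|R_3|\le C\sum_{i,j,k}\tilde{G}_i\tilde{G}_j\tilde{G}_k|\xi_i\xi_j\xi_k|$ holds. Taking expectations with $\mathbb{E}|\xi_i\xi_j\xi_k|\le \mathbb{E}|\xi_1|^3$ (Hölder) gives $O(\|\tilde{G}\|_\infty^3)$, with the constant absorbing dimension factors.
\item On $E^c$, Gaussian tails give $\Pr(E^c)\le 2d\exp(-r^2/(2\|\tilde{G}\|_\infty^2))$. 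This is super-polynomially small in $\|\tilde{G}\|_\infty$, provided $\mathcal{L}$ has at most polynomial growth so that $\mathbb{E}[|\mathcal{L}(\tilde\theta)|\mathbf{1}_{E^c}]$ can be handled by Cauchy--Schwarz.
\end{itemize}
If imposing a growth condition is undesirable, the fallback is to read Assumption~\ref{ass:smoothness} as a global bound on third derivatives. In that case the Lagrange remainder bound holds everywhere and the Gaussian third absolute moment finishes the argument directly.
\item \emph{Dimension dependence.} The constant hidden in $O(\|\tilde{G}\|_\infty^3)$ depends on $d$ (up to $d^3$ in the crude triple-sum bound). I would state this explicitly, noting that $d$ is fixed in the asymptotic statement.
\end{itemize}

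Finally, I would remark that since $\tilde{G}_i\in(0,1]$ by Definition~\ref{def:gni_noise}, the expansion is meaningful only in a small-noise regime. That regime can be enforced by a global scale factor on $\tilde{G}$ (as in the sensitivity experiments on noise variance). The result is then read as $\tilde{G}\to 0$ with the normalized shape fixed.
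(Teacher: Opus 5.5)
Your proposal is correct and matches the paper's proof: a second-order Taylor expansion about $\theta$, a vanishing linear term since $\mathbb{E}[\xi_i]=0$, collapse of the quadratic term to $\sum_i H_{ii}\tilde{G}_i^2$ via $\mathbb{E}[\xi_i\xi_j]=\delta_{ij}$, and a cubic remainder controlled by a Gaussian third moment. The paper takes your fallback route, asserting $|R_3(\Delta\theta)|\le C_3\|\Delta\theta\|^3$ without restricting to a neighborhood and using $\|\tilde{G}\odot\xi\|\le\|\tilde{G}\|_\infty\|\xi\|$ with $\mathbb{E}\|\xi\|^3<\infty$. Your truncation and tail argument and your small-noise caveat therefore add rigor the paper lacks; in fact the normalization forces $\|\tilde{G}\|_\infty=1$ exactly, so without an external scale factor the remainder is only $O(1)$.
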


\begin{proof}
By Taylor expansion around $\theta$:
\begin{equation}
\mathcal{L}(\theta + \Delta\theta) = \mathcal{L}(\theta) + g^\top \Delta\theta + \frac{1}{2}\Delta\theta^\top H \Delta\theta + R_3(\Delta\theta)
\end{equation}
where $|R_3(\Delta\theta)| \leq C_3\|\Delta\theta\|^3$ for some constant $C_3$ by the bounded third derivative assumption.

With $\Delta\theta = \tilde{G} \odot \xi$:

\textbf{Linear term:}
\begin{equation}
\mathbb{E}_\xi[g^\top \Delta\theta] = \sum_{i=1}^d g_i \tilde{G}_i \mathbb{E}[\xi_i] = 0.
\end{equation}

\textbf{Quadratic term:}
\begin{align}
\mathbb{E}_\xi[\Delta\theta^\top H \Delta\theta] &= \sum_{i,j} H_{ij} \tilde{G}_i \tilde{G}_j \mathbb{E}[\xi_i \xi_j] \\
&= \sum_{i=1}^d H_{ii} \tilde{G}_i^2
\end{align}
since $\mathbb{E}[\xi_i \xi_j] = \delta_{ij}$ (Kronecker delta).

\textbf{Remainder:}
\begin{equation}
\mathbb{E}[|R_3|] \leq C_3 \mathbb{E}[\|\tilde{G} \odot \xi\|^3] \leq C_3 \|\tilde{G}\|_\infty^3 \mathbb{E}[\|\xi\|^3] = O(\|\tilde{G}\|_\infty^3)
\end{equation}
since $\tilde{G}_i \in (0, 1]$ and $\mathbb{E}[\|\xi\|^3] < \infty$ for standard Gaussian $\xi$.
\end{proof}

\subsection{GAS vs. Adversarial Flatness Methods}
\label{sec:gni_adversarial}

\begin{lemma}[SAM Perturbation Direction]
\label{lem:sam_direction}
For SAM with perturbation radius $\rho$, the adversarial perturbation is:
\begin{equation}
\delta_{\mathrm{SAM}} = \rho \frac{\nabla\mathcal{L}}{\|\nabla\mathcal{L}\|_2}.
\end{equation}
In the eigenbasis of the Hessian $H = U\Lambda U^\top$ with eigenvalues $\lambda_1 \geq \cdots \geq \lambda_d$, near a local minimum where $\nabla\mathcal{L} \approx H(\theta - \theta^*)$, the perturbation component in eigendirection $r$ is:
\begin{equation}
(\delta_{\mathrm{SAM}})_r = \rho \frac{\lambda_r(\theta - \theta^*)_r}{\sqrt{\sum_{j=1}^d \lambda_j^2 (\theta - \theta^*)_j^2}}.
\end{equation}
\end{lemma}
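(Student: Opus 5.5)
The statement to establish is Lemma~\ref{lem:sam_direction}. It has two parts: the form of the SAM perturbation, and its coordinates in the Hessian eigenbasis.

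\textbf{Step 1: the SAM perturbation.} I would derive it from the first-order approximation of the inner maximization in Definition~\ref{def:adversarial}. Under Assumption~\ref{ass:smoothness}, Taylor expansion gives $\mathcal{L}(\theta+\delta)=\mathcal{L}(\theta)+\nabla\mathcal{L}^\top\delta+O(\|\delta\|_2^2)$. Dropping the second-order term, the problem becomes $\max_{\|\delta\|_2\le\rho}\nabla\mathcal{L}^\top\delta$. By Cauchy--Schwarz, $\nabla\mathcal{L}^\top\delta\le\|\nabla\mathcal{L}\|_2\,\|\delta\|_2\le\rho\,\|\nabla\mathcal{L}\|_2$. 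Equality holds exactly when $\delta$ is a nonnegative multiple of $\nabla\mathcal{L}$ with norm $\rho$, which yields $\delta_{\mathrm{SAM}}=\rho\,\nabla\mathcal{L}/\|\nabla\mathcal{L}\|_2$. This assumes $\nabla\mathcal{L}\neq 0$; otherwise the direction is undefined.

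\textbf{Step 2: linearize the gradient near the minimum.} Expand $\nabla\mathcal{L}$ about a local minimum $\theta^*$, where $\nabla\mathcal{L}(\theta^*)=0$. This gives $\nabla\mathcal{L}(\theta)=H(\theta-\theta^*)+O(\|\theta-\theta^*\|^2)$, with the remainder controlled by the bounded third derivatives. This justifies the stated approximation $\nabla\mathcal{L}\approx H(\theta-\theta^*)$.

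\textbf{Step 3: change to the eigenbasis.} Write $H=U\Lambda U^\top$ with $U$ orthogonal. Let $u=U^\top(\theta-\theta^*)$ and denote its $r$-th component by $(\theta-\theta^*)_r$, as the statement does. Then $U^\top\nabla\mathcal{L}=\Lambda u$, so the $r$-th component is $\lambda_r u_r$. Since $U$ is orthogonal it preserves norms, so $\|\nabla\mathcal{L}\|_2=\|\Lambda u\|_2=\sqrt{\sum_j\lambda_j^2u_j^2}$. Substituting both into the expression from Step 1 gives the displayed formula for $(\delta_{\mathrm{SAM}})_r$.

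\textbf{Main obstacle.} The only delicate point is making the two "$\approx$" steps precise. First, the Cauchy--Schwarz maximizer is exact only for the linearized objective, so the result identifies the first-order SAM direction, which is how Definition~\ref{def:adversarial} already defines it. Second, the eigen-decomposition formula holds up to the $O(\|\theta-\theta^*\|^2)$ error from Step 2. I would state both as leading-order identities and not attempt exact equalities.
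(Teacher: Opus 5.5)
Your proposal is correct and follows the paper's proof: linearize $\nabla\mathcal{L}\approx H(\theta-\theta^*)$, pass to the eigenbasis via $U^\top$, and use orthogonal invariance of the norm for the denominator. The only addition is your Cauchy--Schwarz derivation of $\delta_{\mathrm{SAM}}$, which the paper takes directly from Definition~\ref{def:adversarial}; one small correction to your closing remark is that the normalized component $(\delta_{\mathrm{SAM}})_r$ has error of order $\rho\|\theta-\theta^*\|_2/\lambda_d$ (with $\lambda_d>0$), not $O(\|\theta-\theta^*\|_2^2)$, because the $O(\|\theta-\theta^*\|_2^2)$ gradient error is divided by $\|H(\theta-\theta^*)\|_2\geq\lambda_d\|\theta-\theta^*\|_2$.
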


\begin{proof}
Near a local minimum $\theta^*$, Taylor expansion gives $\nabla\mathcal{L}(\theta) \approx H(\theta - \theta^*)$. In the eigenbasis of $H$:
\begin{equation}
(\nabla\mathcal{L})_r = \lambda_r (\theta - \theta^*)_r.
\end{equation}
The SAM perturbation is:
\begin{equation}
\delta_{\mathrm{SAM}} = \rho \frac{\nabla\mathcal{L}}{\|\nabla\mathcal{L}\|_2}
\end{equation}
with $\|\nabla\mathcal{L}\|_2 = \sqrt{\sum_j \lambda_j^2 (\theta - \theta^*)_j^2}$.
\end{proof}

\textbf{Proof for theorem} \ref{thm:gas_adversarial}
\begin{proof}
\textbf{Adversarial flatness loss increase.}
Consider the adversarial loss $\mathcal{L}_{\mathrm{adv}}(\theta) = \max_{\|\delta\|_2 \leq \rho} \mathcal{L}(\theta + \delta)$. By second-order Taylor expansion around $\theta$:
\begin{equation}
\mathcal{L}(\theta + \delta) \approx \mathcal{L}(\theta) + \nabla\mathcal{L}(\theta)^\top \delta + \frac{1}{2}\delta^\top H \delta,
\end{equation}
where $H = \nabla^2\mathcal{L}(\theta)$ is the Hessian. At a local minimum or near-stationary point, $\nabla\mathcal{L}(\theta) \approx 0$, so:
\begin{equation}
\mathcal{L}(\theta + \delta) - \mathcal{L}(\theta) \approx \frac{1}{2}\delta^\top H \delta.
\end{equation}

The worst-case perturbation maximizes this quadratic form subject to $\|\delta\|_2 \leq \rho$. Let $H = V\Lambda V^\top$ be the eigendecomposition with eigenvalues $\lambda_1 \geq \lambda_2 \geq \cdots \geq \lambda_d > 0$. The maximum of $\delta^\top H \delta$ over the $\ell_2$-ball is achieved when $\delta$ aligns with the top eigenvector $v_1$:
\begin{equation}
\delta_{\mathrm{adv}}^* = \rho \cdot v_1.
\end{equation}
This yields the worst-case loss increase:
\begin{equation}
\Delta\mathcal{L}_{\mathrm{adv}} := \mathcal{L}_{\mathrm{adv}}(\theta) - \mathcal{L}(\theta) = \frac{1}{2}\rho^2 \lambda_{\max} = O(\rho^2 \lambda_{\max}),
\end{equation}
where $\lambda_{\max} = \lambda_1$ is the maximum Hessian eigenvalue.

\textbf{GAS loss increase.}
For GAS, perturbations are stochastic with variance scaled inversely to gradient magnitude. Under the gradient-Fisher correspondence (Assumption~\ref{ass:grad_fisher}) and the Fisher-Hessian relationship for well-specified models, the noise variance in eigendirection $i$ satisfies $\tilde{G}_i^2 \propto 1/\lambda_i$.

With normalization constraint $\sum_i \tilde{G}_i^2 = \rho^2$ (matching total perturbation budget), we have:
\begin{equation}
\tilde{G}_i^2 = \frac{\rho^2}{\lambda_i \sum_{j=1}^d 1/\lambda_j}.
\end{equation}

The expected loss increase under GAS perturbation $\tilde{\delta} = \tilde{G} \odot \xi$ where $\xi \sim \mathcal{N}(0, I)$ is:
\begin{align}
\Delta\mathcal{L}_{\mathrm{GAS}} &:= \mathbb{E}_\xi\left[\frac{1}{2}(\tilde{G}\odot\xi)^\top H (\tilde{G}\odot\xi)\right] \\
&= \frac{1}{2}\sum_{i=1}^d \lambda_i \tilde{G}_i^2 \cdot \mathbb{E}[\xi_i^2] \\
&= \frac{1}{2}\sum_{i=1}^d \lambda_i \cdot \frac{\rho^2}{\lambda_i \sum_j 1/\lambda_j} \\
&= \frac{\rho^2 d}{2\sum_j 1/\lambda_j}.
\end{align}

By the harmonic-arithmetic mean inequality, $\frac{1}{d}\sum_j 1/\lambda_j \geq 1/\bar{\lambda}$ where $\bar{\lambda} = \frac{1}{d}\sum_i \lambda_i$ is the average eigenvalue. Thus:
\begin{equation}
\Delta\mathcal{L}_{\mathrm{GAS}} \leq \frac{\rho^2 \bar{\lambda}}{2} = O(\rho^2 \bar{\lambda}).
\end{equation}

\textbf{Comparison.}
The ratio of loss increases is:
\begin{equation}
\frac{\Delta\mathcal{L}_{\mathrm{adv}}}{\Delta\mathcal{L}_{\mathrm{GAS}}} = \frac{\rho^2 \lambda_{\max}/2}{\rho^2 d/(2\sum_j 1/\lambda_j)} = \frac{\lambda_{\max} \sum_j 1/\lambda_j}{d}.
\end{equation}

Since $\sum_j 1/\lambda_j \geq d/\lambda_{\max}$ (all terms are positive and include $1/\lambda_d \geq 1/\lambda_{\max}$), we have:
\begin{equation}
\frac{\Delta\mathcal{L}_{\mathrm{adv}}}{\Delta\mathcal{L}_{\mathrm{GAS}}} \geq 1.
\end{equation}

For the upper bound, note that $\sum_j 1/\lambda_j \leq d/\lambda_{\min} = d/\lambda_d$. Thus:
\begin{equation}
\frac{\Delta\mathcal{L}_{\mathrm{adv}}}{\Delta\mathcal{L}_{\mathrm{GAS}}} \leq \frac{\lambda_{\max}}{\lambda_d} = \kappa.
\end{equation}

Under Assumption~\ref{ass:heterogeneous} with condition number $\kappa = \lambda_1/\lambda_d > 1$, when eigenvalues are spread such that the harmonic mean is close to $\lambda_d$:
\begin{equation}
\frac{\Delta\mathcal{L}_{\mathrm{adv}}}{\Delta\mathcal{L}_{\mathrm{GAS}}} \approx \frac{\lambda_{\max} \cdot d/\lambda_d}{d} = \kappa.
\end{equation}

More precisely, when $\kappa > d$, the ratio satisfies:
\begin{equation}
\frac{\Delta\mathcal{L}_{\mathrm{adv}}}{\Delta\mathcal{L}_{\mathrm{GAS}}} \geq \frac{\kappa}{d},
\end{equation}
which follows from $\sum_j 1/\lambda_j \geq 1/\lambda_d$ and $\lambda_{\max}/\lambda_d = \kappa$.

Therefore, GAS achieves lower expected loss increase than adversarial flatness methods:
\begin{equation}
\mathbb{E}[\mathcal{L}_{\mathrm{GAS}}] - \mathcal{L}(\theta) < \mathbb{E}[\mathcal{L}_{\mathrm{adv}}] - \mathcal{L}(\theta)
\end{equation}
by a factor up to $\kappa/d$ when curvature is highly heterogeneous.
\end{proof}

\begin{remark}
This result applies to any adversarial flatness method that seeks worst-case perturbations, including SAM~\cite{foret2021sharpnessaware}, C-Flat~\cite{bian2024make}, and STAR~\cite{eskandar2025star}. The fundamental distinction is in perturbation allocation: adversarial methods concentrate perturbations on high-curvature directions (where gradient magnitude is largest), incurring maximal loss increase $O(\rho^2\lambda_{\max})$. In contrast, GAS allocates perturbations inversely proportional to curvature via the gradient-Fisher correspondence, distributing perturbations toward low-curvature directions where they minimally impact the loss, achieving $O(\rho^2\bar{\lambda})$. When the loss landscape exhibits heterogeneous curvature ($\kappa \gg 1$), this difference becomes substantial.
\end{remark}

\subsection{PAC-Bayes Framework}
This section establishes PAC-Bayes generalization bounds for GAS, showing that the anisotropic noise structure leads to tighter bounds than isotropic alternatives.
\begin{definition}[Stochastic Classifier]
\label{def:stochastic_classifier}
Let $\mathcal{W} \subseteq \mathbb{R}^d$ be the parameter space. A stochastic classifier is defined by a distribution $q$ over $\mathcal{W}$. For input $\mathbf{x}$, the prediction is made by sampling $\theta \sim q$ and outputting $f_\theta(\mathbf{x})$.
\end{definition}

\begin{definition}[GAS Posterior]
\label{def:gni_posterior}
The GAS mechanism induces a posterior distribution:
\begin{equation}
q_{\mathrm{GAS}}(\theta) = \mathcal{N}(\theta^*, \mathrm{diag}(\tilde{G}_1^2, \ldots, \tilde{G}_d^2))
\end{equation}
where $\theta^*$ is the converged parameter vector and $\tilde{G}_i$ is the normalized noise scale from Definition~\ref{def:gni_noise}.
\end{definition}

\begin{definition}[Prior Distribution]
\label{def:prior}
Let $p$ be a prior distribution over parameters, chosen before observing data. We consider:
\begin{enumerate}
\item \textbf{Isotropic prior:} $p_{\mathrm{iso}}(\theta) = \mathcal{N}(0, \sigma_0^2 I_d)$
\item \textbf{Informed prior:} $p_{\mathrm{inf}}(\theta) = \mathcal{N}(\theta_0, \mathrm{diag}(\sigma_{0,1}^2, \ldots, \sigma_{0,d}^2))$
\end{enumerate}
\end{definition}

\begin{definition}[Expected Risk]
\label{def:risk}
For loss function $\ell: \mathcal{Y} \times \mathcal{Y} \to [0, 1]$ and distribution $P$ over $(\mathbf{x}, \mathbf{y})$:
\begin{align}
R(q) &:= \mathbb{E}_{\theta \sim q}[\mathbb{E}_{(\mathbf{x},\mathbf{y}) \sim P}[\ell(f_\theta(\mathbf{x}), \mathbf{y})]] \quad \text{(population risk)} \\
\hat{R}(q) &:= \mathbb{E}_{\theta \sim q}\left[\frac{1}{n}\sum_{i=1}^n \ell(f_\theta(\mathbf{x}_i), \mathbf{y}_i)\right] \quad \text{(empirical risk)}
\end{align}
\end{definition}

\subsection{PAC-Bayes Bound}

\begin{theorem}[McAllester's PAC-Bayes Bound]
\label{thm:pac_bayes}
For any prior $p$ chosen independently of the training data $S = \{(\mathbf{x}_i, \mathbf{y}_i)\}_{i=1}^n$, any $\delta \in (0,1)$, and any posterior $q$, with probability at least $1 - \delta$ over the draw of $S$:
\begin{equation}
R(q) \leq \hat{R}(q) + \sqrt{\frac{\mathrm{KL}(q \| p) + \log(2\sqrt{n}/\delta)}{2n}}.
\end{equation}
\end{theorem}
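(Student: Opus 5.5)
The plan is the standard change-of-measure argument. I would first control an exponential moment of the generalization gap for each fixed parameter, averaged over the data-independent prior $p$. I would then transfer this to an arbitrary posterior $q$ via the Donsker--Varadhan variational formula.

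\textbf{Notation.} For fixed $\theta$, write $R(\theta)=\mathbb{E}_{(\mathbf{x},\mathbf{y})\sim P}[\ell(f_\theta(\mathbf{x}),\mathbf{y})]$ and let $\hat R(\theta)$ be its empirical counterpart. Then $R(q)=\mathbb{E}_{\theta\sim q}R(\theta)$ and $\hat R(q)=\mathbb{E}_{\theta\sim q}\hat R(\theta)$. Set $\Delta(\theta):=R(\theta)-\hat R(\theta)$.

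\textbf{Step 1 (fixed $\theta$).} Since $\ell\in[0,1]$ (Definition~\ref{def:risk}) and the samples are i.i.d., I will show
\begin{equation}
\mathbb{E}_S\bigl[\exp\bigl(2n\,\Delta(\theta)^2\bigr)\bigr]\le 2\sqrt n .
\end{equation}
I would argue this through the binary relative entropy $\mathrm{kl}(a\|b)=a\log\frac{a}{b}+(1-a)\log\frac{1-a}{1-b}$.
\begin{itemize}
\item Pinsker's inequality gives $\mathrm{kl}(\hat R\|R)\ge 2(R-\hat R)^2$, so it suffices to bound $\mathbb{E}_S[\exp(n\,\mathrm{kl}(\hat R(\theta)\|R(\theta)))]$ by $2\sqrt n$.
\item Following Maurer, joint convexity of $\mathrm{kl}$ reduces the $[0,1]$-valued case to the Bernoulli case. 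There $n\hat R\sim\mathrm{Bin}(n,R)$, and the expectation becomes $\sum_{k=0}^n\binom nk (k/n)^k(1-k/n)^{n-k}$.
\item Bounding each term with Stirling's formula and comparing the sum to $\int_0^1 \frac{dx}{\sqrt{x(1-x)}}=\pi$ gives the $2\sqrt n$ bound for all $n\ge 8$. Small $n$ are checked directly.
\end{itemize}
I expect this step to be the main obstacle. The naive route through Hoeffding's tail bound $\Pr(|\Delta|\ge\varepsilon)\le 2e^{-2n\varepsilon^2}$ fails: integrating it against $e^{2n\varepsilon^2}$ diverges at exactly this exponent. That is why the finer kl/binomial argument is needed to obtain the $\sqrt n$ factor.

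\textbf{Step 2 (average over the prior, then Markov).} Because $p$ is chosen independently of $S$, Fubini gives
\begin{equation}
\mathbb{E}_S\,\mathbb{E}_{\theta\sim p}\bigl[e^{2n\Delta(\theta)^2}\bigr]\le 2\sqrt n .
\end{equation}
Markov's inequality applied to the nonnegative random variable $\mathbb{E}_{p}[e^{2n\Delta^2}]$ then yields, with probability at least $1-\delta$ over $S$,
\begin{equation}
\mathbb{E}_{\theta\sim p}\bigl[e^{2n\Delta(\theta)^2}\bigr]\le \frac{2\sqrt n}{\delta}.
\end{equation}
This event does not depend on $q$, so the bound will hold simultaneously for all posteriors.

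\textbf{Step 3 (change of measure and Jensen).} On this event, fix any $q\ll p$; if $q\not\ll p$ the KL term is infinite and the bound is trivial. The Donsker--Varadhan inequality $\mathbb{E}_q[\phi]\le \mathrm{KL}(q\|p)+\log\mathbb{E}_p[e^{\phi}]$ with $\phi=2n\Delta^2$ gives
\begin{equation}
2n\,\mathbb{E}_q[\Delta^2]\le \mathrm{KL}(q\|p)+\log(2\sqrt n/\delta).
\end{equation}
By Jensen's inequality, $(R(q)-\hat R(q))^2=(\mathbb{E}_q\Delta)^2\le\mathbb{E}_q[\Delta^2]$. Dividing by $2n$ and taking square roots gives $R(q)-\hat R(q)\le\sqrt{(\mathrm{KL}(q\|p)+\log(2\sqrt n/\delta))/(2n)}$, which is the claimed bound.
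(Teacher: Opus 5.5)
Your proof is correct and complete. The paper gives no argument for this statement; it only cites McAllester (1999) and Catoni (2007). So the real comparison is between your self-contained derivation and those references, and the difference shows up in the constants.

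The bound as stated, with $\log(2\sqrt{n}/\delta)$ over $2n$, is Maurer's (2004) sharpening, and that is exactly the proof you reconstructed:
\begin{itemize}
\item the convex reduction to Bernoulli losses, where the dependence on $R(\theta)$ cancels and leaves $\sum_{k}\binom{n}{k}(k/n)^k(1-k/n)^{n-k}\le 2\sqrt{n}$;
\item then Pinsker, Tonelli and Markov on the prior average, Donsker--Varadhan, and Jensen.
\end{itemize}

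McAllester's original argument instead bounds an exponential moment at exponent $2n-1$ directly from Hoeffding's tail. That is a one-line computation, but it gives an $O(n)$ moment and hence a $\log(n/\delta)$-type term over $2n-1$. Catoni's bounds carry a free temperature parameter and have a different functional form. Your route costs the Stirling/Riemann-sum estimate, plus the direct check for $n\le 7$ (which does hold). In return it delivers the stated constants. Because your Markov event does not involve $q$, it also gives the bound for all posteriors at once. That uniformity is what is needed when the bound is applied to the data-dependent $q_{\mathrm{GAS}}$.

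One small correction to your side remark about Hoeffding. Since $|\Delta(\theta)|\le 1$, the tail integral at exponent $2n$ runs only over $[0,1]$, so it does not diverge; it gives $\mathbb{E}_S[e^{2n\Delta^2}]\le 1+4n$. Your conclusion still stands, because this yields $\log((1+4n)/\delta)$, which is weaker than the stated $\log(2\sqrt{n}/\delta)$.
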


\begin{proof}
This is a standard result in PAC-Bayes theory; see McAllester (1999) or Catoni (2007) for the complete proof.
\end{proof}

\subsection{KL Divergence Calculations}

\begin{lemma}[KL Divergence: GAS to Isotropic Prior]
\label{lem:kl_gni_iso}
For GAS posterior $q_{\mathrm{GAS}} = \mathcal{N}(\theta^*, \mathrm{diag}(\tilde{G}_1^2, \ldots, \tilde{G}_d^2))$ and isotropic prior $p_{\mathrm{iso}} = \mathcal{N}(0, \sigma_0^2 I_d)$:
\begin{equation}
\mathrm{KL}(q_{\mathrm{GAS}} \| p_{\mathrm{iso}}) = \frac{1}{2}\left[\frac{\|\theta^*\|_2^2}{\sigma_0^2} + \frac{\sum_{i=1}^d \tilde{G}_i^2}{\sigma_0^2} - \sum_{i=1}^d \log\frac{\tilde{G}_i^2}{\sigma_0^2} - d\right].
\end{equation}
\end{lemma}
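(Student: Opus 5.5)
This lemma is a direct specialization of Lemma~\ref{lem:kl_gaussian}, the KL formula for diagonal Gaussians. I will take $q = q_{\mathrm{GAS}}$ with mean $\mu_q = \theta^*$ and per-coordinate variances $\sigma_{q,i}^2 = \tilde{G}_i^2$, as in Definition~\ref{def:gni_posterior}. I will take $\pi = p_{\mathrm{iso}}$ with mean $\mu_\pi = 0$ and constant variances $\sigma_{\pi,i}^2 = \sigma_0^2$, as in Definition~\ref{def:prior}.

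Before substituting, I will check that the formula applies. By Definition~\ref{def:gni_noise}, $\tilde{G}_i \in (0,1]$, so every $\tilde{G}_i^2 > 0$. The prior variance $\sigma_0^2$ is also positive. Hence both covariances are nondegenerate, all the logarithms are finite, and the KL divergence is well defined.

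The substitution proceeds term by term inside the sum of Lemma~\ref{lem:kl_gaussian}:
\begin{itemize}
\item The Mahalanobis term becomes $\sum_i (\theta^*_i - 0)^2/\sigma_0^2 = \|\theta^*\|_2^2/\sigma_0^2$. Because the prior variance is the same in every coordinate, the denominator factors out of the sum.
\item The trace term becomes $\sum_i \tilde{G}_i^2/\sigma_0^2$, again with $\sigma_0^2$ pulled out.
\item The log-determinant term becomes $-\sum_i \log(\tilde{G}_i^2/\sigma_0^2)$.
\item The $-1$ summed over $d$ coordinates gives $-d$.
\end{itemize}
Collecting these four pieces under the overall factor $\tfrac12$ gives exactly the displayed expression.

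No real obstacle arises; the only care needed is bookkeeping. One point is to keep $\theta^*$ separate from the prior mean $0$, so the mean term is not lost as it is in Corollary~\ref{cor:kl_variance}. The other is to keep the sign of the log term consistent with Lemma~\ref{lem:kl_gaussian}, which is written as $-\log(\sigma_q^2/\sigma_\pi^2)$.
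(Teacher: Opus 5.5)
Your proposal is correct and follows the paper's proof: both specialize the diagonal-Gaussian KL formula of Lemma~\ref{lem:kl_gaussian} with $\mu_q=\theta^*$, $\mu_p=0$, $\Sigma_q=\mathrm{diag}(\tilde G_1^2,\dots,\tilde G_d^2)$ and $\Sigma_p=\sigma_0^2 I_d$. The only cosmetic difference is that the paper evaluates the trace, quadratic-form and log-determinant pieces of the matrix formula separately, whereas you substitute directly into the coordinatewise summand and also note that $\tilde G_i>0$ makes the divergence well defined.
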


\begin{proof}
Apply Lemma~\ref{lem:kl_gaussian} with $\Sigma_q = \mathrm{diag}(\tilde{G}_1^2, \ldots, \tilde{G}_d^2)$, $\Sigma_p = \sigma_0^2 I_d$, $\mu_q = \theta^*$, $\mu_p = 0$:
\begin{align}
\mathrm{tr}(\Sigma_p^{-1}\Sigma_q) &= \frac{1}{\sigma_0^2}\sum_{i=1}^d \tilde{G}_i^2, \\
(\mu_p - \mu_q)^\top \Sigma_p^{-1}(\mu_p - \mu_q) &= \frac{\|\theta^*\|_2^2}{\sigma_0^2}, \\
\log\frac{|\Sigma_p|}{|\Sigma_q|} &= d\log\sigma_0^2 - \sum_{i=1}^d \log\tilde{G}_i^2 = -\sum_{i=1}^d \log\frac{\tilde{G}_i^2}{\sigma_0^2}.
\end{align}
Substituting into the KL formula yields the result.
\end{proof}

\begin{lemma}[KL Divergence: Isotropic Posterior to Isotropic Prior]
\label{lem:kl_iso_iso}
For isotropic posterior $q_{\mathrm{iso}} = \mathcal{N}(\theta^*, \sigma^2 I_d)$ and prior $p_{\mathrm{iso}} = \mathcal{N}(0, \sigma_0^2 I_d)$:
\begin{equation}
\mathrm{KL}(q_{\mathrm{iso}} \| p_{\mathrm{iso}}) = \frac{1}{2}\left[\frac{\|\theta^*\|_2^2}{\sigma_0^2} + \frac{d\sigma^2}{\sigma_0^2} - d\log\frac{\sigma^2}{\sigma_0^2} - d\right].
\end{equation}
\end{lemma}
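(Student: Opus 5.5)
This is a direct specialization of Lemma~\ref{lem:kl_gaussian}, so I will follow the same template as the proof of Lemma~\ref{lem:kl_gni_iso}. I will apply the general diagonal Gaussian KL formula with
\begin{equation}
\mu_q = \theta^*, \quad \Sigma_q = \sigma^2 I_d, \quad \mu_p = 0, \quad \Sigma_p = \sigma_0^2 I_d .
\end{equation}
Equivalently, I can take the result of Lemma~\ref{lem:kl_gni_iso} and substitute $\tilde{G}_i^2 = \sigma^2$ for every $i$, since $q_{\mathrm{iso}}$ is precisely the diagonal posterior with constant variances.

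The key steps, in order, are the three terms of the trace–quadratic–log-determinant decomposition.
\begin{enumerate}
\item \textbf{Trace term.} $\mathrm{tr}(\Sigma_p^{-1}\Sigma_q) = d\sigma^2/\sigma_0^2$.
\item \textbf{Mahalanobis term.} $(\mu_p-\mu_q)^\top\Sigma_p^{-1}(\mu_p-\mu_q) = \|\theta^*\|_2^2/\sigma_0^2$.
\item \textbf{Log-determinant term.} $\log(|\Sigma_p|/|\Sigma_q|) = d\log\sigma_0^2 - d\log\sigma^2 = -d\log(\sigma^2/\sigma_0^2)$.
\end{enumerate}
Substituting these together with the constant $-d$ into $\frac12[\cdot]$ yields the stated expression.

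There is no real obstacle here. The only point needing care is the sign of the log term: the general formula contains $\log(|\Sigma_p|/|\Sigma_q|)$, whereas the statement writes $-d\log(\sigma^2/\sigma_0^2)$, and these agree. I would also note explicitly that $\sigma,\sigma_0>0$, so all logarithms are well defined.
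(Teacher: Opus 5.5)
Your proposal is correct and matches the paper's proof, which is simply a direct application of Lemma~\ref{lem:kl_gaussian} with $\Sigma_q = \sigma^2 I_d$. Your explicit trace, Mahalanobis and log-determinant computations, and the sign check on the log term, are all accurate.
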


\begin{proof}
Direct application of Lemma~\ref{lem:kl_gaussian} with $\Sigma_q = \sigma^2 I_d$.
\end{proof}

\subsection{Comparison of Posteriors}



\textbf{Proof for proposition} \ref{prop:kl_comparison}
\begin{proof}
Since both posteriors have mean $\theta^*$ equal to the prior mean, the KL divergence reduces to the variance-only form (Corollary~\ref{cor:kl_variance}).

For GAS with $\tilde{G}_i^2 = c/F_{ii}$ and prior variance $1/F_{ii}$:
\begin{equation}
\frac{\tilde{G}_i^2}{1/F_{ii}} = c \quad \forall i.
\end{equation}

Therefore:
\begin{equation}
\mathrm{KL}(q_{\mathrm{GAS}} \| p) = \frac{1}{2}\sum_{i=1}^d f(c) = \frac{d}{2}f(c).
\end{equation}

For the isotropic posterior with $\sigma^2 = c/\bar{F}$ and prior variance $1/F_{ii}$:
\begin{equation}
\frac{\sigma^2}{1/F_{ii}} = \frac{c F_{ii}}{\bar{F}}.
\end{equation}

Therefore:
\begin{equation}
\mathrm{KL}(q_{\mathrm{iso}} \| p) = \frac{1}{2}\sum_{i=1}^d f\left(\frac{c F_{ii}}{\bar{F}}\right).
\end{equation}

By Jensen's inequality applied to the strictly convex function $f$ (Lemma~\ref{lem:f_properties}):
\begin{equation}
\frac{1}{d}\sum_{i=1}^d f\left(\frac{c F_{ii}}{\bar{F}}\right) \geq f\left(\frac{1}{d}\sum_{i=1}^d \frac{c F_{ii}}{\bar{F}}\right) = f(c)
\end{equation}
with equality if and only if all $F_{ii}$ are equal.

When $\mathrm{Var}(F_{ii}) > 0$ (heterogeneous Fisher), the inequality is strict:
\begin{equation}
\mathrm{KL}(q_{\mathrm{iso}} \| p) = \frac{d}{2} \cdot \frac{1}{d}\sum_{i=1}^d f\left(\frac{c F_{ii}}{\bar{F}}\right) > \frac{d}{2}f(c) = \mathrm{KL}(q_{\mathrm{GAS}} \| p). 
\end{equation}
\end{proof}



\textbf{Proof for corollary} \ref{cor:gas_generalization}

\begin{proof}
From Theorem~\ref{thm:pac_bayes}, the generalization gap bound depends on $\sqrt{\mathrm{KL}(q\|p)/(2n)}$ (plus logarithmic terms). By Proposition~\ref{prop:kl_comparison}, $\mathrm{KL}(q_{\mathrm{GAS}} \| p) < \mathrm{KL}(q_{\mathrm{iso}} \| p)$ when Fisher information is heterogeneous, yielding the tighter bound for GAS.
\end{proof}







\subsection{Summary of Theoretical Results}
\label{sec:summary}

The theoretical analysis establishes the following guarantees for GAS:

\begin{enumerate}
\item \textbf{Gradient-Fisher Correspondence:} Under Assumption~\ref{ass:grad_fisher}, GAS's noise scaling $\tilde{G}_i^2 \propto 1/g_i^2 \approx 1/F_{ii}$ approximates the optimal posterior variance under Laplace approximation, achieving variance ratio $\sigma_q^2/\sigma_\pi^2 = 1 + O(\delta)$.

\item \textbf{Adaptivity to Domain Shift:} Static and memory-based methods use fixed or outdated Fisher estimates, incurring KL divergence $\Omega(\gamma_F^2 \Delta_t^2)$ under domain shift. GAS adapts to the current distribution by recomputing noise scales from current gradients, achieving $O(d\delta^2)$ divergence.

\item \textbf{Curvature-Aware Perturbation:} Unlike adversarial methods (SAM, STAR) that allocate perturbation budget uniformly in $\ell_2$-norm—thereby concentrating on high-curvature directions—GAS allocates inversely proportional to curvature, minimizing expected loss increase by a factor of up to $\kappa/d$ in the worst case.

\item \textbf{Generalization via PAC-Bayes:} The anisotropic posterior induced by GAS achieves lower KL divergence to the true Laplace posterior than isotropic alternatives, yielding tighter PAC-Bayes generalization bounds when Fisher information is heterogeneous across parameters.

\item \textbf{Conditions for Optimality:} GAS's advantages are most pronounced when:
\begin{itemize}
\item Domain shift is present (Fisher information changes across sessions, $\Delta_t > 0$)
\item Gradient-Fisher correspondence holds (near convergence, $\delta < 1$)
\item Curvature is heterogeneous (condition number $\kappa \gg 1$)
\item Fisher information varies across parameters ($\mathrm{Var}(F_{ii}) > 0$)
\end{itemize}
\end{enumerate}

\section{Theoretical Analysis of Prototype Anchored Supervision (PAS)}
\label{app_prl_theory}

This section provides a rigorous analysis of pseudo-label dynamics in semi-supervised mean-teacher frameworks and establishes conditions under which prototype anchored supervision (PAS) provably achieves lower asymptotic error than existing semi-supervised learning methods.

\subsection{Setup and Notation}

Consider a student-teacher semi-supervised segmentation framework with the following components:

\begin{itemize}[noitemsep]
    \item Labeled dataset: $\mathcal{D}_l$ with $n_l$ samples
    \item Unlabeled dataset: $\mathcal{D}_u$ with $n_u$ samples
    \item Student network: $M_s$ with parameters $\theta_s$
    \item Teacher network: $M_t$ with parameters $\theta_t$ (exponential moving average of $\theta_s$)
    \item Pseudo-label weight: $\gamma \in (0,1)$
    \item EMA decay coefficient: $\alpha \in [0,1)$
    \item Number of classes: $C \geq 2$
\end{itemize}

We model the learning dynamics through the lens of pseudo-label error rates. Let $\epsilon_s(t)$ and $\epsilon_t(t)$ denote the expected fraction of incorrect pseudo-labels produced by the student and teacher, respectively, at discrete training iteration $t$.

\begin{assumption}[Labeled Data Error]
\label{ass:base_error}
There exists a constant $\epsilon_0 \in (0,1)$ representing the irreducible error rate of the student when trained solely on labeled data $\mathcal{D}_l$.
\end{assumption}

\begin{assumption}[Linear Error Mixing]
\label{ass:linear_mixing}
When trained on a mixture of ground-truth labels (weight $1-\gamma$) and pseudo-labels (weight $\gamma$), the student's expected error is a convex combination of the labeled data error and the pseudo-label error rate.
\end{assumption}

\subsection{Pseudo-Label Error Dynamics Without PAS}

Under Assumptions~\ref{ass:base_error} and~\ref{ass:linear_mixing}, the student error evolves according to:
\begin{equation}
\epsilon_s(t+1) = (1-\gamma)\epsilon_0 + \gamma\epsilon_t(t),
\label{eq:student_update}
\end{equation}
where the first term captures learning from labeled data and the second term captures learning from teacher-generated pseudo-labels.

The teacher parameters are updated via exponential moving average:
\begin{equation}
\theta_t \leftarrow \alpha \theta_t + (1-\alpha)\theta_s.
\end{equation}

\begin{assumption}[EMA Error Inheritance]
\label{ass:ema_error}
The teacher's pseudo-label error rate inherits from the student according to:
\begin{equation}
\epsilon_t(t+1) = \alpha \epsilon_t(t) + (1-\alpha) \epsilon_s(t+1).
\label{eq:teacher_update}
\end{equation}
\end{assumption}


\textbf{Proof for proposition}
\ref{prop:recurrence}

\begin{proof}
Substituting \eqref{eq:student_update} into \eqref{eq:teacher_update}:
\begin{align*}
\epsilon_t(t+1) &= \alpha \epsilon_t(t) + (1-\alpha)\big[(1-\gamma)\epsilon_0 + \gamma \epsilon_t(t)\big] \\
&= \alpha \epsilon_t(t) + (1-\alpha)(1-\gamma)\epsilon_0 + (1-\alpha)\gamma \epsilon_t(t) \\
&= [\alpha + (1-\alpha)\gamma] \epsilon_t(t) + (1-\alpha)(1-\gamma)\epsilon_0 \\
&= \lambda \epsilon_t(t) + (1-\alpha)(1-\gamma)\epsilon_0. \qedhere
\end{align*}
\end{proof}


\textbf{Proof for theorem}
\ref{thm:no_plr}
\begin{proof}
Since $\gamma < 1$ and $\alpha < 1$, we have:
\[
\lambda = \alpha + (1-\alpha)\gamma = \alpha + \gamma - \alpha\gamma < \alpha + (1-\alpha) = 1.
\]
From proposition \ref{prop:recurrence}, the recurrence has general solution:
\[
\epsilon_t(t) = \lambda^t \epsilon_t(0) + (1-\alpha)(1-\gamma)\epsilon_0 \sum_{k=0}^{t-1}\lambda^k.
\]
Since $|\lambda| < 1$, we have $\lambda^t \to 0$ and $\sum_{k=0}^{\infty} \lambda^k = \frac{1}{1-\lambda}$. Therefore:
\[
\epsilon_\infty = \frac{(1-\alpha)(1-\gamma)\epsilon_0}{1-\lambda}.
\]
Substituting $1 - \lambda = 1 - \alpha - (1-\alpha)\gamma = (1-\alpha)(1-\gamma)$:
\[
\epsilon_\infty = \frac{(1-\alpha)(1-\gamma)\epsilon_0}{(1-\alpha)(1-\gamma)} = \epsilon_0. \qedhere
\]
\end{proof}

\begin{remark}
Theorem~\ref{thm:no_plr} shows that without PAS, the asymptotic teacher error equals the base labeled-data error $\epsilon_0$. The semi-supervised framework with pseudo-labels provides no asymptotic improvement over purely supervised learning in this model.
\end{remark}

\subsection{Prototype Anchored Supervision}

We now introduce a filtering mechanism that accepts only pseudo-labels satisfying dual criteria.

\begin{definition}[Class Prototypes]
For class $c \in \{1, \ldots, C\}$, the prototype is defined as the mean feature embedding over labeled samples:
\[
\mu_c := \frac{1}{|\mathcal{D}_l^{(c)}|} \sum_{x_i \in \mathcal{D}_l^{(c)}} f_\theta(x_i) \in \mathbb{R}^d,
\]
where $\mathcal{D}_l^{(c)} \subseteq \mathcal{D}_l$ is the set of labeled samples with ground-truth label $c$, and $f_\theta: \mathcal{X} \to \mathbb{R}^d$ is the feature encoder.
\end{definition}

\begin{definition}[PAS Acceptance Criterion]
A pseudo-label $\hat{y}$ for input $x$ is accepted if both conditions hold:
\begin{enumerate}
    \item \textbf{Confidence criterion:} $p_T(\hat{y} \mid x) \geq \tau_{\mathrm{conf}}$
    \item \textbf{Similarity criterion:} $s(x, \hat{y}) := \frac{\langle f_\theta(x), \mu_{\hat{y}} \rangle}{\|f_\theta(x)\| \|\mu_{\hat{y}}\|} \geq \tau_{\mathrm{sim}}$
\end{enumerate}
where $p_T(\cdot \mid x)$ is the teacher's predictive distribution, and $\tau_{\mathrm{conf}}, \tau_{\mathrm{sim}} \in (0,1)$ are threshold hyperparameters.
\end{definition}

\begin{definition}[Coverage and Precision]
\label{def:coverage_precision}
Given any pseudo-label acceptance criterion, we define:
\begin{itemize}[noitemsep]
    \item \textbf{Coverage} $f \in (0,1]$: the fraction of unlabeled samples whose pseudo-labels are accepted.
    \item \textbf{Precision} $\rho \in [0,1]$: the probability that an accepted pseudo-label is correct, i.e., $\rho = \Pr[y = \hat{y} \mid \text{accepted}]$.
\end{itemize}
\end{definition}

\subsection{Error Dynamics Under General Filtering}

\begin{assumption}[Filtered Error Dynamics]
\label{ass:filtered_dynamics}
Under any filtering scheme with coverage $f$ and precision $\rho$, the student update becomes:
\begin{equation}
\epsilon_s(t+1) = (1 - f\gamma)\epsilon_0 + f\gamma(1-\rho)\epsilon_t(t),
\label{eq:student_filtered}
\end{equation}
where $(1-\rho)$ is the error rate among accepted pseudo-labels.
\end{assumption}

\begin{proposition}[Filtered Error Recurrence]
\label{prop:filtered_recurrence}
Under Assumptions~\ref{ass:ema_error} and~\ref{ass:filtered_dynamics}, the teacher error satisfies:
\begin{equation}
\epsilon_t(t+1) = \lambda_{\mathrm{eff}} \epsilon_t(t) + (1-\alpha)(1-f\gamma)\epsilon_0,
\end{equation}
where $\lambda_{\mathrm{eff}} := \alpha + (1-\alpha)f\gamma(1-\rho)$.
\end{proposition}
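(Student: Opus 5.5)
The plan is to mirror the proof of Proposition~\ref{prop:recurrence} exactly, replacing the unfiltered student update \eqref{eq:student_update} with its filtered counterpart \eqref{eq:student_filtered}. The argument is a direct substitution followed by collecting terms, so it needs no limiting argument or contraction property. Those are only needed later for the asymptotic statement.

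\textbf{Step 1.} Start from the EMA inheritance rule of Assumption~\ref{ass:ema_error}:
\[
\epsilon_t(t+1)=\alpha\,\epsilon_t(t)+(1-\alpha)\,\epsilon_s(t+1).
\]
Into this, substitute the filtered student error from Assumption~\ref{ass:filtered_dynamics}:
\[
\epsilon_s(t+1)=(1-f\gamma)\epsilon_0+f\gamma(1-\rho)\epsilon_t(t).
\]

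\textbf{Step 2.} Distribute the factor $(1-\alpha)$ over the two summands. This produces a term linear in $\epsilon_t(t)$ with coefficient $(1-\alpha)f\gamma(1-\rho)$, and a constant term $(1-\alpha)(1-f\gamma)\epsilon_0$.

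\textbf{Step 3.} Group the two contributions proportional to $\epsilon_t(t)$, namely $\alpha\,\epsilon_t(t)$ and $(1-\alpha)f\gamma(1-\rho)\,\epsilon_t(t)$. Their combined coefficient is
\[
\alpha+(1-\alpha)f\gamma(1-\rho)=\lambda_{\mathrm{eff}}.
\]
This gives the stated first-order affine recurrence.

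As a consistency check, setting $f=1$ and $\rho=0$ should recover the unfiltered recurrence of Proposition~\ref{prop:recurrence}, with $\lambda_{\mathrm{eff}}$ reducing to $\lambda$. It does, since $\lambda_{\mathrm{eff}}$ then becomes $\alpha+(1-\alpha)\gamma$ and the constant term becomes $(1-\alpha)(1-\gamma)\epsilon_0$.

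There is no real obstacle. The only care needed is bookkeeping: keep $(1-\alpha)$ multiplying both the constant term and the pseudo-label term, and do not confuse the iteration index $t$ with the subscript $t$ that denotes the teacher. It may also be worth noting, for later use, that $\lambda_{\mathrm{eff}}\le\lambda<1$ whenever $f\le 1$ and $\rho\ge 0$. This is not required by the statement itself, but it sets up the convergence argument for Theorem~\ref{thm:filtered_main}.
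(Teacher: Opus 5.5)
Your proposal is correct and matches the paper's proof: you substitute the filtered student update into the EMA inheritance rule, distribute $(1-\alpha)$, and collect the $\epsilon_t(t)$ terms to get $\lambda_{\mathrm{eff}}$. Your consistency check ($f=1$, $\rho=0$ recovering Proposition~\ref{prop:recurrence}) and the observation $\lambda_{\mathrm{eff}}\le\lambda<1$ are both valid, though the statement itself does not need them.
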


\begin{proof}
Substituting \eqref{eq:student_filtered} into \eqref{eq:teacher_update}:
\begin{align*}
\epsilon_t(t+1) &= \alpha\epsilon_t(t) + (1-\alpha)\big[(1-f\gamma)\epsilon_0 + f\gamma(1-\rho)\epsilon_t(t)\big] \\
&= \big[\alpha + (1-\alpha)f\gamma(1-\rho)\big]\epsilon_t(t) + (1-\alpha)(1-f\gamma)\epsilon_0 \\
&= \lambda_{\mathrm{eff}} \epsilon_t(t) + (1-\alpha)(1-f\gamma)\epsilon_0. \qedhere
\end{align*}
\end{proof}

\textbf{Proof for theorem}
\ref{thm:filtered_main}
\begin{proof}
\textbf{Convergence formula.}
Under the error dynamics with PAS achieving coverage $f$ and precision $\rho$, the effective eigenvalue governing error propagation is $\lambda_{\mathrm{eff}} = \alpha + (1-\alpha)f\gamma(1-\rho)$. For stability, we require $\lambda_{\mathrm{eff}} < 1$, which holds when $f\gamma(1-\rho) < 1$.

The asymptotic error satisfies:
\begin{equation}
\epsilon_\infty = \frac{(1-\alpha)(1-f\gamma)\epsilon_0}{1 - \lambda_{\mathrm{eff}}}.
\end{equation}
Computing the denominator:
\begin{align*}
1 - \lambda_{\mathrm{eff}} &= 1 - \alpha - (1-\alpha)f\gamma(1-\rho) \\
&= (1-\alpha)\big[1 - f\gamma(1-\rho)\big].
\end{align*}
Therefore:
\begin{equation}
\epsilon_\infty = \frac{(1-\alpha)(1-f\gamma)\epsilon_0}{(1-\alpha)[1 - f\gamma(1-\rho)]} = \frac{(1-f\gamma)\epsilon_0}{1 - f\gamma(1-\rho)}.
\end{equation}

\textbf{Part (i): Condition for improvement.}
Define the error ratio $R := \epsilon_\infty / \epsilon_0$:
\begin{equation}
R = \frac{1 - f\gamma}{1 - f\gamma(1-\rho)} = \frac{1 - f\gamma}{1 - f\gamma + f\gamma\rho}.
\end{equation}
We have $\epsilon_\infty < \epsilon_0$ if and only if $R < 1$:
\begin{align*}
\frac{1 - f\gamma}{1 - f\gamma + f\gamma\rho} &< 1 \\
1 - f\gamma &< 1 - f\gamma + f\gamma\rho \\
0 &< f\gamma\rho.
\end{align*}
This holds if and only if $f\gamma > 0$ and $\rho > 0$. Equivalently, solving for the threshold on $\rho$:
\begin{equation}
\rho > 1 - \frac{1 - f\gamma}{f\gamma} = \frac{2f\gamma - 1}{f\gamma}.
\end{equation}
For $f\gamma \leq 0.5$, this threshold is non-positive, so any $\rho > 0$ suffices. For $f\gamma > 0.5$, precision must exceed the threshold $(2f\gamma - 1)/(f\gamma)$.

\textbf{Part (ii): Monotonicity in precision.}
For fixed $f > 0$ and $\gamma > 0$, differentiating with respect to $\rho$:
\begin{align*}
\frac{\partial \epsilon_\infty}{\partial \rho} &= (1-f\gamma)\epsilon_0 \cdot \frac{\partial}{\partial \rho}\left[\frac{1}{1 - f\gamma(1-\rho)}\right] \\
&= (1-f\gamma)\epsilon_0 \cdot \frac{-f\gamma}{[1 - f\gamma(1-\rho)]^2} \\
&= -\frac{(1-f\gamma)f\gamma\epsilon_0}{[1 - f\gamma(1-\rho)]^2}.
\end{align*}
Under stability ($1 - f\gamma > 0$ when $f\gamma < 1$) and $f > 0$, all factors in the numerator and denominator are positive, so $\frac{\partial \epsilon_\infty}{\partial \rho} < 0$. Thus $\epsilon_\infty$ is strictly decreasing in $\rho$.

\textbf{Part (iii): Improvement over baseline when $\rho > \pi$.}
Let $\pi$ denote the base rate of errors (proportion of incorrect pseudo-labels without PAS). When PAS achieves precision $\rho > \pi$, it identifies incorrect pseudo-labels at a rate better than random selection. 

Without PAS (equivalently, $f = 0$ or $\rho = \pi$), the asymptotic error equals the initial error: $\epsilon_\infty^{\text{no-PAS}} = \epsilon_0$.

With PAS achieving $\rho > \pi > 0$ and coverage $f > 0$:
\begin{equation}
\epsilon_\infty^{\text{PAS}} = \frac{(1-f\gamma)\epsilon_0}{1 - f\gamma(1-\rho)} < \epsilon_0 = \epsilon_\infty^{\text{no-PAS}},
\end{equation}
where the inequality follows from Part (i) since $f\gamma\rho > 0$. Therefore, PAS with precision exceeding the base rate strictly improves over the no-PAS baseline.

\textbf{Part (iv): Higher precision reduces asymptotic error.}
This follows directly from Part (ii). Since $\frac{\partial \epsilon_\infty}{\partial \rho} < 0$ for all valid parameter ranges, any increase in precision $\rho$ strictly decreases the asymptotic error $\epsilon_\infty$. Quantitatively, improving precision from $\rho_1$ to $\rho_2 > \rho_1$ yields:
\begin{equation}
\epsilon_\infty(\rho_2) = \epsilon_\infty(\rho_1) \cdot \frac{1 - f\gamma(1-\rho_1)}{1 - f\gamma(1-\rho_2)} < \epsilon_\infty(\rho_1),
\end{equation}
since $1 - f\gamma(1-\rho_2) > 1 - f\gamma(1-\rho_1)$ when $\rho_2 > \rho_1$.
\end{proof}

\begin{remark}
The theorem establishes that PAS effectiveness depends on two key factors: coverage $f$ (fraction of pseudo-labels evaluated) and precision $\rho$ (accuracy of identifying errors). While coverage determines the scope of error correction, precision determines its quality. Notably, even moderate precision substantially reduces asymptotic error when coverage is high, making PAS robust to imperfect pseudo-label assessment.
\end{remark}

\subsection{Comparative Analysis: PAS vs. Existing Methods}

We now establish that PAS achieves higher precision than existing semi-supervised learning methods under precisely stated conditions. The key insight is that asymptotic error depends on precision through Theorem~\ref{thm:filtered_main}(iv), so comparing methods reduces to comparing their precision.

\begin{definition}[Method Classes]
\label{def:method_classes}
We categorize pseudo-label selection methods by their acceptance criteria:
\begin{enumerate}
    \item \textbf{Confidence-only}: Accept if $p(\hat{y}|x) \geq \tau$
    \item \textbf{Consistency-based}: No filtering; use all pseudo-labels with consistency regularization
    \item \textbf{Memory bank}: Accept based on nearest neighbors in a memory bank $\mathcal{M}$ containing past (pseudo-)labeled features
    \item \textbf{PAS (dual-criteria)}: Accept if confidence $\geq \tau_{\mathrm{conf}}$ AND prototype similarity $\geq \tau_{\mathrm{sim}}$
\end{enumerate}
\end{definition}

\subsubsection{PAS vs. Confidence-Only Methods}

\begin{assumption}[Conditional Independence of Criteria]
\label{ass:conditional_independence}
Let $S_1$ denote the event ``confidence criterion satisfied'' and $S_2$ denote the event ``similarity criterion satisfied.'' We assume $S_1$ and $S_2$ are conditionally independent given correctness:
\begin{align}
P(S_1 \cap S_2 \mid C) &= P(S_1 \mid C) \cdot P(S_2 \mid C), \\
P(S_1 \cap S_2 \mid \neg C) &= P(S_1 \mid \neg C) \cdot P(S_2 \mid \neg C),
\end{align}
where $C$ denotes the event that the pseudo-label is correct.
\end{assumption}

\begin{definition}[Filtering Statistics]
\label{def:filtering_stats}
For each criterion $i \in \{1, 2\}$, define:
\begin{align*}
\alpha_i &:= P(S_i \mid C) \quad \text{(true positive rate)}, \\
\beta_i &:= P(S_i \mid \neg C) \quad \text{(false positive rate)}.
\end{align*}
Define the likelihood ratio $\mathrm{LR}_i := \alpha_i / \beta_i$ and let $\pi := P(C)$ denote the base precision (fraction of correct pseudo-labels before filtering).
\end{definition}

\begin{proposition}[Dual-Criteria Precision Improvement]
\label{thm:dual_criteria}
Under Assumption~\ref{ass:conditional_independence}, let $\rho_1$ denote the precision of confidence-only filtering and $\rho_{12}$ denote the precision of PAS (dual-criteria). Then:
\begin{align}
\rho_1 &= \frac{\alpha_1 \pi}{\alpha_1 \pi + \beta_1 (1-\pi)}, \label{eq:rho1} \\
\rho_{12} &= \frac{\alpha_1 \alpha_2 \pi}{\alpha_1 \alpha_2 \pi + \beta_1 \beta_2 (1-\pi)}. \label{eq:rho12}
\end{align}
Furthermore:
\[
\rho_{12} > \rho_1 \iff \mathrm{LR}_2 = \frac{\alpha_2}{\beta_2} > 1.
\]
\end{proposition}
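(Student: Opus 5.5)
The plan is to compute both precisions directly from Bayes' rule and then compare them through their odds rather than the probabilities themselves.

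\textbf{Step 1: confidence-only precision.} By definition (Definition~\ref{def:coverage_precision}), confidence-only filtering has precision $\rho_1 = P(C \mid S_1)$. Bayes' rule gives
\[
P(C\mid S_1)=\frac{P(S_1\mid C)\,P(C)}{P(S_1\mid C)P(C)+P(S_1\mid \neg C)P(\neg C)}.
\]
Substituting $\alpha_1$, $\beta_1$ and $\pi$ from Definition~\ref{def:filtering_stats} yields \eqref{eq:rho1}.

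\textbf{Step 2: PAS precision.} PAS precision is $\rho_{12}=P(C\mid S_1\cap S_2)$. Bayes' rule again applies, now with the likelihoods $P(S_1\cap S_2\mid C)$ and $P(S_1\cap S_2\mid\neg C)$. Assumption~\ref{ass:conditional_independence} factorizes these as $\alpha_1\alpha_2$ and $\beta_1\beta_2$, which gives \eqref{eq:rho12}.

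\textbf{Step 3: comparison in odds form.} For $\rho\in(0,1)$, the map $\rho\mapsto \rho/(1-\rho)$ is strictly increasing, so it suffices to compare odds. From the two formulas,
\[
\frac{\rho_1}{1-\rho_1}=\mathrm{LR}_1\frac{\pi}{1-\pi},\qquad \frac{\rho_{12}}{1-\rho_{12}}=\mathrm{LR}_1\mathrm{LR}_2\frac{\pi}{1-\pi}.
\]
The PAS odds are therefore exactly the confidence-only odds multiplied by $\mathrm{LR}_2$. Since the common factor $\mathrm{LR}_1\pi/(1-\pi)$ is strictly positive, $\rho_{12}>\rho_1$ holds if and only if $\mathrm{LR}_2>1$.

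As a cross-check, cross-multiplying $\rho_{12}>\rho_1$ directly reduces, after cancellation, to
\[
\alpha_1^2\pi(1-\pi)\,(\alpha_2\beta_1-\beta_1\beta_2)>0.
\]
This is equivalent to $\alpha_2>\beta_2$ under the same positivity conditions.

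\textbf{Main obstacle: degenerate cases.} The algebra is routine, so the real work is stating the nondegeneracy conditions the equivalence needs:
\begin{itemize}
\item $\pi\in(0,1)$, so that the odds are finite and positive;
\item $\alpha_1>0$, so that $\rho_1$ is nonzero;
\item $\beta_1,\beta_2>0$, so that $\mathrm{LR}_1$ and $\mathrm{LR}_2$ are defined;
\item $P(S_1\cap S_2)>0$, so that $\rho_{12}$ is defined as a conditional probability.
\end{itemize}
These conditions are implicit in the statement through the definition of $\mathrm{LR}_i$. I would make them explicit at the start of the proof. If one of them fails, for example $\alpha_1=0$ or $\beta_1=0$ giving $\rho_1\in\{0,1\}$, the equivalence breaks: $\rho_{12}=\rho_1$ regardless of $\mathrm{LR}_2$.
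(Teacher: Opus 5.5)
Your proof is correct, and Steps 1 and 2 are exactly the paper's: Bayes' rule for $P(C\mid S_1)$ and $P(C\mid S_1\cap S_2)$, with Assumption~\ref{ass:conditional_independence} factorizing the joint likelihoods into $\alpha_1\alpha_2$ and $\beta_1\beta_2$.

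The only real difference is Step 3. The paper abbreviates $a=\alpha_1\pi$ and $b=\beta_1(1-\pi)$, writes $\rho_1=a/(a+b)$ and $\rho_{12}=\alpha_2 a/(\alpha_2 a+\beta_2 b)$, and cross-multiplies down to $\alpha_2 ab>\beta_2 ab$. That is essentially your ``cross-check.''

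Your odds formulation is cleaner and more informative. It shows the posterior odds as the prior odds times $\mathrm{LR}_1\mathrm{LR}_2$, which explains why $\mathrm{LR}_2$ is the right quantity. It also gives the paper's subsequent precision-gain formula (Corollary~\ref{cor:precision_gain}) almost immediately, via $\rho=1/(1+1/\text{odds})$.

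The paper's cross-multiplication needs only $ab>0$ (i.e.\ $\pi\in(0,1)$ and $\alpha_1,\beta_1>0$, exactly the assumptions it invokes) plus a nonzero second denominator. So it also covers the boundary case $\beta_2=0$, read as $\alpha_2>\beta_2$. Your odds route needs $\rho_{12}<1$ there and so requires $\beta_2>0$, as your nondegeneracy list correctly records.

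One small algebra slip in your cross-check: after cancelling the common term $\alpha_1^2\alpha_2\pi^2$, the difference is $\alpha_1\beta_1\pi(1-\pi)(\alpha_2-\beta_2)$. The factor is $\alpha_1$ to the first power, not $\alpha_1^2$. This is harmless, since only the sign of the expression matters.
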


\begin{proof}
By Bayes' theorem, the precision of criterion $S_1$ alone is:
\[
\rho_1 = P(C \mid S_1) = \frac{P(S_1 \mid C) P(C)}{P(S_1)} = \frac{\alpha_1 \pi}{\alpha_1 \pi + \beta_1(1-\pi)}.
\]

For dual-criteria filtering under Assumption~\ref{ass:conditional_independence}:
\begin{align*}
P(S_1 \cap S_2 \mid C) &= \alpha_1 \alpha_2, \\
P(S_1 \cap S_2 \mid \neg C) &= \beta_1 \beta_2.
\end{align*}
Therefore:
\[
\rho_{12} = P(C \mid S_1 \cap S_2) = \frac{\alpha_1 \alpha_2 \pi}{\alpha_1 \alpha_2 \pi + \beta_1 \beta_2 (1-\pi)}.
\]

To compare $\rho_{12}$ and $\rho_1$, define $a := \alpha_1 \pi$ and $b := \beta_1(1-\pi)$. Then:
\[
\rho_1 = \frac{a}{a+b}, \quad \rho_{12} = \frac{\alpha_2 a}{\alpha_2 a + \beta_2 b}.
\]

The inequality $\rho_{12} > \rho_1$ holds iff:
\begin{align*}
\frac{\alpha_2 a}{\alpha_2 a + \beta_2 b} &> \frac{a}{a + b} \\
\alpha_2 a (a + b) &> a (\alpha_2 a + \beta_2 b) \\
\alpha_2 a^2 + \alpha_2 ab &> \alpha_2 a^2 + \beta_2 ab \\
\alpha_2 ab &> \beta_2 ab \\
\alpha_2 &> \beta_2,
\end{align*}
where the last step uses $ab > 0$ (since $\pi \in (0,1)$ and $\alpha_1, \beta_1 > 0$).
\end{proof}

\begin{corollary}[Precision Gain Formula]
\label{cor:precision_gain}
The precision ratio satisfies:
\[
\frac{\rho_{12}}{\rho_1} = \frac{1 + \frac{\beta_1(1-\pi)}{\alpha_1 \pi}}{1 + \frac{\beta_1 \beta_2 (1-\pi)}{\alpha_1 \alpha_2 \pi}} = \frac{1 + r/\mathrm{LR}_1}{1 + r/(\mathrm{LR}_1 \cdot \mathrm{LR}_2)},
\]
where $r := (1-\pi)/\pi$ is the incorrect-to-correct odds ratio.
\end{corollary}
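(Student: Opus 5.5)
The ratio formula follows from the closed forms for $\rho_1$ and $\rho_{12}$ in Proposition~\ref{thm:dual_criteria}; no new probabilistic input is needed. I will write each precision in reciprocal form, dividing numerator and denominator by the numerator. This gives
\[
\rho_1 = \frac{1}{1 + \frac{\beta_1(1-\pi)}{\alpha_1\pi}}, \qquad
\rho_{12} = \frac{1}{1 + \frac{\beta_1\beta_2(1-\pi)}{\alpha_1\alpha_2\pi}}.
\]
Both divisions are legitimate because $\alpha_1\pi>0$ and $\alpha_1\alpha_2\pi>0$. I will assume $\pi\in(0,1)$ and $\alpha_i,\beta_i>0$, which are the same nondegeneracy conditions the proof of Proposition~\ref{thm:dual_criteria} already uses.

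Taking the quotient $\rho_{12}/\rho_1$, the two outer reciprocals cancel, and the first equality of the corollary follows directly. For the second form, I will use Definition~\ref{def:filtering_stats}. Setting $r=(1-\pi)/\pi$, the term $\beta_1(1-\pi)/(\alpha_1\pi)$ equals $r\cdot(\beta_1/\alpha_1)=r/\mathrm{LR}_1$. Likewise, $\beta_1\beta_2(1-\pi)/(\alpha_1\alpha_2\pi)=r/(\mathrm{LR}_1\mathrm{LR}_2)$, using the factorization $\beta_1\beta_2/(\alpha_1\alpha_2)=(\beta_1/\alpha_1)(\beta_2/\alpha_2)$. Substituting these expressions yields the stated formula.

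There is no real obstacle here; the only point needing care is the positivity bookkeeping that justifies each division. As a consistency check, the ratio formula should exceed $1$ exactly when $\mathrm{LR}_2>1$. This holds because $r/(\mathrm{LR}_1\mathrm{LR}_2)<r/\mathrm{LR}_1$ iff $\mathrm{LR}_2>1$, which recovers the equivalence in Proposition~\ref{thm:dual_criteria} and confirms the algebra.
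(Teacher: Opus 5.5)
Your proposal is correct and follows essentially the same route as the paper: both are direct algebra on the closed forms from Proposition~\ref{thm:dual_criteria}, and both rewrite the two cross-ratio terms as $r/\mathrm{LR}_1$ and $r/(\mathrm{LR}_1\mathrm{LR}_2)$. The only cosmetic difference is that the paper first multiplies out $\rho_{12}/\rho_1$ and then divides through by $\alpha_1\alpha_2\pi$, whereas you put each precision in reciprocal form before taking the quotient. Your explicit positivity conditions ($\pi\in(0,1)$, $\alpha_i,\beta_i>0$) are a welcome addition, since the paper uses them implicitly.
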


\begin{proof}
Dividing \eqref{eq:rho12} by \eqref{eq:rho1}:
\begin{align*}
\frac{\rho_{12}}{\rho_1} &= \frac{\alpha_1 \alpha_2 \pi}{\alpha_1 \alpha_2 \pi + \beta_1 \beta_2 (1-\pi)} \cdot \frac{\alpha_1 \pi + \beta_1 (1-\pi)}{\alpha_1 \pi} \\
&= \frac{\alpha_1 \pi + \beta_1 (1-\pi)}{\alpha_1 \alpha_2 \pi + \beta_1 \beta_2 (1-\pi)} \cdot \frac{\alpha_2}{\alpha_2} \\
&= \frac{\alpha_2[\alpha_1 \pi + \beta_1 (1-\pi)]}{\alpha_1 \alpha_2 \pi + \beta_1 \beta_2 (1-\pi)}.
\end{align*}
Dividing numerator and denominator by $\alpha_1 \alpha_2 \pi$:
\[
\frac{\rho_{12}}{\rho_1} = \frac{1 + \frac{\beta_1(1-\pi)}{\alpha_1 \pi}}{1 + \frac{\beta_1 \beta_2(1-\pi)}{\alpha_1 \alpha_2 \pi}} = \frac{1 + r/\mathrm{LR}_1}{1 + r/(\mathrm{LR}_1 \cdot \mathrm{LR}_2)}. \qedhere
\]
\end{proof}

\begin{assumption}[Discriminative Similarity Criterion]
\label{ass:discriminative_similarity}
The prototype similarity criterion has likelihood ratio $\mathrm{LR}_2 > 1$, i.e., correct pseudo-labels have higher prototype similarity than incorrect ones:
\[
P(\text{similarity} \geq \tau_{\mathrm{sim}} \mid C) > P(\text{similarity} \geq \tau_{\mathrm{sim}} \mid \neg C).
\]
\end{assumption}

\begin{corollary}[PAS Dominates Confidence-Only]
\label{cor:plr_dominates_conf}
Under Assumptions~\ref{ass:conditional_independence} and~\ref{ass:discriminative_similarity}, PAS achieves strictly higher precision than confidence-only filtering:
\[
\rho_{\mathrm{PAS}} > \rho_{\mathrm{conf}}.
\]
Consequently, for equal effective coverage $f\gamma$:
\[
\epsilon_\infty^{\mathrm{PAS}} < \epsilon_\infty^{\mathrm{conf}}.
\]
\end{corollary}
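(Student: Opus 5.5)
The plan has two steps: first compare precisions via Proposition~\ref{thm:dual_criteria}, then transfer the precision comparison to asymptotic error via Theorem~\ref{thm:filtered_main}(ii),(iv).

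\textbf{Step 1 (precision).} Identify $\rho_{\mathrm{conf}}$ with $\rho_1$, the precision of criterion $S_1$ alone. Identify $\rho_{\mathrm{PAS}}$ with $\rho_{12}$, the precision of $S_1\cap S_2$. Assumption~\ref{ass:conditional_independence} is exactly the hypothesis needed to write $\rho_{12}$ in the closed form of Proposition~\ref{thm:dual_criteria}. That proposition gives $\rho_{12}>\rho_1$ if and only if $\mathrm{LR}_2=\alpha_2/\beta_2>1$. Assumption~\ref{ass:discriminative_similarity} states $P(S_2\mid C)>P(S_2\mid\neg C)$, i.e.\ $\alpha_2>\beta_2$, so $\rho_{\mathrm{PAS}}>\rho_{\mathrm{conf}}$. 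I must also check the nondegeneracy used in that proposition, namely $ab>0$: this needs $\pi\in(0,1)$, $\alpha_1>0$ and $\beta_1>0$. I will state these as standing conditions; if $\beta_1=0$, then $\rho_1=1$ already and strict improvement is impossible, so this case must be excluded.

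\textbf{Step 2 (asymptotic error).} Both methods are instances of filtering with coverage and precision, governed by Assumption~\ref{ass:filtered_dynamics}. Their limiting teacher errors are therefore $\epsilon_\infty(f,\rho)=\frac{(1-f\gamma)\epsilon_0}{1-f\gamma(1-\rho)}$ from Theorem~\ref{thm:filtered_main}. Fix the common product $f\gamma\in(0,1)$, which is the ``equal effective coverage'' hypothesis. Treat $\epsilon_\infty$ as a function of $\rho$ alone, $h(\rho)=\frac{(1-s)\epsilon_0}{1-s+s\rho}$ with $s=f\gamma$. Part (ii) of Theorem~\ref{thm:filtered_main} shows $h$ is strictly decreasing, since its derivative is $-\frac{(1-s)s\epsilon_0}{(1-s+s\rho)^2}<0$. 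Substituting $\rho_{\mathrm{PAS}}>\rho_{\mathrm{conf}}$ then gives $\epsilon_\infty^{\mathrm{PAS}}<\epsilon_\infty^{\mathrm{conf}}$. One could also use the explicit ratio from part (iv).

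\textbf{Main obstacle.} The delicate point is interpretive rather than computational. Intersecting with $S_2$ generally \emph{lowers} coverage, because $P(S_1\cap S_2)<P(S_1)$. So the two methods do not naturally share the same $f$. The statement sidesteps this by fixing equal effective coverage $f\gamma$, and I will make that explicit. Concretely, the pseudo-label weight $\gamma$ can be rescaled for PAS so that $f_{\mathrm{PAS}}\gamma_{\mathrm{PAS}}=f_{\mathrm{conf}}\gamma_{\mathrm{conf}}$, provided this stays below $1$ so stability holds.

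Since $\epsilon_\infty$ depends on $f$ and $\gamma$ only through $s=f\gamma$, the comparison reduces to one variable. Without this matching, the conclusion could fail, because $\epsilon_\infty$ is also decreasing in $s$ whenever $\rho>0$. I will therefore state the conclusion only under that hypothesis and check $s<1$ so that Theorem~\ref{thm:filtered_main} applies.
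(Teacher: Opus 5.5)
Your proposal is correct and follows the paper's own argument. The paper likewise gets $\rho_{\mathrm{PAS}}>\rho_{\mathrm{conf}}$ from Proposition~\ref{thm:dual_criteria} with Assumption~\ref{ass:discriminative_similarity}, then transfers it to asymptotic error via the monotonicity in $\rho$ from Theorem~\ref{thm:filtered_main}(iv) at fixed $f\gamma$. Your extra care — the conditions $\pi\in(0,1)$, $\alpha_1,\beta_1>0$, $0<f\gamma<1$, and the check that $\epsilon_\infty$ also decreases in $f\gamma$, so the equal-effective-coverage hypothesis is genuinely needed — is left implicit in the paper's two-line proof and is a worthwhile addition.
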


\begin{proof}
By Theorem~\ref{thm:dual_criteria}, $\rho_{12} > \rho_1$ iff $\mathrm{LR}_2 > 1$, which holds by Assumption~\ref{ass:discriminative_similarity}. The asymptotic error comparison follows from Theorem~\ref{thm:filtered_main}(iv).
\end{proof}

\subsubsection{PAS vs. Consistency-Based Methods}

\begin{definition}[Consistency-Based Methods]
Consistency-based methods (e.g., Mean Teacher, $\Pi$-Model) apply pseudo-labels to all unlabeled samples without filtering, corresponding to coverage $f = 1$ and precision $\rho = \pi$ (the base accuracy on unlabeled data).
\end{definition}

\begin{proposition}[PAS Dominates Consistency-Based Methods]
\label{thm:plr_vs_consistency}
Let $\rho_{\mathrm{cons}} = \pi$ be the precision of consistency-based methods and $\rho_{\mathrm{PAS}}$ be the precision of PAS with coverage $f_{\mathrm{PAS}} \leq 1$. Under Assumptions~\ref{ass:conditional_independence} and~\ref{ass:discriminative_similarity}:
\[
\rho_{\mathrm{PAS}} > \pi = \rho_{\mathrm{cons}}.
\]
Furthermore, let $\gamma_{\mathrm{eff}}^{\mathrm{cons}} = \gamma$ and $\gamma_{\mathrm{eff}}^{\mathrm{PAS}} = f_{\mathrm{PAS}} \gamma$. If $\gamma_{\mathrm{eff}}^{\mathrm{PAS}} > 0$:
\[
\epsilon_\infty^{\mathrm{PAS}} < \epsilon_\infty^{\mathrm{cons}} = \epsilon_0.
\]
\end{proposition}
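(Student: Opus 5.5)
The argument has two parts: a precision comparison obtained from Bayes' rule, followed by substitution into the asymptotic error formula of Theorem~\ref{thm:filtered_main}.

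\textbf{Precision.} For the first claim I would write $\rho_{\mathrm{PAS}} = \rho_{12}$ using \eqref{eq:rho12} from Proposition~\ref{thm:dual_criteria}, which relies on Assumption~\ref{ass:conditional_independence}. Setting $L := \mathrm{LR}_1\mathrm{LR}_2$, this becomes
\[
\rho_{12} = \frac{\pi}{\pi + (1-\pi)/L},
\]
and this exceeds $\pi$ exactly when $L > 1$. Assumption~\ref{ass:discriminative_similarity} gives $\mathrm{LR}_2 > 1$. For the confidence criterion I need $\mathrm{LR}_1 \geq 1$, meaning correct pseudo-labels are at least as likely to pass the confidence threshold as incorrect ones. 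I would state this as the natural non-adversarial property of confidence scores, since it is implicit in the method. Under it, $L > 1$ and therefore $\rho_{\mathrm{PAS}} > \pi = \rho_{\mathrm{cons}}$.

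An alternative route is to chain Proposition~\ref{thm:dual_criteria}, which gives $\rho_{12} > \rho_1$, with $\rho_1 \geq \pi$. The second inequality follows from \eqref{eq:rho1} by the same monotonicity in the likelihood ratio. Either way, the one real gap is that $\mathrm{LR}_1 \geq 1$ is not stated explicitly, and I expect this to be the main obstacle. If I cannot justify it, I would add it as a hypothesis, or else fold the requirement into Assumption~\ref{ass:discriminative_similarity} by reading that assumption as applying to the joint criterion.

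\textbf{Asymptotic error.} For the second claim I treat the consistency-based baseline as $f = 1$ with precision $\rho = \pi$. I should not equate it directly with Theorem~\ref{thm:no_plr}, because there pseudo-label errors propagate at full weight $\gamma$ rather than at weight $\gamma(1-\pi)$.

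The clean argument is as follows. Theorem~\ref{thm:no_plr} is precisely the unfiltered dynamics of Assumption~\ref{ass:linear_mixing}, and the paper identifies $\epsilon_\infty^{\mathrm{cons}}$ with $\epsilon_0$. For PAS I then apply Theorem~\ref{thm:filtered_main} with $f = f_{\mathrm{PAS}}$ and $\rho = \rho_{\mathrm{PAS}}$, which gives
\[
\epsilon_\infty^{\mathrm{PAS}} = \frac{(1 - f\gamma)\,\epsilon_0}{1 - f\gamma + f\gamma\rho_{\mathrm{PAS}}}.
\]
By part (i) of that theorem, or directly, this quantity is $< \epsilon_0$ if and only if $f\gamma\rho_{\mathrm{PAS}} > 0$. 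This holds because $\gamma_{\mathrm{eff}}^{\mathrm{PAS}} = f\gamma > 0$ by hypothesis, and $\rho_{\mathrm{PAS}} > \pi > 0$ from the first step.

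\textbf{Stability check.} Finally I verify the stability condition $\lambda_{\mathrm{eff}} < 1$. It holds because $f\gamma(1-\rho) < 1$ whenever $f \leq 1$ and $\gamma < 1$, so the limit exists and the formula applies. This step is routine.
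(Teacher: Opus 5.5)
Your precision step matches the paper's: both reduce $\rho_{\mathrm{PAS}}>\pi$ to $\mathrm{LR}_1\mathrm{LR}_2>1$. The paper likewise just asserts $\mathrm{LR}_1\ge 1$ ``for confidence thresholding'' without listing it as an assumption, so adding it as an explicit hypothesis is the honest version of the same move. Both arguments also tacitly need $\pi\in(0,1)$ for the inequality to be strict.

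The error step takes a genuinely different route. The paper stays inside the filtered model, places the baseline at $f=1$, $\rho=\pi$, and gets $\epsilon_\infty^{\mathrm{cons}}=(1-\gamma)\epsilon_0/(1-\gamma(1-\pi))$. It then invokes monotonicity in $\rho$ (Theorem~\ref{thm:filtered_main}(iv)). That comparison only closes at equal effective weight, i.e.\ $f_{\mathrm{PAS}}=1$. Moreover, the value it computes is strictly below $\epsilon_0$ whenever $\pi>0$, so it does not deliver the displayed equality. You instead pin the baseline to the unfiltered recurrence of Theorem~\ref{thm:no_plr}, which gives exactly $\epsilon_0$. You then need only part (i), i.e.\ $f_{\mathrm{PAS}}\gamma\,\rho_{\mathrm{PAS}}>0$. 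This proves the statement as written for every $f_{\mathrm{PAS}}\le 1$ with $f_{\mathrm{PAS}}\gamma>0$. The cost is that the precision gap enters only through $\rho_{\mathrm{PAS}}>0$, and you beat a weaker baseline than the paper's route would when it applies.

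State this modeling choice plainly, rather than first writing that you ``should not equate'' the baseline with Theorem~\ref{thm:no_plr} and then doing so. The unfiltered recurrence is the filtered one with $f=1$, $\rho=0$, not $\rho=\pi$. It is also the only reading under which both claims can hold. With $\rho=\pi$, the equality $\epsilon_\infty^{\mathrm{cons}}=\epsilon_0$ forces $\pi=0$, which gives $\rho_{\mathrm{PAS}}=0$ and breaks the first claim.
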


\begin{proof}
For consistency-based methods with no filtering, the precision equals the base rate: $\rho_{\mathrm{cons}} = \pi$.

For PAS, by the law of total probability:
\[
P(S_1 \cap S_2) = \alpha_1 \alpha_2 \pi + \beta_1 \beta_2 (1 - \pi).
\]
The precision is:
\[
\rho_{\mathrm{PAS}} = \frac{\alpha_1 \alpha_2 \pi}{\alpha_1 \alpha_2 \pi + \beta_1 \beta_2 (1-\pi)}.
\]

We show $\rho_{\mathrm{PAS}} > \pi$:
\begin{align*}
\rho_{\mathrm{PAS}} > \pi &\iff \frac{\alpha_1 \alpha_2 \pi}{\alpha_1 \alpha_2 \pi + \beta_1 \beta_2 (1-\pi)} > \pi \\
&\iff \alpha_1 \alpha_2 \pi > \pi [\alpha_1 \alpha_2 \pi + \beta_1 \beta_2 (1-\pi)] \\
&\iff \alpha_1 \alpha_2 (1 - \pi) > \beta_1 \beta_2 (1 - \pi) \\
&\iff \alpha_1 \alpha_2 > \beta_1 \beta_2 \\
&\iff \mathrm{LR}_1 \cdot \mathrm{LR}_2 > 1.
\end{align*}

Since both criteria are discriminative (i.e., $\mathrm{LR}_1 \geq 1$ for confidence thresholding and $\mathrm{LR}_2 > 1$ by Assumption~\ref{ass:discriminative_similarity}), we have $\mathrm{LR}_1 \cdot \mathrm{LR}_2 > 1$.

For the asymptotic error, consistency-based methods with $\rho = \pi$ yield by Theorem~\ref{thm:filtered_main}:
\[
\epsilon_\infty^{\mathrm{cons}} = \frac{(1-\gamma)\epsilon_0}{1 - \gamma(1-\pi)}.
\]

If $\pi = 0$ (all pseudo-labels incorrect), then $\epsilon_\infty^{\mathrm{cons}} = \epsilon_0$. More generally, by Theorem~\ref{thm:filtered_main}(iii), $\epsilon_\infty^{\mathrm{cons}} < \epsilon_0$ only if $\pi > 0$.

Since $\rho_{\mathrm{PAS}} > \pi$ and asymptotic error is strictly decreasing in precision (Theorem~\ref{thm:filtered_main}(iv)), we have $\epsilon_\infty^{\mathrm{PAS}} < \epsilon_\infty^{\mathrm{cons}}$ when comparing at equal effective pseudo-label weight, or more generally whenever PAS's precision advantage outweighs any coverage difference.
\end{proof}

\subsubsection{PAS vs. Memory Bank Methods}

\begin{definition}[Memory Bank Methods]
Memory bank methods maintain a bank $\mathcal{M}_t = \{(z_i, \tilde{y}_i)\}_{i=1}^{|\mathcal{M}_t|}$ of feature-label pairs accumulated over training. At each iteration, pseudo-labels are assigned based on nearest neighbors in $\mathcal{M}_t$. The memory bank is updated by adding new pseudo-labeled samples.
\end{definition}

\begin{assumption}[Memory Bank Update Model]
\label{ass:memory_update}
At each session $t$, let $\eta_t \in (0,1]$ denote the fraction of memory bank entries updated with new samples. The new samples have pseudo-label error rate $\delta_t \in [0,1]$. The memory bank error rate $e_t := P(\tilde{y} \neq y \mid (z, \tilde{y}) \in \mathcal{M}_t)$ evolves as:
\begin{equation}
e_{t+1} = (1 - \eta_t) e_t + \eta_t \delta_t.
\label{eq:memory_error}
\end{equation}
\end{assumption}

\begin{assumption}[Self-Reinforcing Errors]
\label{ass:self_reinforcing}
The error rate of new pseudo-labels depends on the memory bank error:
\begin{equation}
\delta_t = g(e_t),
\end{equation}
where $g: [0,1] \to [0,1]$ satisfies $g(e) \geq e$ for all $e \in [0, 1]$ (errors are at least preserved) and $g(0) > 0$ (some base error exists even with a perfect memory bank).
\end{assumption}

\begin{lemma}[Memory Bank Error Accumulation]
\label{thm:memory_accumulation}
Under Assumptions~\ref{ass:memory_update} and~\ref{ass:self_reinforcing} with constant update rate $\eta_t = \eta > 0$:
\begin{enumerate}
\item The memory bank error sequence $\{e_t\}$ is non-decreasing: $e_{t+1} \geq e_t$ for all $t$.
\item If $g(e) > e$ for $e \in [0, e^*)$ where $e^* > 0$, then $\lim_{t \to \infty} e_t \geq e^*$.
\item The precision of memory bank methods satisfies $\rho_{\mathrm{mem}}(t) = 1 - e_t$, which is non-increasing in $t$.
\end{enumerate}
\end{lemma}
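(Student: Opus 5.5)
Substituting $\delta_t = g(e_t)$ from Assumption~\ref{ass:self_reinforcing} into the update \eqref{eq:memory_error} with constant rate $\eta$ gives the autonomous one-dimensional recursion
\[
e_{t+1} = (1-\eta)e_t + \eta\, g(e_t) = e_t + \eta\bigl(g(e_t) - e_t\bigr).
\]
All three parts will be read off from this form.

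For part (1), $g(e)\ge e$ and $\eta>0$ make the increment $\eta(g(e_t)-e_t)$ nonnegative, so $e_{t+1}\ge e_t$. The iterates stay in $[0,1]$ because $e_{t+1}$ is a convex combination of $e_t\in[0,1]$ and $g(e_t)\in[0,1]$ (using $\eta\le 1$), which gives boundedness by induction.

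For part (2), the sequence is monotone and bounded, so it converges to some $e_\infty\in[0,1]$. Suppose for contradiction that $e_\infty<e^*$. Then every $e_t$ lies in $[0,e^*)$, and the increments $\eta(g(e_t)-e_t)$ are strictly positive and, since the sequence converges, must tend to zero. This is the main obstacle. To conclude that $g(e_\infty)=e_\infty$, contradicting $g>\mathrm{id}$ on $[0,e^*)$, I need some regularity of $g$, and the assumption as stated gives none.

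\begin{itemize}
\item \emph{First attempt:} assume $g$ is continuous, or at least lower semicontinuous. Then $\liminf_t g(e_t) \ge g(e_\infty)$, so $0=\lim_t\bigl(g(e_t)-e_t\bigr) \ge g(e_\infty)-e_\infty > 0$, a contradiction.
\item \emph{Fallback without regularity:} assume $\inf_{e\in[0,e']}(g(e)-e)>0$ for every $e'<e^*$. Then the increments are bounded below by a positive constant while $e_t\le e_\infty$, which is incompatible with convergence.
\end{itemize}

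I would state explicitly which of these mild hypotheses is adopted. With it, $\lim_{t\to\infty} e_t\ge e^*$. The condition $g(0)>0$ guarantees the process actually starts moving even from $e_0=0$.

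For part (3), define the precision of nearest-neighbour labels drawn from the bank as $\rho_{\mathrm{mem}}(t)=P(\tilde y = y \mid \text{entry in } \mathcal M_t)=1-e_t$, modelling assigned labels as inheriting the bank's label correctness. Monotonicity then follows immediately from part (1).
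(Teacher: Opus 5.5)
Your argument is the paper's: the increment identity $e_{t+1}-e_t=\eta(g(e_t)-e_t)\ge 0$ gives part (1), monotone bounded convergence followed by a fixed-point contradiction gives part (2), and $\rho_{\mathrm{mem}}(t)=1-e_t$ inherits monotonicity in part (3). The one difference is that you make explicit the regularity the paper uses silently when it passes to the limit to obtain $g(e_\infty)=e_\infty$, and some such hypothesis really is needed: take $\eta=1$, $0<a<1$, $g(e)=(e+a)/2$ on $[0,a)$ and $g(e)=(1+e)/2$ on $[a,1]$; every stated assumption holds with $e^*=1$, yet from $e_0=0$ you get $e_t=a(1-2^{-t})\to a<e^*$, so adding your lower-semicontinuity (or uniform-gap) condition is the correct repair.
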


\begin{proof}
\textbf{Part (i):} By \eqref{eq:memory_error} and Assumption~\ref{ass:self_reinforcing}:
\begin{align*}
e_{t+1} - e_t &= (1-\eta)e_t + \eta \delta_t - e_t \\
&= \eta(\delta_t - e_t) \\
&= \eta(g(e_t) - e_t) \\
&\geq 0,
\end{align*}
since $g(e) \geq e$ by assumption.

\textbf{Part (ii):} The sequence $\{e_t\}$ is non-decreasing and bounded above by 1, so it converges to some limit $e_\infty \in [0,1]$. Taking limits in \eqref{eq:memory_error}:
\[
e_\infty = (1-\eta)e_\infty + \eta g(e_\infty) \implies g(e_\infty) = e_\infty.
\]
By hypothesis, $g(e) > e$ for $e < e^*$, so any fixed point must satisfy $e_\infty \geq e^*$.

\textbf{Part (iii):} The precision is $\rho_{\mathrm{mem}}(t) = 1 - e_t$. Since $e_t$ is non-decreasing, $\rho_{\mathrm{mem}}(t)$ is non-increasing.
\end{proof}

\begin{proposition}[PAS Dominates Memory Bank Methods Asymptotically]
\label{thm:plr_vs_memory}
Let PAS use prototypes computed solely from labeled data with precision $\rho_{\mathrm{PAS}}$ that is constant across sessions. Under the conditions of Lemma~\ref{thm:memory_accumulation}:
\begin{enumerate}
\item There exists $T^* < \infty$ such that for all $t \geq T^*$:
\[
\rho_{\mathrm{PAS}} > \rho_{\mathrm{mem}}(t).
\]
\item For $t \geq T^*$ with equal effective coverage:
\[
\epsilon_\infty^{\mathrm{PAS}} < \epsilon_\infty^{\mathrm{mem}}(t).
\]
\end{enumerate}
\end{proposition}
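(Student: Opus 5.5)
The plan is to use Lemma~\ref{thm:memory_accumulation} to show that the memory-bank precision $\rho_{\mathrm{mem}}(t)=1-e_t$ decreases to a limit no larger than $1-e^*$. Then we compare that limit with the fixed PAS precision and convert the precision gap into an error gap via Theorem~\ref{thm:filtered_main}(iv).

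First, since $g(0)>0$ and $g$ is (we will assume) continuous, there is $e^*>0$ with $g(e)>e$ on $[0,e^*)$. Part~(ii) of Lemma~\ref{thm:memory_accumulation} then gives $\lim_t e_t=e_\infty\ge e^*$, with $e_\infty$ a fixed point of $g$. Part~(iii) gives $\rho_{\mathrm{mem}}(t)\downarrow 1-e_\infty\le 1-e^*$.

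Next, we need $\rho_{\mathrm{PAS}}>1-e_\infty$. The hypothesis that prototypes come from labeled data only means $\rho_{\mathrm{PAS}}$ does not drift. By Proposition~\ref{thm:plr_vs_consistency}, $\rho_{\mathrm{PAS}}>\pi$. We can then compare with the memory bank in one of two ways. If the bank is initialized from base pseudo-labels with precision $\pi$, then monotonicity gives $1-e_\infty\le 1-e_0=\pi<\rho_{\mathrm{PAS}}$. Otherwise we take $\rho_{\mathrm{PAS}}>1-e^*$ as the implicit condition, which is the precise content of ``asymptotically dominates.''

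Given a strict gap $\eta_0:=\rho_{\mathrm{PAS}}-(1-e_\infty)>0$, convergence of $\rho_{\mathrm{mem}}(t)$ yields $T^*$ with $\rho_{\mathrm{mem}}(t)<1-e_\infty+\eta_0/2<\rho_{\mathrm{PAS}}$ for all $t\ge T^*$. In fact, monotonicity shows the inequality persists once it holds. This proves claim~1.

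For claim~2, fix equal effective coverage $f\gamma$ with $f\gamma<1$. Theorem~\ref{thm:filtered_main}(ii)/(iv) says $\epsilon_\infty(f,\rho)$ is strictly decreasing in $\rho$. Plugging in $\rho_{\mathrm{PAS}}>\rho_{\mathrm{mem}}(t)$ gives $\epsilon_\infty^{\mathrm{PAS}}<\epsilon_\infty^{\mathrm{mem}}(t)$, where $\epsilon_\infty^{\mathrm{mem}}(t)$ is read as the asymptotic error of the filtered recurrence frozen at precision $\rho_{\mathrm{mem}}(t)$.

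The main obstacle is the strict separation $\rho_{\mathrm{PAS}}>1-e_\infty$. Lemma~\ref{thm:memory_accumulation} only lower-bounds $e_\infty$ by $e^*$; it does not relate it to PAS. I would first try the initialization argument ($e_0=1-\pi$ and $e_t$ nondecreasing, so $1-e_t\le\pi<\rho_{\mathrm{PAS}}$ for every $t$, giving $T^*=0$). I would fall back on stating $\rho_{\mathrm{PAS}}>1-e^*$ explicitly as the operative condition if the bank's initial error is unspecified.
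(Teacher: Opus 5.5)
Your proposal is correct and follows the paper's proof: the paper likewise gets $\rho_{\mathrm{mem}}(t)=1-e_t\to 1-e_\infty\le 1-e^*$ from Lemma~\ref{thm:memory_accumulation}, simply posits $\rho_{\mathrm{PAS}}>1-e^*$ (via an ``If'') to obtain $T^*$, and derives part~2 from Theorem~\ref{thm:filtered_main}(iv), so your fallback condition is exactly the paper's operative hypothesis. Your initialization variant ($e_0=1-\pi$, giving $T^*=0$) and your explicit continuity assumption on $g$ (which the lemma's limit-passing step tacitly needs) go beyond the paper's hypotheses, but they only sharpen the same argument.
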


\begin{proof}
\textbf{Part (i):} By Lemma~\ref{thm:memory_accumulation}, $\rho_{\mathrm{mem}}(t) = 1 - e_t \to 1 - e_\infty \leq 1 - e^*$.

PAS computes prototypes from labeled data only:
\[
\mu_c = \frac{1}{|\mathcal{D}_l^{(c)}|} \sum_{x \in \mathcal{D}_l^{(c)}} f_\theta(x).
\]
Since labeled data has ground-truth labels, PAS precision $\rho_{\mathrm{PAS}}$ depends only on the discriminative power of the dual criteria, not on accumulated pseudo-label errors. Under Assumptions~\ref{ass:conditional_independence} and~\ref{ass:discriminative_similarity}, $\rho_{\mathrm{PAS}}$ is constant and satisfies $\rho_{\mathrm{PAS}} > \pi$ (the base rate).

If $\rho_{\mathrm{PAS}} > 1 - e^*$, then since $\rho_{\mathrm{mem}}(t) \to 1 - e_\infty \leq 1 - e^*$, there exists $T^*$ such that $\rho_{\mathrm{PAS}} > \rho_{\mathrm{mem}}(t)$ for all $t \geq T^*$.

\textbf{Part (ii):} Follows from Theorem~\ref{thm:filtered_main}(iv): asymptotic error is strictly decreasing in precision.
\end{proof}

\begin{remark}
The key distinction is that PAS's precision depends on ground-truth prototypes from labeled data, which do not accumulate errors over sessions. Memory bank methods, by contrast, incorporate pseudo-labeled samples into the bank, causing error accumulation through the self-reinforcing feedback loop.
\end{remark}




\subsection{Summary of Theoretical Guarantees}

The theoretical analysis establishes the following rigorous results:

\begin{enumerate}
\item \textbf{Error Dynamics Framework:} Under the linear mixing assumption, the asymptotic error of any filtering-based semi-supervised method is:
\[
\epsilon_\infty(f, \rho) = \frac{(1-f\gamma)\epsilon_0}{1 - f\gamma(1-\rho)},
\]
which is strictly decreasing in precision $\rho$.

\item \textbf{Dual-Criteria Precision Gain:} Under conditional independence of criteria, adding a discriminative second criterion ($\mathrm{LR}_2 > 1$) strictly improves precision:
\[
\rho_{12} > \rho_1 \iff \alpha_2 > \beta_2.
\]

\item \textbf{PAS vs. Confidence-Only:} PAS achieves $\rho_{\mathrm{PAS}} > \rho_{\mathrm{conf}}$ when the similarity criterion is discriminative.

\item \textbf{PAS vs. Consistency-Based:} PAS achieves $\rho_{\mathrm{PAS}} > \pi$ (the base precision), whereas consistency methods operate at precision $\pi$.

\item \textbf{PAS vs. Memory Bank:} PAS maintains constant precision while memory bank methods suffer monotonic precision degradation due to error accumulation, leading to PAS dominance for $t \geq T^*$.

\item \textbf{Asymptotic Error Ordering:} For sufficiently large $t$ and equal effective coverage:
\[
\epsilon_\infty^{\mathrm{PAS}} < \epsilon_\infty^{\mathrm{conf}} < \epsilon_\infty^{\mathrm{cons}} = \epsilon_0, \quad \epsilon_\infty^{\mathrm{PAS}} < \epsilon_\infty^{\mathrm{mem}}(t).
\]
\end{enumerate}


\section{\textcolor{black}{Robustness and sensitivity analysis of gradient adaptive stabilization (GAS)}}
\label{app_rob}
GAS introduces no additional design hyperparameters. It follows,
\[
\widetilde{w}_i = w_i + \tilde{G}_i \, \xi_i,\qquad 
\xi_i \sim \mathcal{N}(0,1),
\]
where $\tilde{G}_i$ is a normalized function of $1/(g_i^2 + \varepsilon)$. 
The term $\varepsilon$ is added for stability when $g_i \rightarrow 0$. 
For all practical purposes, we set the noise variance of $\xi_i$ to $1$ and 
$\varepsilon = 1.0 \times 10^{-8}$.

\begin{table*}[t]
\centering
\caption{\textcolor{black}{JASCL sensitivity/robustness analysis across $\varepsilon$, noise variance, and number of shots $K$.}}
\begin{tabular}{c|cccc|ccc|ccc}
\toprule
\multirow{2}{*}{Setting} 
& \multicolumn{4}{c|}{Varying $\varepsilon$} 
& \multicolumn{3}{c|}{Noise ($\xi_i$)  variance} 
& \multicolumn{3}{c}{Shots $(K)$} \\
\cmidrule(lr){2-5} \cmidrule(lr){6-8} \cmidrule(lr){9-11}
& $10^{-6}$ & $10^{-7}$ & $10^{-8}$ & $10^{-9}$ 
& 0.1 & 1 & 10 
& 3 & 4 & 5 \\
\midrule
\rowcolor{gray!25}
JASCL 
& 0.443 & 0.437 & 0.460 & 0.482 
& 0.460 & 0.460 & 0.408
& 0.414 & 0.434 & 0.460 \\
\midrule
Vanilla 
& \multicolumn{10}{c}{0.076} \\
\bottomrule
\end{tabular}
\end{table*}

\textbf{Varying $\varepsilon$}: We vary $\varepsilon$ using the following values: 
$1.0 \times 10^{-6}$, $1.0 \times 10^{-7}$, and $1.0 \times 10^{-9}$.
We observe stable performance across settings. A smaller $\varepsilon$ 
(e.g., $1.0 \times 10^{-9}$) performs slightly better because it allows GAS 
to more clearly separate ``important'' parameters from ``less important'' 
ones. When $\varepsilon$ is larger, it can dominate the denominator for 
small gradients, reducing contrast across parameters. Since GAS normalizes 
all scales to $(0,1]$, smaller $\varepsilon$ does not cause instability 
and simply provides sharper contrast.

\textbf{Varying the variance of $\xi_i$}: We evaluate two additional noise variances: $0.1$ and $10$.
JASCL remains robust to changes in noise variance (0.1 and 1 perform 
equivalently), while the Vanilla model remains at Dice $0.076$.

\textbf{Varying the number of shots $K$}: We monitor performance for different few-shot values $K$ (labeled samples per class).
JASCL retains strong performance even under severe few-shot settings (e.g., 
$K=3$).

\medskip
Overall, these results show that JASCL remains fairly robust despite relying on only a minimal set of hyperparameters.

\section{\textcolor{black}{Sensitivity analysis of Prototype anchored supervision (PAS) hyperparameters}}
\label{app_prl}
We analyse pseudo-labels selected using the confidence threshold 
$\tau_{\text{conf}}$ and similarity threshold $\tau_{\text{sim}}$, where 
retention (\%) denotes the proportion passing both filters (Table~\ref{prl_hyper}). Lower thresholds 
(e.g., $0.6$) increase retention but admit many low-confidence labels, whereas 
higher thresholds become overly strict and drastically reduce retention. The 
setting $\tau_{\text{conf}}=0.7$ and $\tau_{\text{sim}}=0.7$ offers the best 
balance, preserving a sufficient number of pseudo-labels while maintaining high 
reliability. We discarded the alternative 
$\tau_{\text{conf}} = 0.6$ and $\tau_{\text{sim}} = 0.7$ because it offered only 
a marginal 0.4\% increase in retention relative to 
$\tau_{\text{conf}} = 0.7$ and $\tau_{\text{sim}} = 0.7$, but at the expense of 
lower confidence.
The sensitivity analysis  confirms robustness to 
nearby values: $\tau_{\text{conf}}=0.8$, $\tau_{\text{sim}}=0.7$ yields Dice 
$0.562$; $\tau_{\text{conf}}=0.7$, $\tau_{\text{sim}}=0.75$ yields Dice $0.567$; 
and $\tau_{\text{conf}}=0.7$, $\tau_{\text{sim}}=0.7$ yields Dice $0.554$. 

\begin{table}[t]
\centering
\caption{\textcolor{black}{Retention analysis for different confidence and similarity thresholds. 
The selected setting ($\tau_{\text{conf}}=0.7$, $\tau_{\text{sim}}=0.7$) achieves 
the best balance between pseudo-label quantity and reliability.}}
\renewcommand{\arraystretch}{1.15}
\begin{tabular}{cccc}
\toprule
\textbf{$\tau_{\text{conf}}$} & \textbf{$\tau_{\text{sim}}$} & 
\textbf{Retention (\%)} & \textbf{Decision} \\
\midrule
0.6 & 0.6 & 71.7 & Rejected \\
0.6 & 0.7 & 23.6 & Rejected \\
0.6 & 0.8 & 1.8  & Rejected \\
0.7 & 0.6 & 73.7 & Rejected \\
0.7 & 0.7 & 23.2 & \textbf{Selected} \\
0.7 & 0.8 & 1.9  & Rejected \\
0.8 & 0.6 & 72.6 & Rejected \\
0.8 & 0.7 & 22.7 & Rejected \\
0.8 & 0.8 & 1.8  & Rejected \\
\bottomrule
\end{tabular}
\label{prl_hyper}
\end{table}

\begin{table}[ht]
\centering
\caption{Dataset splits for the Med JASCL-Disjoint benchmark. 
Session~0 (base) uses the \textbf{TotalSegmentator (TS)} domain; 
Session~1 uses \textbf{AMOS}; Session~2 uses \textbf{BCV}; 
Session~3 uses \textbf{MOTS}; Session~4 uses \textbf{BraTS}; 
and Session~5 uses \textbf{VerSe}. All incremental-session 
training sets follow a \textbf{5-shot} protocol.}
\begin{tabular}{lllrrr}
\toprule
Class & Source & Train & Val & Test \\
\midrule
Sacrum - 1 & TS & 136 & 27 & 43 \\
Stomach - 2 & TS & 124 & 29 & 53 \\
Lung upper lobe left - 3 & TS & 188 & 41 & 92 \\
Lung lower lobe left - 4 & TS & 179 & 39 & 83 \\
Brain - 5 & TS & 125 & 35 & 79 \\
Atrium (left) - 6 & TS & 127 & 25 & 52 \\
Ventricle (left) - 7 & TS & 123 & 24 & 50 \\
Pulmonary artery - 8 & TS & 116 & 17 & 43 \\
Aorta - 9 & TS & 200 & 39 & 85 \\
Gallbladder - 10 & TS & 120 & 27 & 47 \\
Trachea - 11 & TS & 157 & 30 & 68 \\
Rib left1 - 12 & TS & 156 & 28 & 66 \\
Rib right1 - 13 & TS & 156 & 29 & 67 \\
Rib left2 - 14 & TS & 156 & 29 & 67 \\
Rib right2 - 15 & TS & 155 & 29 & 66 \\
Pancreas - 16 & Amos & 120 & 30 & 60 \\
Duodenum - 17 & Amos & 120 & 30 & 60 \\
Bladder - 18 & Amos & 99 & 26 & 47 \\
Prostate/uterus - 19 & Amos & 96 & 26 & 47 \\
Postcava - 20 & Amos & 120 & 30 & 60 \\
Spleen - 21 & BCV & 18 & 4 & 8 \\
Liver - 22 & BCV & 18 & 4 & 8 \\
Left kidney - 23 & BCV & 18 & 4 & 8 \\
Right kidney - 24 & BCV & 18 & 4 & 8 \\
Left adrenal - 25 & BCV & 18 & 4 & 8 \\
Right adrenal - 26 & BCV & 18 & 4 & 8 \\
Hepatic Vessel - 27,28 & MOTS & 7 & 4 & 4 \\
Hepatic Vessel Tumor & MOTS & 7 & 4 & 4 \\
Colon Tumor - 29 & MOTS & 7 & 4 & 4 \\
Lung Tumor - 30 & MOTS & 7 & 4 & 4 \\
NCR — label 1 & BraTS & 40 & 4 & 8 \\
ED — label 2 & BraTS & 40 & 4 & 8 \\
ET — label 4 & BraTS & 37 & 4 & 8 \\
C3(3) & VerSe & 12 & 5 & 6 \\
C4(4) & VerSe & 12 & 5 & 6 \\
T1(8) & VerSe & 31 & 5 & 8 \\
T2(9) & VerSe & 26 & 5 & 7 \\
\bottomrule
\end{tabular}
\label{set1t}
\end{table}

\begin{table}[ht]
\centering
\caption{Dataset splits for the Med JASCL-Mixed benchmark, where both classes 
and domains may repeat across sessions. Session~0 (base) uses the 
\textbf{AMOS} domain. Session~1 uses \textbf{BCV} and \textbf{MOTS}. 
Session~2 uses \textbf{TS} and \textbf{AMOS}, with three classes repeating 
(\textit{Right kidney}, \textit{Left kidney}, \textit{Stomach}) from the 
TS domain. Session~3 uses \textbf{MOTS} and \textbf{TS}, and Session~4 uses 
\textbf{BraTS} and \textbf{VerSe}. All incremental sessions follow a 
\textbf{5-shot} training protocol.
}
\begin{tabular}{lllrrr}
\toprule
Class & Source & Train & Val & Test \\
\midrule
Spleen - 1 & Amos & 203 & 66 & 68 \\
Right kidney - 2 & Amos & 234 & 77 & 77 \\
Left kidney - 3 & Amos & 233 & 78 & 78 \\
Gall bladder - 4 & Amos & 190 & 61 & 67 \\
Esophagus - 5 & Amos & 204 & 67 & 68 \\
Prostate/uterus - 6 & Amos & 167 & 55 & 52 \\
Stomach - 7 & Amos & 233 & 77 & 77 \\
Arota / aorta - 8 & Amos & 204 & 68 & 68 \\
Duodenum - 9 & Amos & 203 & 68 & 68 \\
Postcava - 10 & Amos & 204 & 68 & 68 \\
Pancreas - 11 & BCV & 18 & 6 & 6 \\
Inferior vena cava - 12 & BCV & 18 & 6 & 6 \\
Portal vein and splenic vein - 13 & BCV & 18 & 6 & 6 \\
Left adrenal - 14 & BCV & 18 & 6 & 6 \\
Right adrenal - 15 & BCV & 18 & 6 & 6 \\
Liver - 16 & BCV & 18 & 6 & 6 \\
Hepatic Vessel - 17 & MOTS & 7 & 4 & 4 \\
Hepatic Vessel Tumor - 18 & MOTS & 7 & 4 & 4 \\
Small bowel - 19 & TS & 60 & 20 & 20 \\
Hip\_left - 20 & TS & 60 & 20 & 20 \\
Hip\_right - 21 & TS & 60 & 20 & 20 \\
Lung upper lobe left - 22 & TS & 60 & 20 & 20 \\
Lung lower lobe left - 23 & TS & 60 & 20 & 20 \\
Bladder - 24 & Amos & 9 & 4 & 4 \\
Colon Tumor - 25 & MOTS & 7 & 4 & 4 \\
Lung Tumor - 26 & MOTS & 7 & 4 & 4 \\
Femur\_left - 27 & TS & 18 & 6 & 6 \\
Femur\_right - 28 & TS & 18 & 6 & 6 \\
T1 & BraTS & 40 & 4 & 8 \\
T2 & BraTS & 40 & 4 & 8 \\
Flair & BraTS & 37 & 4 & 8 \\
C3(3) & VerSe & 12 & 5 & 6 \\
C4(4) & VerSe & 12 & 5 & 6 \\
T1(8) & VerSe & 31 & 5 & 8 \\
T2(9) & VerSe & 26 & 5 & 7 \\
\bottomrule
\end{tabular}
\label{set2t}
\end{table}

\begin{table}[ht]
\centering
\caption{Unlabeled sample counts per session in Med Semi-Supervised-JASCL benchmark.}
\begin{tabular}{lr}
\toprule
Session & Count \\
\midrule
Session 1 - Amos & 25 \\
Session 2 - BCV & 30 \\
Session 3 - MOTS & 8 \\
Session 4 - BraTS & 15 \\
Session 5 - VerSe & 20 \\
\bottomrule
\end{tabular}
\label{semiset1t}
\end{table}

\begin{table}
\centering
\caption{Dataset splits for the Natural-JASCL benchmark. All incremental 
sessions follow a \textbf{10-shot} training protocol, with 3 
validation samples per class.}
\begin{tabular}{llrrr}
\toprule
Class & Source & Train & Val & Test \\
\midrule
road - 0 & BDD - Session 0 & 4691 & 1184 & 847 \\
sidewalk - 1 & BDD - Session 0 & 3173 & 796 & 612 \\
building - 2 & BDD - Session 0 & 4270 & 1070 & 758 \\
wall - 3 & BDD - Session 0 & 772 & 189 & 111 \\
fence - 4 & BDD - Session 0 & 1494 & 360 & 206 \\
traffic light - 5 & BDD - Session 0 & 2187 & 556 & 443 \\
traffic sign - 6 & BDD - Session 0 & 3625 & 919 & 660 \\
person - 7 & BDD - Session 0 & 1562 & 393 & 305 \\
car - 8 & BDD - Session 0 & 4738 & 1180 & 833 \\
truck - 9 & BDD - Session 0 & 1451 & 389 & 283 \\
Bridge/tunnel - 10 & IDD - Session 1 & 10 & 3 & 54 \\
Parking - 11 & IDD - Session 1 & 10 & 3 & 173 \\
rail track - 12 & IDD - Session 1 & 10 & 3 & 155 \\
Autorickshaw - 13 & IDD - Session 1 & 10 & 3 & 139 \\
pole - 14 & IDD - Session 1 & 10 & 3 & 220 \\
Bus - 15 (from BDD) & BDD - Session 2 & 10 & 3 & 58 \\
Sky - 16 (from BDD) & BDD - Session 2 & 10 & 3 & 139 \\
bicycle - 17 (from BDD) & BDD - Session 2 & 10 & 3 & 45 \\
person - 7 (repeat from IDD) - 18 & IDD - Session 2 & 10 & 3 & 82 \\
car - 8 (repeat from IDD) - 19 & IDD - Session 2 & 10 & 3 & 91 \\
\bottomrule
\end{tabular}
\label{natset}
\end{table}

\section{\textcolor{black}{Choice of layer for GAS}}
\label{f_loc}

We evaluated the effect of applying GAS to different parts of the network when the
entire model was unfrozen. Specifically, we injected GAS into (i) a deep feature 
extractor layer in the encoder, (ii) an intermediate decoder layer, and 
(iii) the final classifier layer. The results show a clear degradation when GAS 
is applied to deep feature extractors, and a consistent improvement when applied 
exclusively to the classifier:

\begin{itemize}
    \item GAS at encoder feature extractor layer: \textbf{IoU 22.67}
    \item GAS at decoder feature extractor layer: \textbf{IoU 24.86}
    \item GAS at final classifier layer $\mathit{F}$: \textbf{IoU 27.84}
\end{itemize}

These results motivate restricting GAS to the final classifier layer $\mathit{F}$.

The rationale is threefold:

\begin{itemize}
   
  \item Parameter sensitivity differs across the network: Early encoder-decoder layers encode generic features that are reused across all sessions, whereas the final classifier layer is highly task-specific and 
receives the strongest gradient signals associated with class boundaries. 
Injecting noise into deep layers would perturb \textit{shared} low-level representations 
and destabilize all previously learned classes, while injecting noise into 
$\textit{F}$ regularizes only the task-specific decision boundaries where 
overfitting is the highest.

\item Local curvature is highest at the classifier layer: In few-shot segmentation, most curvature and gradient magnitude variation 
appears in the final logits, not in the deep feature extractor. GAS relies on per-parameter gradient magnitude to 
allocate noise; applying it to layers with uniformly low curvature would add 
noise where it provides minimal benefit but risks corrupting the global feature 
space. Applying GAS to $\textit{F}$ concentrates perturbations exactly where 
the loss landscape is sharpest and overfitting is most pronounced.

\item The feature extractor is shared across tasks; the classifier is 
task-adaptive: Deeper layers must remain stable to preserve knowledge from previous sessions. 
The classifier layer is the part of the model that adapts to new domains and 
novel classes, and therefore benefits the most from geometry-aware noise. 
Perturbing shared layers would amplify forgetting, whereas perturbing only 
$\textit{F}$ provides regularization without harming the backbone.
\end{itemize}

\section{JASCL benchmarks}
\label{App_data}
\textbf{Med JASCL-Disjoint:}
\textcolor{black}{The dataset details are present in the Table~\ref{set1t}.}
The models with U-Net backbone are evaluated for their robustness and ability to handle a large number of incremental sessions while facing diverse domain shifts and a scarce few-shot data. This represents a highly realistic setting in clinical domains, where annotations are costly, and data is collected over time from multiple medical institutions. We evaluated the baselines from this benchmark on only two incremental sessions, as nearly all baselines collapsed after two sessions. We evaluated JASCL with a U-Net backbone across all five incremental sessions. The results demonstrate that the framework effectively handles the constraints and maintains strong performance throughout all sessions. This underscores the inability of existing baselines to effectively handle multiple constraints in continual segmentation across a large number of sessions. NC-FSCIL, which performs well for classification in few-shot class-incremental learning, fails on this benchmark. This highlights that a semantic segmentation model with a classifier fixed as a simplex equiangular tight frame (ETF) performs significantly worse than a model with a learnable classifier. MDIL, despite handling different domains with a shared encoder, struggles under strict constraints and suffers a drop in performance. Moreover, having multiple domains is not a practical solution when the number of domains increases. Approaches like generative replay (Gen-Replay) struggle to generate 3D medical volumes even with a diffusion model, which leads to poor performance. Most methods that perform well in incremental or few-shot incremental learning fail when confronted with domain shifts, including segmentation approaches such as MiB, PIFS, and CLIP-CT. Representation and meta-learning-based approaches, such as MAML, SupCL, UnSupCL, and MTL, also fail to retain performance over the sessions. \textcolor{black}{It is observed that feature replay and prototype-based methods, such as C-FSCIL, SoftNet, and our JASCL framework, as well as data synthesis-based methods like GAPS, perform significantly better under these constraints.}

\textbf{Med JASCL-Mixed:} \textcolor{black}{The dataset details are present in the Table~\ref{set2t}}. This is a more realistic clinical setting that involves the same classes appearing across different domains. The goal is to evaluate the model’s ability to learn previously seen classes in new domains while retaining performance on the original domains. Additionally, multiple domains may appear within the same session, making learning and adaptation more challenging, as the model must simultaneously learn from diverse domains while managing the usual constraints. In this setting, we evaluated different transformer backbones, including MedFormer, SwinUNetr, and a large pre-trained CLIP-driven U-Net model, which had already been exposed to most of the 35 classes. Results show that even robust transformer backbones are unable to maintain model performance across sessions. An interesting observation is that the CLIP-driven model, already pre-trained on 21 of the 35 classes, experiences a drop in performance across sessions, highlighting the severity of the constraints in the benchmark. 

\textbf{Med Semi-Supervised-JASCL:} This benchmark is a variant of Med JASCL-Disjoint, where few-shot classes have access to unlabeled data. \textcolor{black}{The details of unlabeled data are present in the Table~\ref{semiset1t}}. The inclusion of unlabeled data significantly boosts the performance of JASCL, an improvement not observed with other methods. Benchmark also highlights how existing semi-supervised approaches fail to handle the constraints and are unable to effectively leverage unlabeled data. \textbf{JASCL clearly demonstrates that using readily available unlabeled data can significantly improve multi-constraint continual learning for semantic segmentation.}

 \textbf{Natural-JASCL:} \textcolor{black}{The dataset details are present in the Table~\ref{natset}}. In autonomous driving, scenes evolve over time with distributional shifts and limited labeled data, making this a highly realistic benchmark. Even large-scale pre-trained models like SAM (trained on over a billion natural scene masks) exhibit forgetting in this realistic and challenging setting. It can be observed that JASCL with a SAM backbone achieves further improvements, highlighting that the proposed framework enhances not only simple backbones (e.g., U-Net) and transformer-based backbones (e.g., MedFormer, SwinUNetr) but also heavily pre-trained models like SAM.

\textbf{Semi-Supervised Natural-JASCL:} \textcolor{black}{The dataset details are present in the Table~\ref{natsemiset}}. This benchmark provides access to abundant unlabeled data, and all baselines are evaluated with a DeepLab backbone. In this setting, we employ JASCL as a plug-and-play module on top of the GAPS baseline, further improving its results, by leveraging unlabeled data. This demonstrates that unlabeled data can significantly enhance performance and that JASCL can effectively boost existing baselines. Similar to the medical benchmarks, existing semi-supervised methods perform poorly in this setting and are unable to effectively leverage unlabeled data. We also evaluated active learning approaches, such as HALO, and coreset selection methods, like RETRIEVE, to select the most informative data; however, these methods fail to achieve significant improvements.

\textcolor{black}{\textbf{2D robotic surgery:}
We designed a 2D robotic surgery benchmark using two domains, CholecSeg8k\footnote{\url{https://arxiv.org/pdf/2012.12453}} and m2caiseg\footnote{\url{https://arxiv.org/abs/2008.10134}}, across three sessions. We vary both the classes and the domains to segment different organs and surgical instruments, creating a challenging and realistic setting under a few-shot data regime. The base session (Session 0) includes larger organs to segment, Session 1 introduces tubular structures such as the intestine, and Session 2 contains surgical instruments to segment. In the incremental sessions, we use 50 samples per novel class. The dataset details are present in the Table~\ref{2d_surgery}. This benchmark is difficult due to the wide variation in the size, shape, and appearance of the objects to be segmented. Even under these constraints, JASCL achieves significantly better performance (Dice score) than CAT\footnote{\url{https://ieeexplore.ieee.org/stamp/stamp.jsp?arnumber=10443356}}, which is a specialized baseline for this task as shown in Table~\ref{per_2d_robot}.
}


\begin{table}[ht]
\centering
\caption{Dataset splits for the Semi-Supervised Natural-JASCL benchmark, 
which includes \textbf{400 unlabeled samples per class}. All incremental 
sessions use a \textbf{10-shot} training protocol and 3 validation 
samples per class.
}
\begin{tabular}{llrrr}
\toprule
Class & Source & Train & Val & Test \\
\midrule
road - 0 & BDD : Session 0 & 4691 & 1184 & 847 \\
sidewalk - 1 & BDD : Session 0 & 3173 & 796 & 612 \\
building - 2 & BDD : Session 0 & 4270 & 1070 & 758 \\
wall - 3 & BDD : Session 0 & 772 & 189 & 111 \\
fence - 4 & BDD : Session 0 & 1494 & 360 & 206 \\
traffic light - 5 & BDD : Session 0 & 2187 & 556 & 443 \\
traffic sign - 6 & BDD : Session 0 & 3625 & 919 & 660 \\
person - 7 & BDD : Session 0 & 1562 & 393 & 305 \\
car - 8 & BDD : Session 0 & 4738 & 1180 & 833 \\
truck - 9 & BDD : Session 0 & 1451 & 389 & 283 \\
parking - 10 & Cityscapes : Session 1 & 10 & 3 & 111 \\
vegetation - 11 & Cityscapes : Session 1 & 10 & 3 & 486 \\
rail track - 12 & IDD : Session 2 & 10 & 3 & 214 \\
sky - 13 & IDD : Session 2 & 10 & 3 & 300 \\
motorcycle - 14 & IDD : Session 3 & 10 & 3 & 609 \\
autorickshaw - 15 & IDD : Session 3 & 10 & 3 & 361 \\
bridge/tunnel - 16 & IDD : Session 3 & 10 & 3 & 65 \\
\bottomrule
\end{tabular}
\label{natsemiset}
\end{table}

\begin{table}[t]
\centering
\caption{\textcolor{black}{Dataset splits for 2D robotic surgery benchmark.}}
\begin{tabular}{llllrrr}
\toprule
Category & Class & Source & Train & Val & Test \\
\midrule
misc & Background & CholecSeg8k & 3246 & 713 & 1304 \\
organ & Abdominal Wall & CholecSeg8k & 2774 & 639 & 1158 \\
organ & Liver & CholecSeg8k & 3246 & 713 & 1304 \\
organ & Gastrointestinal Tract & CholecSeg8k & 1832 & 399 & 715 \\
organ & Fat & CholecSeg8k & 2991 & 662 & 1199 \\
organ & Gallbladder & CholecSeg8k & 2810 & 594 & 1124 \\
organ & Connective Tissue & CholecSeg8k & 1116 & 154 & 330 \\
organ & Upper wall & m2caiseg & 50 & 50 & 142 \\
organ & Intestine & m2caiseg & 50 & 4 & 10 \\
instrument & Grasper & m2caiseg & 50 & 50 & 132 \\
instrument & Clipper & m2caiseg & 50 & 15 & 39 \\
\bottomrule
\end{tabular}
\label{2d_surgery}
\end{table}

\begin{table}[ht]
\centering
\caption{Performance of JASCL on COCO-O detection benchmark with one incremental session on three few-shot novel classes. We compared JASCL without Gradient Adaptive Stabilization (GAS) and Prototype Anchored Supervision (PAS).}
\begin{tabular}{lc}
\toprule
Method & Session 1 (mAP)\\
\midrule
RCNN (Vanilla with only unlabeled data) & 0.000 \\
JASCL & 0.063 \\
JASCL w/o GAS & 0.004 \\
JASCL w/o PAS & 0.051 \\
\bottomrule
\end{tabular}
\label{det:methods_session1}
\end{table}

\begin{table}[ht]
\centering
\caption{Configuration of JASCL in different settings.}
\scalebox{0.82}{
\begin{tabular}{lcc}
\toprule
Method & Base Frozen (Yes/No) & Incremental Sessions Frozen (which layers) \\
\midrule
Med JASCL-Disjoint (U-Net) & No & Yes (All but last) \\
Med JASCL-Mixed (MedFormer) & No & Yes (All but last) \\
Med JASCL-Mixed (SwinUNetr) & No & Yes (All but last) \\
Natural-JASCL (DeepLab) & No & No \\
Semi-Supervised Natural-JASCL (Deeplab) & No & No \\
Med Semi-Supervised-JASCL (U-Net) & No & Yes (All but last) \\
Med Semi-Supervised-JASCL (MedFormer) & No & Yes (All but last) \\
Natural-JASCL (SAM)  & No & Yes (All but last) \\
\textcolor{black}{2D robotic surgery (U-Net)}  & \textcolor{black}{No} & \textcolor{black}{No} \\
\textcolor{black}{2D robotic surgery (MedFormer)}  & \textcolor{black}{No} & \textcolor{black}{No} \\
\bottomrule
\end{tabular}
}
\label{tab:method_configs}
\end{table}


\section{Detection results}
\label{aa_det}
To check the effectiveness of JASCL on detection tasks with distributional shifts, we tested JASCL on the detection dataset COCO-O\footnote{\url{https://openaccess.thecvf.com/content/ICCV2023/papers/Mao_COCO-O_A_Benchmark_for_Object_Detectors_under_Natural_Distribution_Shifts_ICCV_2023_paper.pdf}}
  having different domains with RCNN\footnote{\url{https://github.com/microsoft/SoftTeacher}}
 backbone as shown in Table \ref{det:methods_session1}. We created base session with 77 classes from `Painting' domain and incremental session 1 from `Weather' domain with three novel disjoint few-shot classes, `person', `car', and `bench'. The performance of base model is 0.2840 mAP. We took 41, 224, 174 unlabeled samples from `bench', `car', and `person' classes, respectively with 30 few-shot labeled samples from each class for session 1. Table \ref{det:methods_session1} illustrates that JASCL is able to improve the performance over the Vanilla backbones like RCNN, even on detection tasks with large domain shifts.

\section{Implementation details}
We have re-implemented and adapted the following baselines - \textbf{Class-incremental Learning}: \textcolor{black}{UCL}, MiB, CLIP-CT, \textcolor{black}{Saving100x, C-Flat, UCB, STAR, YoooP, Adapt\_replay}; \textbf{Domain Incremental learning}: MDIL; \textbf{Few-shot Class-incremental Learning}: PIFS, Subspace, C-FSCIL, FACT, NC-FSCIL, Gen-Replay, GAPS, SoftNet, FSCIL-SS, FeCAM, \textcolor{black}{BCM, CAT}.
\textcolor{black}{\textbf{Regularization based methods:} Dropout/Weight Decay\footnote{\url{https://proceedings.neurips.cc/paper_files/paper/2020/file/518a38cc9a0173d0b2dc088166981cf8-Paper.pdf}}, Variational Dropout\footnote{\url{https://link.springer.com/article/10.1007/s10994-023-06487-7} }, Adagrad\footnote{\url{https://openreview.net/pdf?id=WwQKl1OrMX}}};
\textbf{Semi-Supervised Learning based approaches}: RETRIEVE, NNCSL, UaD-CE, \textcolor{black}{CSL}. \textbf{Other methods}: SupCL, UnSupCL, UnSupCL-HNM, MTL, MAML, CLIP-driven, HALO.

All baselines were adapted with their original settings and hyperparameters. Medical experiments were run for 200 epochs for both base and incremental sessions, while natural \textcolor{black}{and 2D robotic surgery} experiments were run for 100 epochs per session. All SAM experiments were run for 10 epochs. We used one A100 40GB GPU for all experiments. \textcolor{black}{Table~\ref{tab:method_configs} shows the layers frozen in the JASCL models for each benchmark.
}

\section{\textcolor{black}{GAS vs. existing regularization-based continual learning methods}}
\label{reg_gni}

\subsection{GAS as a Gradient-Guided Stochastic Regularizer}

Gradient adaptive stabilization (GAS) introduces parameter-wise stochastic perturbations whose magnitudes depend on instantaneous gradient sensitivity. For each parameter $w_i$, GAS computes the squared gradient,
\[
G_i = \left( \frac{\partial L}{\partial w_i} \right)^2,
\]
and defines an inverse sensitivity,
\[
G^{-1}_i = \frac{1}{G_i + \varepsilon}.
\]
These values are then normalized into the interval $(0,1]$ to produce noise scales $\tilde{G_i}$, and each parameter is perturbed as,
\[
w_i \leftarrow w_i + \tilde{G_i} \xi_i, \qquad \xi_i \sim \mathcal{N}(0,1).
\]
Large-gradient parameters receive minimal perturbation, while flat or low-gradient directions receive proportionally larger perturbations. This produces a parameter-level perturbation scheme whose magnitude is determined by local curvature, allowing the model to explore flat regions while preserving stability along sharp directions.

\subsection{GAS vs. Adaptive Optimizers}

Adaptive optimizers such as Adam, RMSProp, and Adagrad rescale gradients using accumulated statistics but do not inject \textit{structured perturbations} into the parameters. For example, Adam updates a parameter $w_t$ as,
\[
w_{t+1} = w_t - \eta \frac{m_t}{\sqrt{v_t} + \varepsilon},
\]
where $m_t$ and $v_t$ are exponential moving averages of the gradient and its square. The second moment term $v_t$ acts as a diagonal preconditioner that provides a coarse and history based approximation of local curvature. However, these methods do not perform explicit exploration of the loss surface and do not incorporate true second order information that requires evaluation in a neighborhood of the current parameters. Their behavior depends on accumulated gradient statistics, which can become stale when the data distribution changes, leading to slower or less reliable adaptation in continual settings.
GAS differs fundamentally by applying noise directly to the parameters and by basing its magnitude solely on the current sensitivity $G^{-1}_i$, enabling selective stability and plasticity even in few-shot incremental settings.

\subsection{GAS vs. Uncertainty-Based Regularization}

Uncertainty-Based adaptive methods, such as UCL, maintain per-node uncertainty values $\sigma_t^2$ that are tied across all incoming weights of a neuron, which reduces the memory overhead relative to per-weight variational approaches. However, these uncertainty estimates depend on statistics accumulated from previous tasks and may become less informative under strong domain shift, since few-shot updates provide limited evidence to revise them. Because a single variance is shared at the node level, the resulting anisotropy is coarse. In contrast, GAS computes inverse-squared gradient values directly from the current data, requires no stored task-level statistics, and provides full parameter-level anisotropy.


\subsection{GAS vs. Dropout and Variational Dropout}


Standard dropout introduces fixed noise independent of parameter sensitivity, while variational dropout can learn per-weight or per-unit dropout rates.
The methods may permanently reduce the capacity of weights once they are heavily dropped, and neither variant uses curvature or gradient information to guide noise placement. As a result, dropout methods may struggle under domain shift and few-shot conditions because they inject noise independently of parameter sensitivity. GAS avoids these issues by deriving a parameter-specific noise scale from instantaneous gradients, assigning low noise to important weights and higher noise to less relevant ones. This dynamic adjustment preserves capacity across sessions and enables far more effective adaptation than static dropout mechanisms.

\subsection{GAS vs. Weight Decay}

Weight decay introduces a uniform penalty,
\[
L_{\text{wd}} = \lambda \lVert W \rVert^2,
\]
which shrinks all parameters equally regardless of their relevance or curvature. Important parameters may be unnecessarily suppressed, while flat directions receive no preferential treatment. GAS replaces such uniform shrinkage with gradient-guided noise whose magnitude is inversely related to curvature. The behaviour of $\tilde{G_i}$ is theoretically justified by analyzing the perturbed loss under a second-order Taylor approximation. This analysis shows that, after normalization, the resulting perturbations remain stable and bounded. The formulation suppresses noise in high-curvature (large-gradient) directions while promoting exploration along flat directions, thereby providing a geometry-aware alternative to classical $\ell_2$ regularization.

\subsection{GAS vs. Sharpness-Based Methods (SAM and C-Flat)}

Sharpness-Aware Minimization (SAM)\footnote{\url{https://arxiv.org/pdf/2010.01412}}
performs a deterministic min-max optimization step that perturbs the parameters
within a fixed-radius neighborhood and then updates them using the resulting
worst-case gradient. This produces a perturbation defined over a fixed-radius neighborhood of the full parameter vector, which does not provide parameter-wise differentiation. As a result, SAM alters
the optimization trajectory but provides no mechanism to identify which weights
are important for previous tasks or which directions should be preserved for
subsequent ones.

C-Flat extends SAM to continual learning by optimizing a surrogate objective that
couples neighborhood perturbations with Hessian-vector directional curvature
terms. This encourages the model to remain in locally smooth regions of the loss
landscape and reduces sensitivity to task specific sharp minima. However, because its flatness and curvature terms operate on neighborhoods of the full parameter vector rather than individual parameters, C-Flat does not provide weight-level control over which directions should remain stable and which should remain adaptable. Consequently, C-Flat enforces a uniform notion of flatness across the
network rather than assigning stability or plasticity based on parameter
importance. In addition, its neighborhood radius and curvature coefficients
introduce multiple interacting hyperparameters, increasing optimization
complexity and limiting interpretability.

GAS is built on a fundamentally different design principle. Instead of imposing
global stability constraints, it injects parameter wise adaptive noise derived
directly from the instantaneous gradient. For each weight $w_i$, the
corresponding gradient $g_i$ determines a noise scale $\tilde{G}_i$ 
which provides an automatic, data driven stability plasticity trade off at the
parameter level. Weights with large gradients, which actively contribute to
current task performance, receive negligible noise, preserving critical
information. Weights with small gradients receive larger perturbations, enabling
rapid adaptation without interfering with previously consolidated structure. This
yields a local, lightweight, and interpretable mechanism for continual learning
that avoids global curvature estimation and removes the need for complex
objective balancing. Unlike C-Flat's uniform landscape smoothing, GAS
selectively preserves what is important and selectively explores what is
flexible, using a simple rule rooted in gradient geometry.



\subsection{GAS vs. Perturbation-Based Stability Methods (STAR)}


The STAR method stabilizes learning by introducing an adversarial-style perturbation 
$\Delta w$ that amplifies model sensitivity on stored buffer samples and then 
penalizing the resulting output drift through a KL divergence term,
\[
L_{\text{STAR}} = \mathrm{KL}\!\big(f(x; w) \,\|\, f(x; w + \Delta w)\big).
\]
The perturbation is obtained by initializing each layer with small noise to avoid zero gradients, then performing one gradient-ascent step on the KL divergence at the perturbed point, and finally normalizing the gradient \textit{per layer} to control the perturbation magnitude.


This layer-wise normalization controls the perturbation scale at the layer granularity, rather than at the level of individual parameters. STAR also relies on stored buffer samples and requires additional forward 
and backward passes to compute both the perturbed outputs and the ascent direction, 
introducing nontrivial memory and computational overhead.

In contrast, GAS requires no buffer and no auxiliary training passes. Its stability 
mechanism arises directly from the instantaneous local gradient structure, enabling 
parameter-wise, curvature-sensitive regularization with minimal overhead.

Table~\ref{tab:gni_comparison} summarizes results on the Med JASCL-Disjoint benchmark, demonstrating the superior performance of JASCL with GAS.

\begin{table}[t]
    \centering
    \caption{\textcolor{black}{Performance comparison on Med JASCL-Disjoint benchmark showing that GAS provides substantial gains over existing regularization-based methods for continual learning.}}
    \label{tab:gni_comparison}
    \begin{tabular}{lccc}
    \toprule
    Method & Session 0 & Session 1 & Session 2 \\
    \midrule
    Dropout / Weight Decay & 0.700 & 0.191 & -- \\
    Variational Dropout & 0.700 & 0.260 & -- \\
    Adagrad & 0.700 & 0.024 & -- \\
    UCL & 0.700 & 0.430 & 0.325 \\
    C-Flat & 0.700 & 0.174 & 0.030 \\
    STAR & 0.700 & 0.050 & 0.020 \\
    \rowcolor{gray!25}
    JASCL (GAS) & \textbf{0.736} & \textbf{0.460} & \textbf{0.398} \\
    \bottomrule
    \end{tabular}
\end{table}

\section{\textcolor{black}{Limitations \& future work}}
\label{limit_fut}
\subsection{Severe Domain Shift}
Tables~\ref{tab_set1_full} and \ref{semi_med} reveal an extreme failure case at \textbf{Session 4}, where the Dice score for a U-Net backbone drops to nearly zero. This behavior is expected given the session ordering:

\begin{itemize}[leftmargin=*]
    \item Session 0: TotalSegmentator \,(\textbf{CT})
    \item Session 1: AMOS \,(dominantly   \textbf{CT})
    \item Session 2: BCV \,(\textbf{CT})
    \item Session 3: MOTS \,(\textbf{CT})
    \item Session 4: BraTS \,(\textbf{MRI})
    \item Session 5: Verse \,(\textbf{CT})
\end{itemize}

Across the first four sessions, the model is exposed almost exclusively to CT scans. When the distribution abruptly shifts to MRI at Session 4, lightweight encoder-decoder architectures such as {U-Net exhibit catastrophic forgetting of the CT domain, causing the Dice score to collapse to near-zero values. JASCL alleviates this drop to a limited extent: the Dice score improves from 0.025 (Table~\ref{tab_set1_full}) to 0.0576 (Table~\ref{semi_med}) under the semi-supervised setting.

This trend underscores both the \textbf{severity and practical relevance} of the proposed JASCL benchmarks, while also exposing the limitations of simple convolutional architectures.

\textbf{Mitigation Strategies:}  
\textbf{(1) Transformer backbones -}  
Transformer-based architectures such as MedFormer exhibit stronger cross-domain stability and retain meaningful performance under severe shifts, achieving a Dice score of 0.323 in Session~4 with JASCL (Table~\ref{semi_med}). We recommend transformer backbones when facing substantial domain heterogeneity. \textbf{(2) Leveraging more unlabeled data -} we further study the effect of additional publicly available unlabeled CT data (TotalSegmentator, AMOS, BCV, MOTS) under a U-Net backbone.

\begin{table}[h]
\centering
\caption{\textcolor{black}{Impact of additional unlabeled CT data on Session 4 performance.}}
\begin{tabular}{l c}
\toprule
\textbf{Model (U-Net backbone)} & \textbf{Session 4} \\
\midrule
JASCL (no unlabeled data)            & 0.025 \\
JASCL (15 unlabeled samples)         & 0.058 \\
JASCL (42 unlabeled samples)         & 0.197 \\
\bottomrule
\end{tabular}

\label{tab:unlabeled_effect}
\end{table}

The steady improvement demonstrates the strength of the prototype anchored supervision (PAS)} mechanism in utilizing unlabeled data under domain shift.


\subsection{Noisy Unlabeled Data}
In real clinical scenarios, medical scans often suffer from noise, blur, and contrast degradation. To assess robustness under such realistic artifacts, we evaluate JASCL under moderate and heavy noise settings applied to the \emph{unlabeled} data.

\textbf{Boundary-Aware Noise Generation:}
We employ a boundary-aware degradation module. Sobel filtering extracts structural boundaries, which are converted into a soft mask $B(x,y,z) \in [0,1]$. Blur, contrast shifts, and additive Gaussian noise are then applied primarily to high-boundary regions (Table~\ref{tab:noise_schedule}), while homogeneous regions remain minimally affected. This degradation is applied to $70\%$ of unlabeled samples to mimic natural variability.

\begin{table}[h]
\centering
\caption{\textcolor{black}{Noise schedules used for unlabeled image degradation.}}
\begin{tabular}{l c c c}
\toprule
\textbf{Operation} & \textbf{Randomness} & \textbf{Moderate} & \textbf{Heavy} \\
\midrule
Blur      & Uniform  & $\sigma\!\in\![0.5,1.5]$  & $\sigma\!\in\![1.0,2.5]$ \\
Contrast  & Uniform  & $1.0\pm0.15$             & $1.0\pm0.30$ \\
Additive Noise & Gaussian & $\sigma = 0.02$        & $\sigma = 0.05$ \\
\bottomrule
\end{tabular}
\label{tab:noise_schedule}
\end{table}

\begin{table}[h]
\centering
\caption{\textcolor{black}{Performance of JASCL under noise-corrupted unlabeled data.}}
\begin{tabular}{l c}
\toprule
\textbf{Method} & \textbf{Session 1} \\
\midrule
JASCL (no noise)        & 0.554 \\
JASCL (moderate noise)  & 0.477 \\
JASCL (heavy noise)     & 0.475 \\
UaD-CE  & 0.082 \\
\bottomrule
\end{tabular}
\label{tab:noise_results}
\end{table}

JASCL shows strong resilience as heavy noise reduces performance by only \textbf{0.002} compared to the moderate setting. This indicates that the method is \textbf{highly robust to substantial degradations} in unlabeled images. Nevertheless, further improving robustness to noise in continual semi-supervised segmentation remains an important direction for future work.

\subsection{3D Medical Domain}

\textbf{(i) Organ size and few-shot:} 
Segmentation performance in 3D medical datasets strongly depends on organ size. Small structures (e.g., adrenal glands, colon tumors, BraTS tumor subregions)\footnote{\url{https://arxiv.org/pdf/2501.09138}} are known to be difficult to segment, and this difficulty is amplified in few-shot continual learning. 
\textbf{(ii) Base model accuracy affects retention:}
Retention in continual learning depends on the initial accuracy of the base model. Methods starting near Dice~0.9 may retain high absolute performance after several sessions, whereas methods starting near Dice~0.6-0.7 may naturally yield lower absolute values even with good retention. Most JASCL medical benchmarks fall within the 0.6-0.7 range, which explains the absolute performance levels observed across sessions. \textbf{(iii) Long multi-session continual learning is inherently harder:}
Existing medical continual segmentation works typically perform only 1-3 incremental steps (CLIP-CT), whereas JASCL stress-tests up to six incremental sessions across multiple modalities and classes. More sessions naturally lead to lower absolute accuracy, even when relative improvements and stability are strong.

\subsection{Autonomous Driving}

\textbf{(i) Dataset complexity and backbone capacity:}
Recent foundation models\footnote{\url{https://arxiv.org/pdf/2510.27047}} achieve 50-60\% IoU on BDD and Cityscapes even with a powerful ViT-H ($\sim$630M parameters) backbone. JASCL uses substantially smaller backbones (ViT-B, $\sim$90M parameters). The challenging nature of the dataset inherently restricts the maximum IoU that models can achieve. \textbf{(ii) Continual segmentation performance is generally low:}
State-of-the-art continual semantic segmentation typically achieves only 20-50\% IoU under realistic multi-domain settings\footnote{\url{https://ieeexplore.ieee.org/stamp/stamp.jsp?arnumber=10521870}}. JASCL additionally operates in a few-shot regime, where the scarcity of labels further reduces absolute accuracy.

\subsection{2D Robotic Surgery}

This domain is exceptionally challenging due to rapid appearance changes in organs, tools, lighting, and camera viewpoints. As shown in CAT, existing methods achieve only 5-30\% IoU for incremental classes in continual setups, underscoring the inherent difficulty of the task.

\section{\textcolor{black}{Additional ablations}}
We evaluated JASCL across class-incremental learning, domain incremental learning, mixed class and domain shifts, and semi-supervised CIL settings with scarce labeled data, spanning 2D robotic surgery, 3D medical segmentation, and autonomous driving. Across all tasks, specialized methods, whether designed for CIL (CAT, BCM), DIL (SAM, CLIP), mixed class and domain shifts (MDIL), or semi-supervised CIL (UaD-CE), exhibit severe performance degradation over incremental sessions. In contrast, JASCL consistently maintains strong performance across all benchmarks, demonstrating robustness to class shifts, domain shifts, scarce labels, and unlabeled data, both individually and in combination.

\subsection{2D robotic surgery task}

To study the effect of each setup, we performed class-incremental learning, domain incremental learning, and their combination (CDIL) under data-scarce conditions. We evaluated widely used backbones such as Medformer (a Transformer-based model) and U-Net for semantic segmentation.

This task uses two domains, CholecSeg8k and m2caiseg, and varies both the class set and the domain across sessions. For class-incremental learning, we vary the classes within CholecSeg8k. For domain incremental learning, we shift the domain from CholecSeg8k (Session~0) to m2caiseg (Session~1). For combined class and domain incremental learning, we vary both the domain and the class set when moving from CholecSeg8k (Session~0) to m2caiseg (Session~1).

\begin{table}[h!]
\centering
\caption{\textcolor{black}{Performance of class-incremental learning (CIL), domain incremental learning (DIL), and their combination (CDIL) under data-scarce settings. S denotes the session; for example, S0 corresponds to Session 0.}}
\scalebox{0.82}{
\begin{tabular}{lccc|cc|ccc}
\toprule
Model & CIL S0 & CIL S1 & CIL S2 & DIL S0 & DIL S1 & CDIL S0 & CDIL S1 & CDIL S2 \\
\midrule
U-Net Vanilla        & 0.770 & 0.003 & 0.000 & 0.770 & 0.223 & 0.770 & 0.033 & 0.007 \\
Medformer Vanilla    & 0.762 & 0.026 & 0.002 & 0.762 & 0.166 & 0.762 & 0.011 & 0.002 \\
CAT U-Net            & 0.770 & 0.102 & 0.006 & 0.770 & 0.258 & 0.770 & 0.166 & 0.001 \\
JASCL Medformer     & 0.762 & 0.166 & 0.144 & 0.762 & 0.279 & 0.762 & 0.169 & 0.156 \\
JASCL U-Net         & 0.770 & 0.212 & 0.159 & 0.770 & 0.305 & 0.770 & 0.244 & 0.163 \\
\bottomrule
\end{tabular}
}
\label{abl_robo}
\end{table}

In Table~\ref{abl_robo}, we observe that all models are affected, and their performance drops significantly across these settings. Interestingly, transformer-based models show poorer performance compared to simpler encoder-decoder models such as U-Net.

\subsection{Autonomous driving}

To study the effect of each setup, we performed class-incremental learning, domain incremental learning, and their combination under data scarce conditions. We used SAM as the backbone. For class-incremental learning, we varied the classes within BDD (Berkeley Deep Drive). For domain incremental learning, we shifted the domain from BDD (Session~0) to IDD (Indian Driving Dataset, Session~1). For combined class and domain incremental learning, we varied both the domain and the class set when moving from BDD (Session~0) to IDD (Session~1).

\begin{table}[h!]
\centering
\caption{\textcolor{black}{Performance of class-incremental learning (CIL), domain incremental learning (DIL), and their combination (CDIL) under data-scarce settings. S denotes the session; for example, S0 corresponds to Session 0.}}
\begin{tabular}{lcc|cc|ccc}
\toprule
Model & CIL S0 & CIL S1 & DIL S0 & DIL S1 & CDIL S0 & CDIL S1 & CDIL S2 \\
\midrule
SAM Vanilla & 66.0 & 30.49 & 66.0 & 30.43 & 66.0 & 32.6 & 30.81 \\
SAM JASCL  & 66.0 & 31.39 & 66.0 & 30.64 & 66.0 & 33.2 & 31.22 \\
\bottomrule
\end{tabular}
\label{auto_abl}
\end{table}

Even heavily pre-trained models such as SAM show a noticeable drop in performance (Table~\ref{auto_abl}) when evaluated under class-incremental and domain incremental learning setups.

\textbf{The above settings clearly show that models specifically designed for class-incremental learning, domain incremental learning, data scarcity, or the use of unlabeled data still fail to perform reliably. In contrast, the JASCL framework is able to handle each of these setups individually, as well as the more complex continual scenarios captured in the JASCL benchmarks.}

\begin{table}[!t]
\centering
\caption{\textcolor{black}{Total Drop (\%) for all baselines on Med JASCL-Disjoint (lower is better).}}
\begin{tabular}{l r}
\toprule
\textbf{Method} & \textbf{Total Drop (\%)} \\ \midrule
PIFS & 88.9 \\
NC-FSCIL & 80.4 \\
CLIP-CT & 70.3 \\
MiB & 86.3 \\
MDIL & 87.5 \\
C-FSCIL & 62.3 \\
SoftNet & 82.2 \\
GAPS & 63.8 \\
FSCIL-SS & 87.3 \\
Subspace & 84.4 \\
Gen-Replay & 89.1 \\
FeCAM & 94.0 \\
FACT & 92.2 \\
MAML & 99.9 \\
MAML + Reg. & 99.9 \\
MTL & 88.7 \\
UnSupCL & 94.4 \\
SupCL & 94.0 \\
UnSupCL-HNM & 95.0 \\
C-Flat & 95.7 \\
STAR & 96.9 \\
Saving100x & 92.6 \\
YoooP & 96.9 \\
UCB & 81.6 \\
BCM & 100.0 \\
Adapt\_replay & 96.4 \\
UCL & 53.6 \\
\rowcolor{gray!25}\textbf{JASCL (U-Net)} & \textbf{45.9} \\ 
\bottomrule
\end{tabular}
\label{t_drop}
\end{table}

\section{\textcolor{black}{Practical guidelines for reliable performance}}
\label{prac_guide}
Below, we provide key recommendations for effectively deploying the JASCL framework in multi-constrained continual semantic segmentation settings.

\subsection{Role of Unlabeled Data}
Domain-relevant unlabeled data plays a crucial role in JASCL. As observed in 
Session~4 when BraTS (MRI modality) was introduced, unlabeled data substantially 
augments novel few-shot classes and helps the model retain modality- and 
domain-specific structure when supervision is scarce. Since unlabeled data is 
far cheaper to acquire than labeled samples, it provides large practical gains 
(0.025~$\rightarrow$~0.058~$\rightarrow$~0.197). We therefore recommend using 
unlabeled data whenever possible.

\subsection{Role of $\varepsilon$}
The hyperparameter $\varepsilon$ can be selected using a small validation set 
based on the deployment scenario. Preliminary observations show that appropriate 
choices of $\varepsilon$ consistently improve performance. 

\subsection{Total Drop (\%)}

Absolute Dice scores at the final session may recover despite catastrophic 
collapses during intermediate sessions, particularly in real-world incremental 
settings like medical imaging (scanner/protocol drift), autonomous driving 
(weather/city changes), robotics (task-by-task updates), and continual deployment 
scenarios.  
\textbf{Total Drop (\%)} captures the full forgetting trajectory and provides a 
more reliable measure of stability over time.

\[
\text{Total Drop} = 
\left(
\frac{
\sum_i \max(0, S_i - S_{i+1})
}{
S_0
}
\right)
\times 100.
\]

where $S_0$ is the base session and $S_i$ is the incremental session $i$. Table~\ref{t_drop} shows that JASCL is the most reliable method, with the lowest Total Drop.


\subsection{Model Choice Under Budget Constraints}
When resources permit, pre-trained models combined with JASCL offer the best 
performance and robustness. Under moderate budgets, U-Net or transformer 
backbones are effective choices. Transformers show greater resilience to severe 
domain shifts, whereas U-Nets remain strong in standard settings.  
For lightweight backbones, JASCL with unlabeled data provides the best 
trade-off, since unlabeled data is easy to obtain and significantly improves 
performance. This is especially useful when models must be updated frequently 
or across many continual sessions.

\subsection{Extreme Data-Scarce Settings}
JASCL performs reliably even under extreme label scarcity. For Med JASCL-Disjoint, 
Session~1 performance with $K=4$ and $K=3$ labeled samples reaches 
0.4343 and 0.4141, respectively, remaining close to the $K=5$ performance of 
0.460 and consistently outperforming baselines. This makes JASCL highly 
practical in settings where annotation is severely limited.

\section{Additional results}
\label{add_res}
We report class-wise results for JASCL and some baselines. We show how class performance varies across sessions. We report results of JASCL without gradient adaptive stabilization in Table \ref{abl_gni} and without prototype anchored supervision in Table \ref{without_plr}. Table \ref{sam_d} and Table \ref{clip_d} highlight that the performance of pre-trained models deteriorates in the proposed multi-constraint benchmark. \textcolor{black}{Figure~\ref{fig:JASCL_full_grid} shows the performance of \textit{seen} and \textit{new} classes in different settings.}

\begin{figure*}[t]
\centering

\begin{subfigure}{0.32\textwidth}
    \centering
    \includegraphics[width=\linewidth]{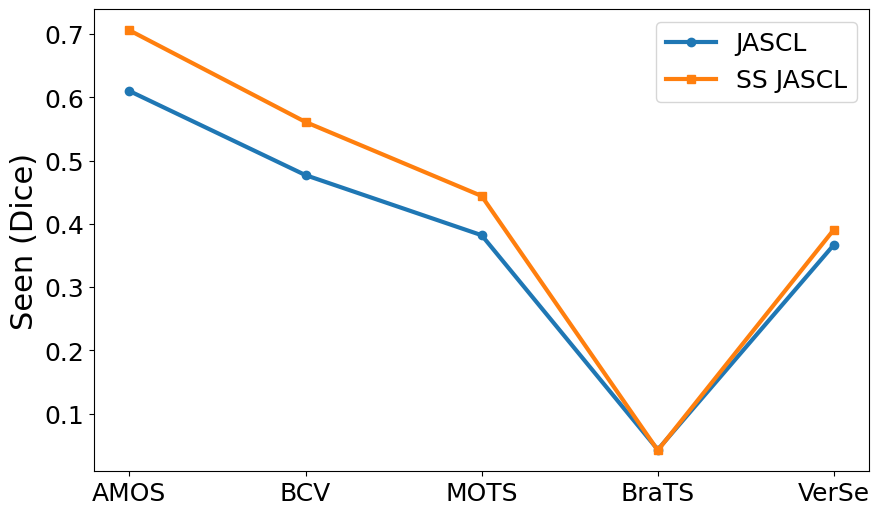}
    \caption{}
\end{subfigure}
\begin{subfigure}{0.32\textwidth}
    \centering
    \includegraphics[width=\linewidth]{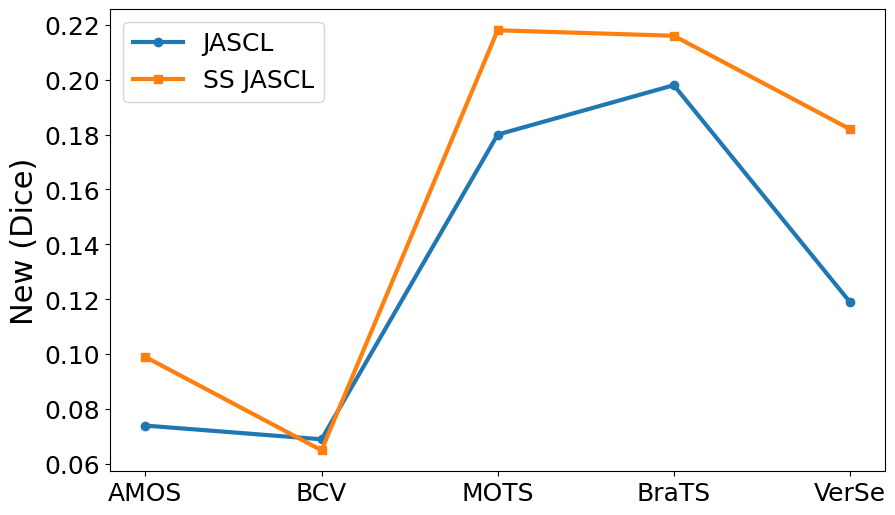}
    \caption{}
\end{subfigure}
\begin{subfigure}{0.32\textwidth}
    \centering
    \includegraphics[width=\linewidth]{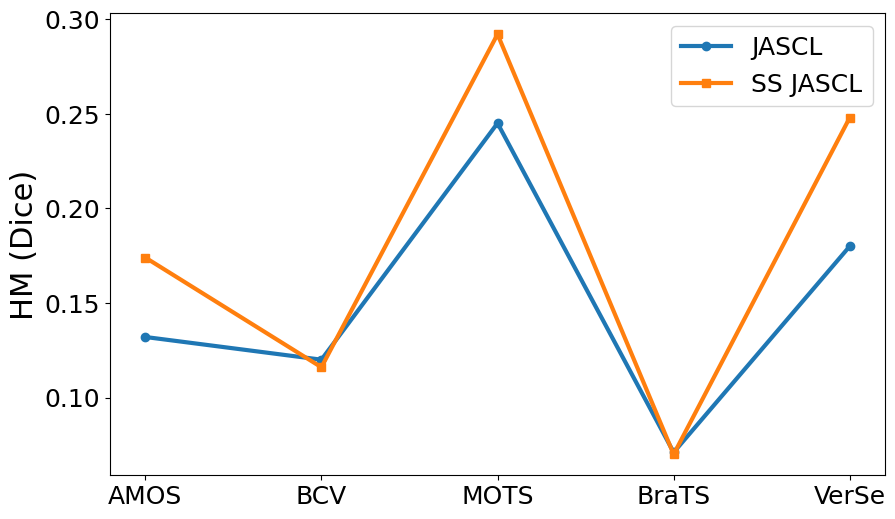}
    \caption{}
\end{subfigure}

\vspace{0.3cm}

\begin{subfigure}{0.32\textwidth}
    \centering
    \includegraphics[width=\linewidth]{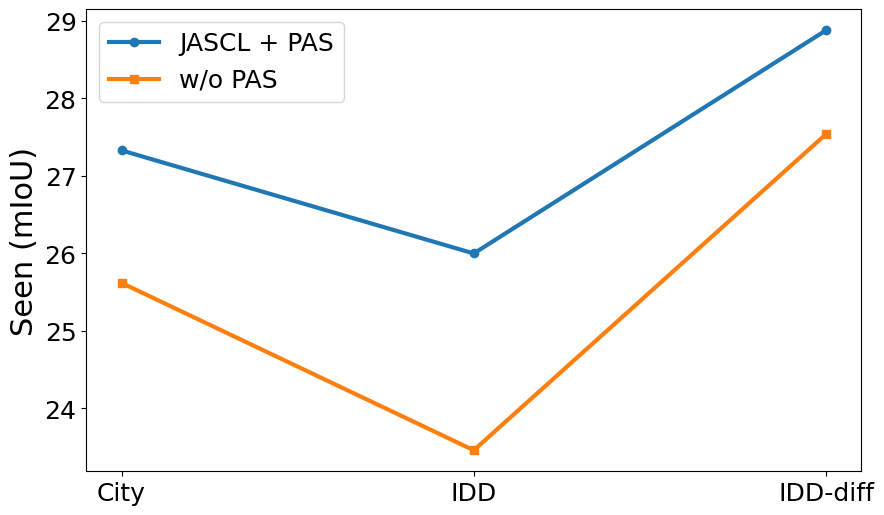}
    \caption{}
\end{subfigure}
\begin{subfigure}{0.32\textwidth}
    \centering
    \includegraphics[width=\linewidth]{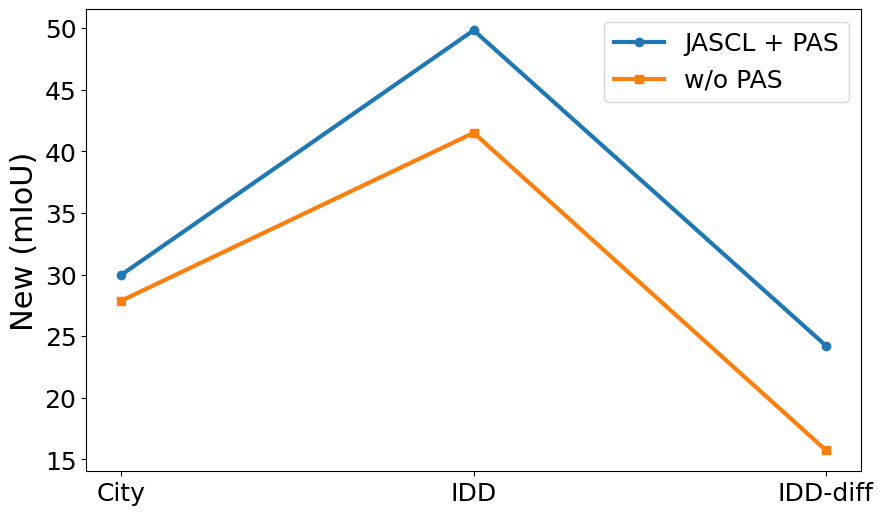}
   \caption{}
\end{subfigure}
\begin{subfigure}{0.32\textwidth}
    \centering
    \includegraphics[width=\linewidth]{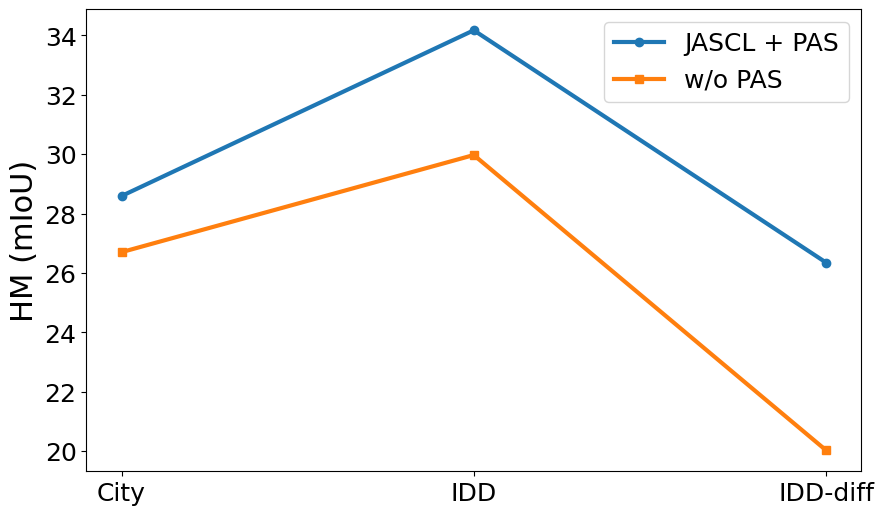}
    \caption{}
\end{subfigure}

\vspace{0.3cm}

\begin{subfigure}{0.32\textwidth}
    \centering
    \includegraphics[width=\linewidth]{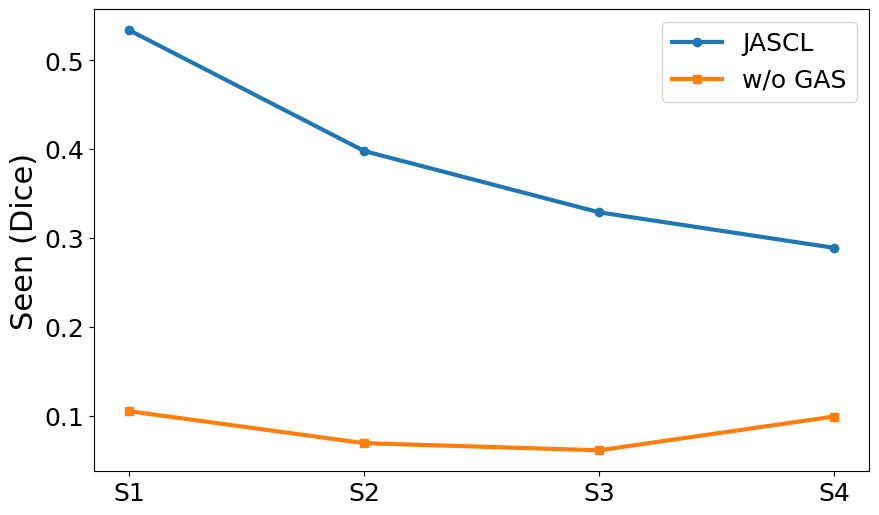}
   \caption{}
\end{subfigure}
\begin{subfigure}{0.32\textwidth}
    \centering
    \includegraphics[width=\linewidth]{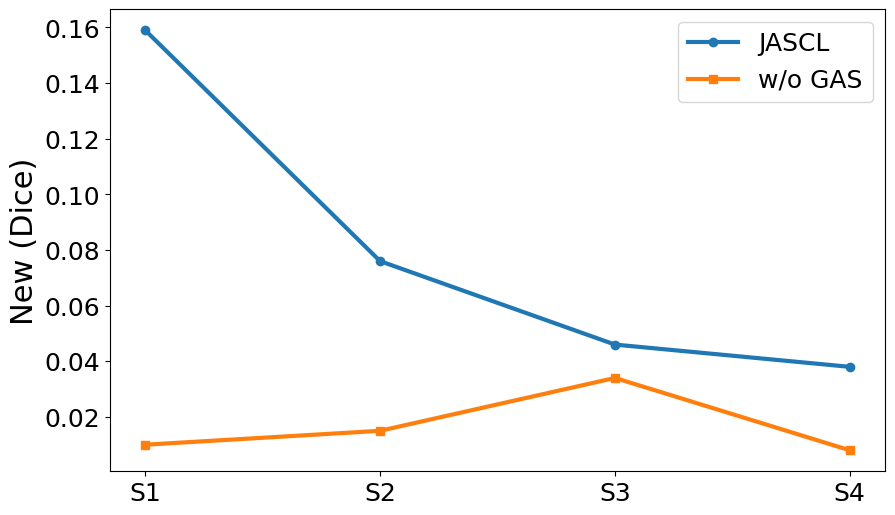}
    \caption{}
\end{subfigure}
\begin{subfigure}{0.32\textwidth}
    \centering
    \includegraphics[width=\linewidth]{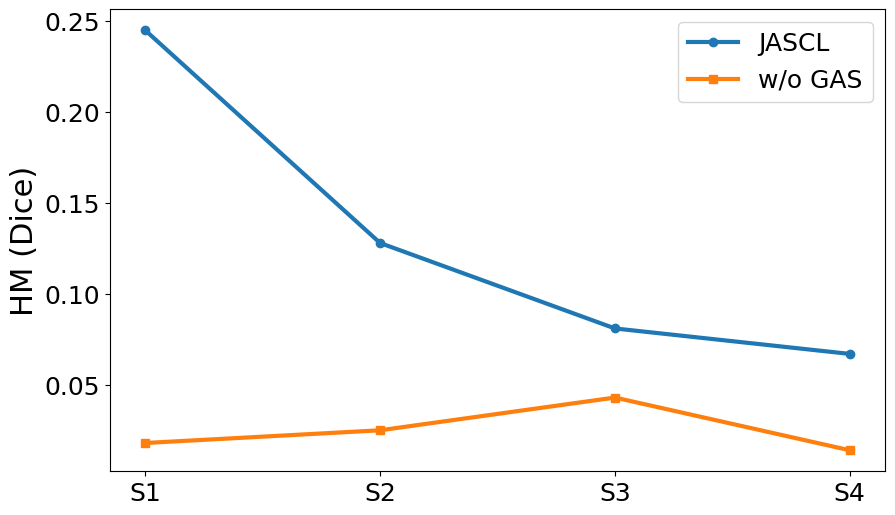}
   \caption{}
\end{subfigure}

\caption{\textcolor{black}{\textbf{(a-c)} Performance of JASCL on the Med JASCL-Disjoint benchmark and its variant, Med Semi-Supervised-JASCL (which includes additional unlabeled data), evaluated across incremental sessions (TS (Base) → AMOS → BCV → MOTS → BraTS → VerSe). SS refers to Semi-Supervised. SS JASCL denotes the performance of JASCL on the Med Semi-Supervised-JASCL benchmark. The reported results confirm that unlabeled data helps to boost the performance. \textbf{(d-f)} Performance of JASCL on the Semi-Supervised Natural-JASCL benchmark (which includes additional unlabeled data), evaluated across incremental sessions (BDD100K (Base) → Cityscapes → IDD → IDD-diff (with different classes from the previous session)). The results show that prototype anchored supervision (PAS) enhances the performance of JASCL compared to JASCL without prototype anchored supervision  (w/o PAS). \textbf{(g-i)} Performance of JASCL (MedFormer) on the Med JASCL-Mixed benchmark evaluated across incremental sessions (Session 0 (Base) → Session 1 → Session 2 → Session 3 → Session 4) The results show that gradient adaptive stabilization (GAS) enhances the performance of JASCL compared to JASCL without gradient adaptive stabilization (w/o GAS). Seen refers to the average performance on classes the model has encountered in previous sessions, while New refers to the average performance on classes introduced in the current session. HM refers to Harmonic Mean. 
}}
\label{fig:JASCL_full_grid}
\end{figure*}

\begin{table}[t]
\centering
\caption{Class-wise performance of JASCL on Med JASCL-Disjoint with U-Net.}
\begin{tabular}{lcccccc}
\toprule
Class & Session 0 & Session 1 & Session 2 & Session 3 & Session 4 & Session 5 \\
\midrule
class\_0  & 0.979 & 0.962 & 0.848 & 0.764 & 0.944 & 0.672 \\
class\_1  & 0.792 & 0.614 & 0.611 & 0.553 & 0.002 & 0.791 \\
class\_2  & 0.613 & 0.209 & 0.341 & 0.454 & 0.002 & 0.541 \\
class\_3  & 0.856 & 0.762 & 0.739 & 0.717 & 0.003 & 0.834 \\
class\_4  & 0.756 & 0.636 & 0.623 & 0.646 & 0.003 & 0.736 \\
class\_5  & 0.910 & 0.769 & 0.730 & 0.601 & 0.458 & 0.877 \\
class\_6  & 0.653 & 0.611 & 0.596 & 0.557 & 0.127 & 0.622 \\
class\_7  & 0.704 & 0.551 & 0.559 & 0.568 & 0.029 & 0.656 \\
class\_8  & 0.875 & 0.835 & 0.802 & 0.733 & 0.182 & 0.810 \\
class\_9  & 0.768 & 0.719 & 0.711 & 0.687 & 0.048 & 0.753 \\
class\_10 & 0.610 & 0.272 & 0.480 & 0.542 & 0.002 & 0.444 \\
class\_11 & 0.803 & 0.711 & 0.687 & 0.683 & 0.030 & 0.825 \\
class\_12 & 0.824 & 0.674 & 0.681 & 0.650 & 0.082 & 0.732 \\
class\_13 & 0.729 & 0.638 & 0.592 & 0.579 & 0.069 & 0.652 \\
class\_14 & 0.732 & 0.635 & 0.595 & 0.397 & 0.046 & 0.689 \\
class\_15 & 0.705 & 0.431 & 0.500 & 0.539 & 0.019 & 0.654 \\
class\_16 & --     & 0.129 & 0.142 & 0.134 & 0.040 & 0.122 \\
class\_17 & --     & 0.043 & 0.058 & 0.062 & 0.017 & 0.084 \\
class\_18 & --     & 0.050 & 0.068 & 0.071 & 0.021 & 0.072 \\
class\_19 & --     & 0.053 & 0.017 & 0.012 & 0.016 & 0.015 \\
class\_20 & --     & 0.096 & 0.073 & 0.070 & 0.017 & 0.062 \\
class\_21 & --     & --     & 0.032 & 0.432 & 0.031 & 0.165 \\
class\_22 & --     & --     & 0.287 & 0.125 & 0.024 & 0.111 \\
class\_23 & --     & --     & 0.073 & 0.093 & 0.003 & 0.018 \\
class\_24 & --     & --     & 0.037 & 0.031 & 0.001 & 0.042 \\
class\_25 & --     & --     & 0.004 & 0.008 & 0.015 & 0.010 \\
class\_26 & --     & --     & 0.000 & 0.003 & 0.002 & 0.001 \\
class\_27 & --     & --     & --     & 0.075 & 0.000 & 0.087 \\
class\_28 & --     & --     & --     & 0.031 & 0.000 & 0.032 \\
class\_29 & --     & --     & --     & 0.338 & 0.000 & 0.292 \\
class\_30 & --     & --     & --     & 0.277 & 0.014 & 0.043 \\
class\_31 & --     & --     & --     & --     & 0.109 & 0.060 \\
class\_32 & --     & --     & --     & --     & 0.185 & 0.008 \\
class\_33 & --     & --     & --     & --     & 0.299 & 0.158 \\
class\_34 & --     & --     & --     & --     & --     & 0.056 \\
class\_35 & --     & --     & --     & --     & --     & 0.133 \\
class\_36 & --     & --     & --     & --     & --     & 0.222 \\
class\_37 & --     & --     & --     & --     & --     & 0.066 \\
\midrule
\rowcolor{LightGray}
\textbf{Average} & 0.736 & 0.459 & 0.398 & 0.329 & 0.025 & 0.324 \\ 
\bottomrule
\end{tabular}%
\label{tab:class_Sessions}
\end{table}

\begin{table}[ht]
\centering
\caption{Class-wise performance of JASCL on Med JASCL-Mixed with MedFormer.}
\begin{tabular}{lccccc}
\toprule
Class & Session 0 & Session 1 & Session 2 & Session 3 & Session 4 \\
\midrule
class\_0  & 0.973 & 0.892 & 0.916 & 0.855 & 0.842 \\
class\_1  & 0.695 & 0.708 & 0.641 & 0.629 & 0.690 \\
class\_2  & 0.705 & 0.524 & 0.606 & 0.682 & 0.653 \\
class\_3  & 0.608 & 0.527 & 0.585 & 0.589 & 0.411 \\
class\_4  & 0.367 & 0.263 & 0.300 & 0.318 & 0.300 \\
class\_5  & 0.646 & 0.530 & 0.540 & 0.611 & 0.615 \\
class\_6  & 0.533 & 0.341 & 0.437 & 0.462 & 0.442 \\
class\_7  & 0.562 & 0.546 & 0.548 & 0.526 & 0.525 \\
class\_8  & 0.870 & 0.861 & 0.693 & 0.867 & 0.734 \\
class\_9  & 0.570 & 0.449 & 0.456 & 0.484 & 0.371 \\
class\_{10} & 0.709 & 0.592 & 0.483 & 0.514 & 0.512 \\
class\_{11} & --     & 0.208 & 0.225 & 0.222 & 0.102 \\
class\_{12} & --     & 0.466 & 0.436 & 0.248 & 0.404 \\
class\_{13} & --     & 0.025 & 0.196 & 0.205 & 0.200 \\
class\_{14} & --     & 0.002 & 0.042 & 0.050 & 0.046 \\
class\_{15} & --     & 0.002 & 0.067 & 0.072 & 0.076 \\
class\_{16} & --     & 0.520 & 0.294 & 0.232 & 0.224 \\
class\_{17} & --     & 0.023 & 0.097 & 0.101 & 0.099 \\
class\_{18} & --     & 0.029 & 0.087 & 0.073 & 0.076 \\
class\_{19} & --     & --     & 0.159 & 0.231 & 0.236 \\
class\_{20} & --     & --     & 0.084 & 0.164 & 0.085 \\
class\_{21} & --     & --     & 0.025 & 0.155 & 0.167 \\
class\_{22} & --     & --     & 0.045 & 0.073 & 0.069 \\
class\_{23} & --     & --     & 0.120 & 0.173 & 0.184 \\
class\_{24} & --     & --     & 0.018 & 0.120 & 0.115 \\
class\_{25} & --     & --     & --     & 0.099 & 0.079 \\
class\_{26} & --     & --     & --     & 0.067 & 0.184 \\
class\_{27} & --     & --     & --     & 0.013 & 0.036 \\
class\_{28} & --     & --     & --     & 0.008 & 0.071 \\
class\_{29} & --     & --     & --     & --     & 0.018 \\
class\_{30} & --     & --     & --     & --     & 0.101 \\
class\_{31} & --     & --     & --     & --     & 0.086 \\
class\_{32} & --     & --     & --     & --     & 0.025 \\
class\_{33} & --     & --     & --     & --     & 0.014 \\
class\_{34} & --     & --     & --     & --     & 0.019 \\
class\_{35} & --     & --     & --     & --     & 0.020 \\
\midrule
\rowcolor{LightGray}
\textbf{Average} & 0.626 & 0.367 & 0.299 & 0.285 & 0.228 \\ 
\bottomrule
\end{tabular}
\label{tab:class_Sessions_0to4}
\end{table}

\begin{table}[t]
\centering
\caption{\textcolor{black}{Performance of JASCL on 2D robotic surgery.}}
\begin{tabular}{lccc}
\toprule
Model & Session 0 & Session 1 & Session 2 \\
\midrule
UNet Vanilla        & 0.770 & 0.033 & 0.007 \\
Medformer Vanilla   & 0.762 & 0.011 & 0.002 \\
CAT (UNet)         & 0.770 & 0.166 & 0.001 \\
\midrule
\rowcolor{gray!25}
JASCL - Medformer & 0.762 & 0.169 & 0.156 \\
\rowcolor{gray!25}
JASCL - UNet      & 0.770 & 0.244 & 0.163 \\
\bottomrule
\end{tabular}
\label{per_2d_robot}
\end{table}

\begin{table}[h]
\centering
\caption{Class-wise performance of JASCL on Med JASCL-Mixed with SwinUNetr.}
\begin{tabular}{lccccc}
\toprule
Class & Session 0 & Session 1 & Session 2 & Session 3 & Session 4 \\
\midrule
class\_0  & 0.972 & 0.943 & 0.961 & 0.938 & 0.960 \\
class\_1  & 0.802 & 0.805 & 0.754 & 0.749 & 0.788 \\
class\_2  & 0.415 & 0.199 & 0.391 & 0.388 & 0.399 \\
class\_3  & 0.551 & 0.576 & 0.506 & 0.526 & 0.560 \\
class\_4  & 0.301 & 0.213 & 0.163 & 0.145 & 0.190 \\
class\_5  & 0.657 & 0.567 & 0.618 & 0.548 & 0.593 \\
class\_6  & 0.574 & 0.428 & 0.465 & 0.472 & 0.458 \\
class\_7  & 0.592 & 0.306 & 0.531 & 0.549 & 0.537 \\
class\_8  & 0.847 & 0.497 & 0.729 & 0.844 & 0.845 \\
class\_9  & 0.524 & 0.363 & 0.463 & 0.471 & 0.444 \\
class\_{10} & 0.694 & 0.089 & 0.352 & 0.377 & 0.222 \\
class\_{11} & -- & 0.339 & 0.124 & 0.159 & 0.192 \\
class\_{12} & -- & 0.754 & 0.709 & 0.734 & 0.749 \\
class\_{13} & -- & 0.044 & 0.048 & 0.032 & 0.028 \\
class\_{14} & -- & 0.000 & 0.028 & 0.025 & 0.021 \\
class\_{15} & -- & 0.000 & 0.025 & 0.030 & 0.016 \\
class\_{16} & -- & 0.538 & 0.114 & 0.073 & 0.065 \\
class\_{17} & -- & 0.011 & 0.058 & 0.045 & 0.048 \\
class\_{18} & -- & 0.009 & 0.014 & 0.006 & 0.003 \\
class\_{19} & -- & -- & 0.181 & 0.027 & 0.016 \\
class\_{20} & -- & -- & 0.034 & 0.065 & 0.063 \\
class\_{21} & -- & -- & 0.082 & 0.094 & 0.088 \\
class\_{22} & -- & -- & 0.016 & 0.001 & 0.000 \\
class\_{23} & -- & -- & 0.086 & 0.011 & 0.011 \\
class\_{24} & -- & -- & 0.112 & 0.128 & 0.116 \\
class\_{25} & -- & -- & -- & 0.337 & 0.017 \\
class\_{26} & -- & -- & -- & 0.269 & 0.003 \\
class\_{27} & -- & -- & -- & 0.000 & 0.004 \\
class\_{28} & -- & -- & -- & 0.013 & 0.055 \\
class\_{29} & -- & -- & -- & -- & 0.050 \\
class\_{30} & -- & -- & -- & -- & 0.311 \\
class\_{31} & -- & -- & -- & -- & 0.000 \\
class\_{32} & -- & -- & -- & -- & 0.000 \\
class\_{33} & -- & -- & -- & -- & 0.169 \\
class\_{34} & -- & -- & -- & -- & 0.074 \\
\midrule
\rowcolor{LightGray}
\textbf{Average} & 0.596 & 0.319 & 0.275 & 0.254 & 0.211  \\
\bottomrule
\end{tabular}
\end{table}

\begin{table}[h]
\centering
\caption{Per-class performance of JASCL across incremental Sessions on Med Semi-Supervised-JASCL benchmark.}
\begin{tabular}{lcccccc}
\toprule
Class & Session 0 & Session 1 & Session 2 & Session 3 & Session 4 & Session 5 \\
\midrule
class\_0  & 0.985 & 0.969 & 0.842 & 0.763 & 0.936 & 0.709 \\
class\_1  & 0.881 & 0.659 & 0.628 & 0.709 & 0.019 & 0.779 \\
class\_2  & 0.575 & 0.250 & 0.478 & 0.540 & 0.001 & 0.575 \\
class\_3  & 0.852 & 0.883 & 0.886 & 0.874 & 0.025 & 0.896 \\
class\_4  & 0.770 & 0.720 & 0.711 & 0.712 & 0.022 & 0.735 \\
class\_5  & 0.786 & 0.849 & 0.856 & 0.830 & 0.133 & 0.862 \\
class\_6  & 0.733 & 0.715 & 0.713 & 0.715 & 0.081 & 0.695 \\
class\_7  & 0.696 & 0.760 & 0.713 & 0.731 & 0.014 & 0.793 \\
class\_8  & 0.665 & 0.710 & 0.762 & 0.782 & 0.151 & 0.771 \\
class\_9  & 0.799 & 0.828 & 0.797 & 0.812 & 0.118 & 0.791 \\
class\_{10} & 0.280 & 0.572 & 0.517 & 0.549 & 0.002 & 0.447 \\
class\_{11} & 0.864 & 0.809 & 0.827 & 0.834 & 0.235 & 0.835 \\
class\_{12} & 0.697 & 0.760 & 0.748 & 0.729 & 0.119 & 0.828 \\
class\_{13} & 0.776 & 0.641 & 0.653 & 0.695 & 0.054 & 0.757 \\
class\_{14} & 0.761 & 0.720 & 0.749 & 0.739 & 0.086 & 0.797 \\
class\_{15} & 0.769 & 0.712 & 0.671 & 0.511 & 0.026 & 0.804 \\
class\_{16} &  --      & 0.178 & 0.172 & 0.171 & 0.051 & 0.165 \\
class\_{17} &   --     & 0.060 & 0.087 & 0.112 & 0.008 & 0.105 \\
class\_{18} &  --      & 0.109 & 0.089 & 0.077 & 0.004 & 0.106 \\
class\_{19} &  --      & 0.074 & 0.048 & 0.068 & 0.000 & 0.042 \\
class\_{20} &  --      & 0.074 & 0.112 & 0.104 & 0.007 & 0.092 \\
class\_{21} &  --      &     --   & 0.049 & 0.072 & 0.004 & 0.056 \\
class\_{22} &  --      &    --    & 0.165 & 0.079 & 0.026 & 0.138 \\
class\_{23} &    --    &   --     & 0.091 & 0.060 & 0.001 & 0.084 \\
class\_{24} &   --     &  --      & 0.039 & 0.002 & 0.000 & 0.010 \\
class\_{25} &    --    &  --      & 0.032 & 0.019 & 0.011 & 0.023 \\
class\_{26} &  --      &    --    & 0.014 & 0.023 & 0.000 & 0.054 \\
class\_{27} &  --      &  --      &  --      & 0.182 & 0.001 & 0.190 \\
class\_{28} &    --    &  --      &  --      & 0.026 & 0.000 & 0.050 \\
class\_{29} &   --     &    --    &   --     & 0.366 & 0.001 & 0.168 \\
class\_{30} &  --      &   --     &  --      & 0.298 & 0.052 & 0.099 \\
class\_{31} &  --      &  --      &  --      & --       & 0.252 & 0.106 \\
class\_{32} &  --      &  --      &  --      &  --      & 0.171 & 0.106 \\
class\_{33} &   --     &  --      &  --      &   --     & 0.227 & 0.044 \\
class\_{34} &   --     &  --      &  --      &  --      &   --     & 0.160 \\
class\_{35} &   --     &  --      &  --      &   --     &   --     & 0.105 \\
class\_{36} &   --     &   --     &   --     &   --     &   --     & 0.201 \\
class\_{37} &  --      &   --     &  --      &   --     &   --     & 0.286 \\
\midrule
\rowcolor{LightGray}
\textbf{Average} & 0.736 & 0.554  & 0.449 & 0.414 & 0.058 & 0.374 \\
\bottomrule
\end{tabular}
\label{tab:class_performance}
\end{table}

\begin{table}[ht]
\centering
\caption{Performance of SAM and JASCL on Natural-JASCL benchmark.}
\begin{tabular}{lccccc}
\toprule
Class & Base & SAM Session 1 & JASCL Session 1 & SAM Session 2 & JASCL Session 2 \\
\midrule
1  & 95.29 & 86.54 & 88.42 & 88.18 & 88.72 \\
2  & 64.01 & 35.13 & 34.07 & 40.03 & 39.89 \\
3  & 92.77 & 89.24 & 89.12 & 82.23 & 84.21 \\
4  & 46.02 & 21.70 & 24.45 & 35.56 & 39.80 \\
5  & 57.90 & 16.48 & 17.70 & 42.10 & 40.99 \\
6  & 48.19 & 16.31 & 18.14 & 17.24 & 15.33 \\
7  & 53.94 & 33.65 & 37.09 & 39.76 & 37.75 \\
8  & 60.06 & 28.26 & 39.18 & 56.20 & 57.56 \\
9  & 90.56 & 87.93 & 87.03 & 84.44 & 80.50 \\
10 & 51.58 & 45.65 & 38.13 & 35.77 & 37.38 \\
11 &  --     & 16.96 & 16.23 & 2.17  & 2.13 \\
12 & --      & 1.67  & 1.23  & 0.03  & 0.02 \\
13 & --      & 3.01  & 3.48  & 0.19  & 0.21 \\
14 & --      & 1.98  & 0.53  & 0.00  & 0.01 \\
15 &  --     & 4.42  & 3.72  & 1.01  & 0.98 \\
16 &   --    &  --     & --      & 0.65  & 1.13 \\
17 &  --     &   --    &  --     & 28.94 & 35.42 \\
18 &  --     &   --    &   --    & 0.00  & 0.01 \\
\midrule
\rowcolor{LightGray}
\textbf{Average} & 66.03 & 32.59 & 33.23 & 30.80 & 31.22 \\

\bottomrule
\end{tabular}
\label{sam_d}
\end{table}

\begin{table}[ht]
\centering
\caption{Per-class performance of JASCL on Semi-Supervised Natural-JASCL benchmark.}
\begin{tabular}{lcccc}
\toprule
Class & Session 0 & Session 1 & Session 2 & Session 3 \\
\midrule
road - 0            & 91.18 & 82.36 & 81.35 & 81.48 \\
sidewalk - 1        & 48.82 & 31.20 & 24.84 & 22.94 \\
building - 2        & 86.42 & 71.03 & 72.53 & 67.41 \\
wall - 3            & 20.84 & 1.95  & 1.39  & 1.14  \\
fence - 4           & 25.28 & 2.81  & 1.41  & 2.49  \\
traffic light - 5   & 38.34 & 10.82 & 0.53  & 1.23  \\
traffic sign - 6    & 31.79 & 4.12  & 0.18  & 0.06  \\
person - 7          & 28.33 & 0.59  & 0.81  & 0.08  \\
car - 8             & 81.19 & 56.30 & 42.89 & 41.92 \\
truck - 9           & 25.40 & 12.12 & 7.94  & 8.96  \\
parking - 10        &  --     & 5.20  & 7.69  & 6.23  \\
vegetation - 11     &  --   & 54.76 & 53.41 & 48.49 \\
rail track - 12     &   --    & --      & 12.03 & 13.23 \\
sky - 13            &   --    &  --     & 79.62 & 73.67 \\
motorcycle - 14     &   --    &   --    &  --     & 17.53 \\
autorickshaw - 15   &   --    &    --   &   --    & 20.59 \\
bridge/tunnel - 16  &   --    &  --     &   --    & 24.58 \\
\midrule
\rowcolor{LightGray}
\textbf{Average} & 47.75 & 27.8 & 27.68 & 25.47 \\
\bottomrule
\end{tabular}
\label{tab:per_class_performance_Sessions}
\end{table}

\begin{table}[ht]
\centering
\caption{JASCL performance on Med JASCL-Mixed benchmark without gradient adaptive stabilization. The results highlight the importance of the proposed mechanism.}
\begin{tabular}{lccccc}
\toprule
Class & Session 0 & Session 1 & Session 2 & Session 3 & Session 4 \\
\midrule
class\_0 & 0.971 & 0.746 & 0.810 & 0.792 & 0.804 \\
class\_1 & 0.727 & 0.155 & 0.180 & 0.181 & 0.273 \\
class\_2 & 0.738 & 0.103 & 0.120 & 0.119 & 0.194 \\
class\_3 & 0.613 & 0.077 & 0.091 & 0.094 & 0.159 \\
class\_4 & 0.373 & 0.038 & 0.047 & 0.050 & 0.094 \\
class\_5 & 0.665 & 0.099 & 0.133 & 0.137 & 0.244 \\
class\_6 & 0.573 & 0.116 & 0.147 & 0.158 & 0.250 \\
class\_7 & 0.580 & 0.128 & 0.143 & 0.137 & 0.177 \\
class\_8 & 0.859 & 0.133 & 0.162 & 0.170 & 0.289 \\
class\_9 & 0.566 & 0.082 & 0.103 & 0.105 & 0.164 \\
class\_10 & 0.701 & 0.122 & 0.149 & 0.147 & 0.204 \\
class\_11 & -- & 0.023 & 0.029 & 0.031 & 0.053 \\
class\_12 & -- & 0.016 & 0.025 & 0.026 & 0.092 \\
class\_13 & -- & 0.008 & 0.012 & 0.014 & 0.025 \\
class\_14 & -- & 0.013 & 0.018 & 0.019 & 0.026 \\
class\_15 & -- & 0.013 & 0.014 & 0.014 & 0.016 \\
class\_16 & -- & 0.096 & 0.082 & 0.080 & 0.091 \\
class\_17 & -- & 0.035 & 0.033 & 0.033 & 0.037 \\
class\_18 & -- & 0.007 & 0.006 & 0.006 & 0.008 \\
class\_19 & -- & -- & 0.037 & 0.037 & 0.041 \\
class\_20 & -- & -- & 0.009 & 0.010 & 0.015 \\
class\_21 & -- & -- & 0.020 & 0.022 & 0.037 \\
class\_22 & -- & -- & 0.004 & 0.004 & 0.006 \\
class\_23 & -- & -- & 0.026 & 0.027 & 0.030 \\
class\_24 & -- & -- & 0.027 & 0.026 & 0.042 \\
class\_25 & -- & -- & -- & 0.018 & 0.025 \\
class\_26 & -- & -- & -- & 0.049 & 0.055 \\
class\_27 & -- & -- & -- & 0.048 & 0.043 \\
class\_28 & -- & -- & -- & 0.009 & 0.014 \\
class\_29 & -- & -- & -- & -- & 0.006 \\
class\_30 & -- & -- & -- & -- & 0.022 \\
class\_31 & -- & -- & -- & -- & 0.008 \\
class\_32 & -- & -- & -- & -- & 0.016 \\
class\_33 & -- & -- & -- & -- & 0.049 \\
class\_34 & -- & -- & -- & -- & 0.009 \\
class\_35 & -- & -- & -- & -- & 0.016 \\
\midrule
\rowcolor{LightGray}
\textbf{Average} & 0.623 & 0.070 & 0.067 & 0.063 & 0.080 \\
\bottomrule
\label{abl_gni}
\end{tabular}
\end{table}

\begin{table}[ht]
\centering
\caption{Class-wise performance of MedFormer Vanilla on Med JASCL-Mixed benchmark.}
\begin{tabular}{lccccc}
\toprule
Class & Session 0 & Session 1 & Session 2 & Session 3 & Session 4 \\
\midrule
class\_0 & 0.969 & 0.955 & 0.965 & 0.957 & 0.956 \\
class\_1 & 0.639 & 0.000 & 0.000 & 0.000 & 0.000 \\
class\_2 & 0.660 & 0.000 & 0.117 & 0.000 & 0.000 \\
class\_3 & 0.704 & 0.000 & 0.053 & 0.000 & 0.000 \\
class\_4 & 0.371 & 0.000 & 0.000 & 0.000 & 0.000 \\
class\_5 & 0.596 & 0.000 & 0.000 & 0.000 & 0.000 \\
class\_6 & 0.589 & 0.000 & 0.000 & 0.000 & 0.000 \\
class\_7 & 0.565 & 0.000 & 0.232 & 0.000 & 0.000 \\
class\_8 & 0.835 & 0.000 & 0.000 & 0.000 & 0.000 \\
class\_9 & 0.454 & 0.000 & 0.000 & 0.000 & 0.000 \\
class\_10 & 0.717 & 0.000 & 0.000 & 0.000 & 0.000 \\
class\_11 & -- & 0.404 & 0.000 & 0.000 & 0.000 \\
class\_12 & -- & 0.698 & 0.000 & 0.000 & 0.000 \\
class\_13 & -- & 0.468 & 0.000 & 0.000 & 0.000 \\
class\_14 & -- & 0.262 & 0.000 & 0.000 & 0.000 \\
class\_15 & -- & 0.236 & 0.000 & 0.000 & 0.000 \\
class\_16 & -- & 0.870 & 0.000 & 0.000 & 0.000 \\
class\_17 & -- & 0.532 & 0.000 & 0.000 & 0.000 \\
class\_18 & -- & 0.096 & 0.000 & 0.000 & 0.000 \\
class\_19 & -- & -- & 0.530 & 0.000 & 0.000 \\
class\_20 & -- & -- & 0.580 & 0.000 & 0.000 \\
class\_21 & -- & -- & 0.587 & 0.000 & 0.000 \\
class\_22 & -- & -- & 0.395 & 0.000 & 0.000 \\
class\_23 & -- & -- & 0.473 & 0.000 & 0.000 \\
class\_24 & -- & -- & 0.247 & 0.000 & 0.000 \\
class\_25 & -- & -- & -- & 0.720 & 0.000 \\
class\_26 & -- & -- & -- & 0.700 & 0.000 \\
class\_27 & -- & -- & -- & 0.030 & 0.000 \\
class\_28 & -- & -- & -- & 0.000 & 0.000 \\
class\_29 & -- & -- & -- & -- & 0.277 \\
class\_30 & -- & -- & -- & -- & 0.401 \\
class\_31 & -- & -- & -- & -- & 0.586 \\
class\_32 & -- & -- & -- & -- & 0.247 \\
class\_33 & -- & -- & -- & -- & 0.279 \\
class\_34 & -- & -- & -- & -- & 0.376 \\
class\_35 & -- & -- & -- & -- & 0.153 \\
\midrule
\rowcolor{LightGray}
\textbf{Average} & 0.636 & 0.280 & 0.167 & 0.086 & 0.090 \\
\bottomrule
\end{tabular}
\end{table}

\begin{table}[ht]
\centering
\caption{JASCL performance on Semi-Supervised Natural-JASCL benchmark without Prototype Anchored Supervision.}
\begin{tabular}{lcccc}
\toprule
Class & Session 0 & Session 1 & Session 2 & Session 3 \\
\midrule
road - 0 & 91.18 & 81.79 & 77.50 & 78.68 \\
sidewalk - 1 & 48.82 & 27.03 & 26.18 & 24.36 \\
building - 2 & 86.42 & 70.01 & 66.08 & 62.76 \\
wall - 3 & 20.84 & 3.13 & 0.75 & 0.78 \\
fence - 4 & 25.28 & 4.06 & 0.81 & 0.88 \\
traffic light - 5 & 38.34 & 6.35 & 1.31 & 4.36 \\
traffic sign - 6 & 31.79 & 2.64 & 0.45 & 0.45 \\
person - 7 & 28.33 & 0.63 & 0.02 & 0.03 \\
car - 8 & 81.19 & 49.48 & 43.93 & 42.89 \\
truck - 9 & 25.40 & 11.19 & 6.89 & 9.09 \\
parking - 10 & -- & 4.16 & 4.29 & 8.90 \\
vegetation - 11 & -- & 51.60 & 53.31 & 60.64 \\
rail track - 12 & -- & -- & 9.24 & 12.86 \\
sky - 13 & -- & -- & 73.77 & 78.93 \\
motorcycle - 14 & -- & -- & -- & 14.33 \\
autorickshaw - 15 & -- & -- & -- & 17.82 \\
bridge/tunnel - 16 & -- & -- & -- & 15.08 \\
\midrule
\rowcolor{LightGray}
\textbf{Average} & 47.75 & 26.00 & 26.04 & 25.12 \\
\bottomrule
\label{without_plr}
\end{tabular}
\end{table}

\begin{table}[ht]
\centering
\caption{Performance of GAPS on Natural-JASCL benchmark.}
\begin{tabular}{lccc}
\toprule
Class & Session 0 & Session 1 & Session 2 \\
\midrule
road - 0 & 91.18 & 81.01 & 66.32 \\
sidewalk - 1 & 48.82 & 29.91 & 5.24 \\
building - 2 & 86.42 & 77.24 & 62.31 \\
wall - 3 & 20.84 & 0.00 & 0.00 \\
fence - 4 & 25.28 & 3.93 & 0.04 \\
traffic light - 5 & 38.34 & 6.01 & 6.64 \\
traffic sign - 6 & 31.79 & 2.64 & 1.95 \\
person - 7 & 28.33 & 6.48 & 10.33 \\
car - 8 & 81.19 & 68.94 & 45.45 \\
truck - 9 & 25.40 & 1.97 & 0.01 \\
Bridge/tunnel - 10 & -- & 7.48 & 8.55 \\
Parking - 11 & -- & 22.11 & 20.01 \\
rail track - 12 & -- & 14.03 & 6.20 \\
Autorickshaw - 13 & -- & 16.80 & 8.02 \\
pole - 14 & -- & 12.67 & 9.69 \\
Bus - 15 (from BDD) & -- & -- & 0.02 \\
Sky - 16 (from BDD) & -- & -- & 48.76 \\
bicycle - 17 (from BDD) & -- & -- & 0.81 \\
person - 7 (repeat from IDD) - 18 & -- & -- & -- \\
car - 8 (repeat from IDD) - 19 & -- & -- & -- \\
\midrule
\rowcolor{LightGray}
\textbf{Average} & 47.76 & 23.42 & 16.69 \\
\bottomrule
\end{tabular}
\end{table}

\begin{table}[ht]
\centering
\caption{Performance of C-FSCIL on Med JASCL-Disjoint benchmark.}
\begin{tabular}{lccc}
\toprule
Class & Session 0 & Session 1 & Session 2 \\
\midrule
class\_0 & 0.982 & 0.715 & 0.606 \\
class\_1 & 0.779 & 0.438 & 0.552 \\
class\_2 & 0.735 & 0.300 & 0.444 \\
class\_3 & 0.877 & 0.725 & 0.563 \\
class\_4 & 0.794 & 0.669 & 0.615 \\
class\_5 & 0.802 & 0.810 & 0.821 \\
class\_6 & 0.742 & 0.438 & 0.471 \\
class\_7 & 0.866 & 0.519 & 0.439 \\
class\_8 & 0.844 & 0.364 & 0.350 \\
class\_9 & 0.869 & 0.476 & 0.521 \\
class\_10 & 0.639 & 0.271 & 0.275 \\
class\_11 & 0.854 & 0.492 & 0.300 \\
class\_12 & 0.780 & 0.095 & 0.145 \\
class\_13 & 0.780 & 0.288 & 0.326 \\
class\_14 & 0.752 & 0.103 & 0.282 \\
class\_15 & 0.688 & 0.139 & 0.247 \\
class\_16 & -- & 0.112 & 0.053 \\
class\_17 & -- & 0.067 & 0.068 \\
class\_18 & -- & 0.188 & 0.231 \\
class\_19 & -- & 0.088 & 0.067 \\
class\_20 & -- & 0.092 & 0.079 \\
class\_21 & -- & -- & 0.410 \\
class\_22 & -- & -- & 0.286 \\
class\_23 & -- & -- & 0.054 \\
class\_24 & -- & -- & 0.078 \\
class\_25 & -- & -- & 0.010 \\
class\_26 & -- & -- & 0.037 \\
\midrule
\rowcolor{LightGray}
\textbf{Average} & 0.787 & 0.334 & 0.253 \\
\bottomrule
\end{tabular}
\end{table}

\begin{table}[ht]
\centering
\caption{Performance of GAPS on Semi-Supervised Natural-JASCL benchmark.}
\begin{tabular}{lcccc}
\toprule
Class & Session 0 & Session 1 & Session 2 & Session 3 \\
\midrule
road - 0 & 91.18 & 79.10 & 69.08 & 67.17 \\
sidewalk - 1 & 48.82 & 21.52 & 3.48 & 1.20 \\
building - 2 & 86.42 & 60.16 & 46.16 & 42.60 \\
wall - 3 & 20.84 & 1.31 & 0.49 & 0.33 \\
fence - 4 & 25.28 & 0.00 & 0.00 & 0.09 \\
traffic light - 5 & 38.34 & 0.00 & 0.01 & 0.00 \\
traffic sign - 6 & 31.79 & 0.00 & 0.00 & 0.00 \\
person - 7 & 28.33 & 0.00 & 0.00 & 0.00 \\
car - 8 & 81.19 & 25.79 & 20.09 & 8.85 \\
truck - 9 & 25.40 & 8.55 & 5.09 & 0.55 \\
parking - 10 & -- & 2.18 & 1.57 & 3.34 \\
vegetation - 11 & -- & 38.12 & 31.43 & 25.91 \\
rail track - 12 & -- & -- & 5.04 & 5.68 \\
sky - 13 & -- & -- & 80.24 & 69.65 \\
motorcycle - 14 & -- & -- & -- & 9.45 \\
autorickshaw - 15 & -- & -- & -- & 9.87 \\
bridge/tunnel - 16 & -- & -- & -- & 1.04 \\
\midrule
\rowcolor{LightGray}
\textbf{Average} & 47.76 & 19.73 & 18.76 & 14.45 \\
\bottomrule
\end{tabular}
\end{table}

\begin{table}[ht]
\centering
\caption{Performance of NNCSL on Med Semi-Supervised-JASCL benchmark.}
\begin{tabular}{lcccccc}
\toprule
Class & Session 0 & Session 1 & Session 2 & Session 3 & Session 4 & Session 5 \\
\midrule
class\_0 & 0.985 & 0.814 & 0.925 & 0.957 & 0.959 & 0.962 \\
class\_1 & 0.831 & 0.068 & 0.024 & 0.015 & 0.011 & 0.009 \\
class\_2 & 0.575 & 0.039 & 0.029 & 0.011 & 0.009 & 0.008 \\
class\_3 & 0.802 & 0.038 & 0.022 & 0.011 & 0.010 & 0.006 \\
class\_4 & 0.720 & 0.017 & 0.008 & 0.005 & 0.001 & 0.002 \\
class\_5 & 0.786 & 0.034 & 0.012 & 0.017 & 0.014 & 0.010 \\
class\_6 & 0.733 & 0.050 & 0.029 & 0.018 & 0.019 & 0.015 \\
class\_7 & 0.695 & 0.005 & 0.008 & 0.005 & 0.004 & 0.005 \\
class\_8 & 0.665 & 0.030 & 0.022 & 0.009 & 0.008 & 0.004 \\
class\_9 & 0.799 & 0.023 & 0.008 & 0.005 & 0.003 & 0.008 \\
class\_10 & 0.230 & 0.042 & 0.019 & 0.016 & 0.013 & 0.011 \\
class\_11 & 0.814 & 0.016 & 0.013 & 0.004 & 0.004 & 0.002 \\
class\_12 & 0.697 & 0.017 & 0.025 & 0.023 & 0.018 & 0.017 \\
class\_13 & 0.726 & 0.013 & 0.007 & 0.008 & 0.003 & 0.003 \\
class\_14 & 0.711 & 0.013 & 0.015 & 0.014 & 0.011 & 0.009 \\
class\_15 & 0.719 & 0.020 & 0.018 & 0.015 & 0.012 & 0.010 \\
class\_16 & -- & 0.121 & 0.012 & 0.007 & 0.012 & 0.017 \\
class\_17 & -- & 0.071 & 0.022 & 0.012 & 0.016 & 0.024 \\
class\_18 & -- & 0.093 & 0.015 & 0.011 & 0.012 & 0.013 \\
class\_19 & -- & 0.189 & 0.012 & 0.008 & 0.007 & 0.005 \\
class\_20 & -- & 0.044 & 0.023 & 0.014 & 0.015 & 0.026 \\
class\_21 & -- & -- & 0.023 & 0.003 & 0.002 & 0.002 \\
class\_22 & -- & -- & 0.518 & 0.032 & 0.018 & 0.029 \\
class\_23 & -- & -- & 0.276 & 0.021 & 0.019 & 0.018 \\
class\_24 & -- & -- & 0.013 & 0.005 & 0.003 & 0.003 \\
class\_25 & -- & -- & 0.019 & 0.008 & 0.008 & 0.012 \\
class\_26 & -- & -- & 0.049 & 0.009 & 0.004 & 0.007 \\
class\_27 & -- & -- & -- & 0.166 & 0.011 & 0.005 \\
class\_28 & -- & -- & -- & 0.052 & 0.006 & 0.005 \\
class\_29 & -- & -- & -- & 0.276 & 0.016 & 0.014 \\
class\_30 & -- & -- & -- & 0.106 & 0.006 & 0.007 \\
class\_31 & -- & -- & -- & -- & 0.023 & 0.016 \\
class\_32 & -- & -- & -- & -- & 0.048 & 0.013 \\
class\_33 & -- & -- & -- & -- & 0.002 & 0.003 \\
class\_34 & -- & -- & -- & -- & -- & 0.216 \\
class\_35 & -- & -- & -- & -- & -- & 0.413 \\
class\_36 & -- & -- & -- & -- & -- & 0.237 \\
class\_37 & -- & -- & -- & -- & -- & 0.289 \\
\midrule
\rowcolor{LightGray}
\textbf{Average} & 0.700 & 0.047 & 0.047 & 0.030 & 0.011 & 0.040 \\
\bottomrule
\end{tabular}
\end{table}

\begin{table}[ht]
\centering
\caption{Performance of HALO on Semi-Supervised Natural-JASCL.}
\begin{tabular}{lcccc}
\toprule
Class & Session 0 & Session 1 & Session 2 & Session 3 \\
\midrule
road - 0 & 91.18 & 0.00 & 0.00 & 0.00 \\
sidewalk - 1 & 48.82 & 0.00 & 0.00 & 0.00 \\
building - 2 & 86.42 & 0.00 & 0.00 & 0.00 \\
wall - 3 & 20.84 & 0.00 & 0.00 & 0.00 \\
fence - 4 & 25.28 & 0.00 & 0.00 & 0.00 \\
traffic light - 5 & 38.34 & 0.00 & 0.00 & 0.00 \\
traffic sign - 6 & 31.79 & 0.00 & 0.00 & 0.00 \\
person - 7 & 28.33 & 0.00 & 0.00 & 0.00 \\
car - 8 & 81.19 & 0.00 & 0.00 & 0.00 \\
truck - 9 & 25.40 & 0.00 & 0.00 & 0.00 \\
parking - 10 & -- & 0.55 & 0.00 & 0.00 \\
vegetation - 11 & -- & 20.87 & 0.00 & 0.00 \\
rail track - 12 & -- & -- & 0.90 & 0.00 \\
sky - 13 & -- & -- & 27.32 & 0.00 \\
motorcycle - 14 & -- & -- & -- & 1.15 \\
autorickshaw - 15 & -- & -- & -- & 17.82 \\
bridge/tunnel - 16 & -- & -- & -- & 2.60 \\
\midrule
\rowcolor{LightGray}
\textbf{Average} & 47.76 & 1.78 & 2.02 & 1.27 \\
\bottomrule
\end{tabular}
\end{table}

\begin{table}[ht]
\centering
\caption{Performance of MDIL (domain incremental) on Med JASCL-Disjoint benchmark.}
\begin{tabular}{lccc}
\toprule
Class & Session 0 & Session 1 & Session 2 \\
\midrule
class\_0 & 0.982 & 0.972 & 0.967 \\
class\_1 & 0.851 & 0.000 & 0.000 \\
class\_2 & 0.790 & 0.000 & 0.000 \\
class\_3 & 0.864 & 0.000 & 0.003 \\
class\_4 & 0.785 & 0.000 & 0.000 \\
class\_5 & 0.767 & 0.000 & 0.001 \\
class\_6 & 0.733 & 0.000 & 0.000 \\
class\_7 & 0.798 & 0.000 & 0.000 \\
class\_8 & 0.787 & 0.000 & 0.002 \\
class\_9 & 0.791 & 0.000 & 0.000 \\
class\_10 & 0.475 & 0.000 & 0.001 \\
class\_11 & 0.817 & 0.000 & 0.001 \\
class\_12 & 0.825 & 0.000 & 0.000 \\
class\_13 & 0.846 & 0.000 & 0.000 \\
class\_14 & 0.784 & 0.000 & 0.001 \\
class\_15 & 0.774 & 0.000 & 0.000 \\
class\_16 & -- & 0.374 & 0.000 \\
class\_17 & -- & 0.256 & 0.000 \\
class\_18 & -- & 0.599 & 0.000 \\
class\_19 & -- & 0.541 & 0.000 \\
class\_20 & -- & 0.522 & 0.001 \\
class\_21 & -- & -- & 0.288 \\
class\_22 & -- & -- & 0.618 \\
class\_23 & -- & -- & 0.514 \\
class\_24 & -- & -- & 0.451 \\
class\_25 & -- & -- & 0.366 \\
class\_26 & -- & -- & 0.298 \\
\midrule
\rowcolor{LightGray}
\textbf{Average} & 0.779 & 0.115 & 0.098 \\
\bottomrule
\end{tabular}
\end{table}
\begin{table}[ht]
\centering
\caption{Performance of Gen-Replay (generative replay with diffusion) on Med JASCL-Disjoint benchmark.}
\begin{tabular}{lccc}
\toprule
Class & Session 0 & Session 1 & Session 2 \\
\midrule
class\_0 & 0.985 & 0.966 & 0.960 \\
class\_1 & 0.831 & 0.008 & 0.018 \\
class\_2 & 0.575 & 0.013 & 0.002 \\
class\_3 & 0.802 & 0.002 & 0.003 \\
class\_4 & 0.720 & 0.002 & 0.009 \\
class\_5 & 0.786 & 0.006 & 0.023 \\
class\_6 & 0.733 & 0.001 & 0.000 \\
class\_7 & 0.696 & 0.000 & 0.000 \\
class\_8 & 0.665 & 0.001 & 0.000 \\
class\_9 & 0.799 & 0.017 & 0.001 \\
class\_10 & 0.230 & 0.003 & 0.000 \\
class\_11 & 0.814 & 0.000 & 0.000 \\
class\_12 & 0.697 & 0.001 & 0.000 \\
class\_13 & 0.726 & 0.000 & 0.000 \\
class\_14 & 0.711 & 0.003 & 0.000 \\
class\_15 & 0.719 & 0.002 & 0.001 \\
class\_16 & -- & 0.192 & 0.000 \\
class\_17 & -- & 0.070 & 0.000 \\
class\_18 & -- & 0.546 & 0.002 \\
class\_19 & -- & 0.334 & 0.024 \\
class\_20 & -- & 0.329 & 0.001 \\
class\_21 & -- & -- & 0.507 \\
class\_22 & -- & -- & 0.637 \\
class\_23 & -- & -- & 0.411 \\
class\_24 & -- & -- & 0.415 \\
class\_25 & -- & -- & 0.166 \\
class\_26 & -- & -- & 0.430 \\
\midrule
\rowcolor{LightGray}
\textbf{Average} & 0.700 & 0.076 & 0.102 \\
\bottomrule
\end{tabular}
\end{table}

\begin{table}[ht]
\centering
\caption{Performance of CLIP-driven method on Med JASCL-Mixed benchmark. The model is pre-trained on a large number of classes in the benchmark.}
\begin{tabular}{lccccc}
\toprule
Class & Session 0 & Session 1 & Session 2 & Session 3 & Session 4 \\
\midrule
class\_1 & 0.881 & 0.825 & 0.702 & 0.702 & 0.036 \\
class\_2 & 0.844 & 0.460 & 0.368 & 0.368 & 0.000 \\
class\_3 & 0.844 & 0.829 & 0.218 & 0.218 & 0.000 \\
class\_4 & 0.699 & 0.598 & 0.174 & 0.174 & 0.000 \\
class\_5 & 0.628 & 0.085 & 0.001 & 0.001 & 0.000 \\
class\_6 & 0.273 & 0.009 & 0.001 & 0.001 & 0.000 \\
class\_7 & 0.757 & 0.696 & 0.139 & 0.139 & 0.000 \\
class\_8 & 0.850 & 0.013 & 0.000 & 0.000 & 0.000 \\
class\_9 & 0.652 & 0.233 & 0.002 & 0.002 & 0.000 \\
class\_10 & 0.740 & 0.726 & 0.000 & 0.000 & 0.000 \\
class\_11 & -- & 0.808 & 0.048 & 0.048 & 0.000 \\
class\_12 & -- & 0.834 & 0.000 & 0.000 & 0.014 \\
class\_13 & -- & 0.002 & 0.002 & 0.002 & 0.026 \\
class\_14 & -- & 0.013 & 0.000 & 0.000 & 0.000 \\
class\_15 & -- & 0.012 & 0.000 & 0.000 & 0.000 \\
class\_16 & -- & 0.946 & 0.941 & 0.941 & 0.241 \\
class\_17 & -- & 0.106 & 0.000 & 0.000 & 0.030 \\
class\_18 & -- & 0.314 & 0.261 & 0.261 & 0.006 \\
class\_19 & -- & -- & 0.800 & 0.802 & 0.177 \\
class\_20 & -- & -- & 0.840 & 0.842 & 0.000 \\
class\_21 & -- & -- & 0.855 & 0.855 & 0.180 \\
class\_22 & -- & -- & 0.000 & 0.000 & 0.000 \\
class\_23 & -- & -- & 0.000 & 0.000 & 0.000 \\
class\_24 & -- & -- & 0.098 & 0.098 & 0.000 \\
class\_25 & -- & -- & -- & 0.020 & 0.137 \\
class\_26 & -- & -- & -- & 0.000 & 0.000 \\
class\_27 & -- & -- & -- & 0.010 & 0.000 \\
class\_28 & -- & -- & -- & 0.009 & 0.001 \\
class\_29 & -- & -- & -- & -- & 0.025 \\
class\_30 & -- & -- & -- & -- & 0.148 \\
class\_31 & -- & -- & -- & -- & 0.050 \\
class\_32 & -- & -- & -- & -- & 0.430 \\
class\_33 & -- & -- & -- & -- & 0.417 \\
class\_34 & -- & -- & -- & -- & 0.729 \\
class\_35 & -- & -- & -- & -- & 0.471 \\
\midrule
\rowcolor{LightGray}
\textbf{Average} & 0.716 & 0.417 & 0.227 & 0.196 & 0.089 \\
\bottomrule
\end{tabular}
\label{clip_d}
\end{table}
\end{document}